\definecolor{pearThree}{HTML}{E74C3C}
\definecolor{pearcomp}{HTML}{B97E29}
\definecolor{pearDark}{HTML}{2980B9}
\definecolor{pearDarker}{HTML}{1D2DEC}
\newtheorem{theorem}{Theorem}
\newtheorem{lemma}[theorem]{Lemma}
\newtheorem{corollary}[theorem]{Corollary}
\newtheorem{definition}[theorem]{Definition}
\newtheorem{assumption}[theorem]{Assumption}
\theoremstyle{definition}
\newtheorem{remark}{Remark}
\renewcommand{\P}{\mathbb{P}}
\newcommand{\E}{\mathcal{E}}
\newcommand{\D}{\mathcal{D}}
\newcommand{\Q}{\mathcal{Q}}
\definecolor{vert1}{RGB}{133,146,66} 
\definecolor{vert3}{RGB}{91,140,90} 
\definecolor{vert2}{RGB}{157,193,7} 
\definecolor{vert4}{RGB}{20,200,20} 
\newcommand{\SL}{\mathcal{G}_L}
\newcommand{\SLepsilon}{\mathcal{G}_{L+\epsilon}}
\newcommand{\SLepsilonk}{\mathcal{G}_{L+\epsilon_k}}
\newcommand{\SOL}{\mathcal{G}_{O(L)}}
\newcommand{\SLarr}{\mathcal{S}_L^{\rightarrow}}
\newcommand{\areset}{a_{\text{\textup{reset}}}}
\newcommand{\UCRLVTR}{{\small\textsc{UCRL-VTR}}\xspace}
\newcommand{\UCRLVTRp}{{\small\textsc{UCRL-VTR$^+$}}\xspace}
\newcommand\myeqii{\mathrel{\stackrel{\makebox[0pt]{\mbox{\normalfont\tiny (ii)}}}{\,=\,}}}
\newcommand\myineqi{\mathrel{\stackrel{\makebox[0pt]{\mbox{\normalfont\tiny (i)}}}{\,\leq\,}}}
\newcommand\myineqii{\mathrel{\stackrel{\makebox[0pt]{\mbox{\normalfont\tiny (ii)}}}{\,\leq\,}}}
\newcommand\myineqiii{\mathrel{\stackrel{\makebox[0pt]{\mbox{\normalfont\tiny (iii)}}}{\,\leq\,}}}
\DeclareMathOperator*{\argmax}{arg\,max}
\DeclareMathOperator*{\argmin}{arg\,min}
\DeclareMathOperator*{\clip}{\mathrm{clip}}
\newcommand{\Xk}{\mathcal{X}_k}
\newcommand{\cG}{\mathcal{G}}
\newcommand{\cS}{\mathcal{S}}
\newcommand{\cA}{\mathcal{A}}
\newcommand{\cU}{\mathcal{U}}
\newcommand\footnoteref[1]{\protected@xdef\@thefnmark{\ref{#1}}\@footnotemark}
\newcommand*\widefbox[1]{\fbox{\hspace{2em}#1\hspace{2em}}}
\newcommand{\cC}{\mathcal{C}}
\newcommand{\cE}{\mathcal{E}}
\newcommand{\cJ}{\mathcal{J}}
\newcommand{\cM}{\mathcal{M}}
\newcommand{\cQ}{\mathcal{Q}}
\newcommand{\cX}{\mathcal{X}}
\newcommand{\X}{\cX}
\newcommand{\bcM}{\overline{\cM}}
\newcommand{\bV}{\overline{V}}
\newcommand{\bp}{\overline{p}}
\newcommand{\bq}{\overline{q}}
\newcommand{\bpi}{\overline{\pi}}
\newcommand{\bpistar}{\overline{\pi}^\star}
\newcommand{\bVstar}{\overline{V}^\star}
\newcommand{\ovU}{\ov{U}}
\newcommand{\ovV}{\ov{V}}
\newcommand{\tpi}{\widetilde{\pi}}
\newcommand{\tq}{\widetilde{q}}
\newcommand{\wt}[1]{\widetilde{#1}}
\newcommand{\wh}[1]{\widehat{#1}}
\newcommand{\ov}[1]{\overline{#1}}
\DeclarePairedDelimiter\abs{\lvert}{\rvert}%
\definecolor{LightCyan}{rgb}{0.7,1,1}
\definecolor{LightRed}{rgb}{1,0.7,0.7}
\let\originalleft\left
\let\originalright\right
\renewcommand{\left}{\mathopen{}\mathclose\bgroup\originalleft}
\renewcommand{\right}{\aftergroup\egroup\originalright}
\newtcolorbox{redbox}{colback=red!5!white,colframe=red!75!black}
\newtcolorbox{bluebox}{colback=blue!5!white,colframe=blue!75!black}
\newtcolorbox{yellowbox}{colback=yellow!5!white,colframe=yellow!75!black}
\definecolor{darkgreen}{rgb}{0,0.5,0}
\definecolor{darkred}{rgb}{0.7,0,0}
\definecolor{teal}{rgb}{0.3,0.8,0.8}
\newcommand{\red}[1]{\noindent{\textcolor{red}{#1}}}
\newcommand{\tab}[1]{\noindent{\textcolor{blue}{#1}}}
\newcommand{\lm}[1]{\noindent{\textcolor{purple}{#1}}}
\newlength{\myheight}
\tikzset{labels/.style={font=\sffamily\scriptsize},
    circuit/.style={draw,minimum width=2cm,minimum height=\myheight,very thick,inner sep=1mm,outer sep=0pt,cap=round,font=\sffamily\bfseries}
}
\def\@fnsymbol#1{\ensuremath{\ifcase#1\or \dagger\or \ddagger\or
  \mathsection\or \mathparagraph\or \|\or \diamond \or **\or \dagger\dagger
  \or \ddagger\ddagger \else\@ctrerr\fi}}
\newcommand{\printfnsymbol}[1]{%
  \textsuperscript{\@fnsymbol{#1}}%
}
\definecolor{HighlightColor}{gray}{0.97}
\definecolor{colorm}{rgb}{0.2, 0.2, 0.6}
\definecolor{colorn}{rgb}{0.77, 0.12, 0.23}
\newcommand{\PreserveBackslash}[1]{\let\temp=\\#1\let\\=\temp}
\newcolumntype{C}[1]{>{\PreserveBackslash\centering}p{#1}}
\newcommand{\hp}{\widehat{p}}
\newcommand{\betastar}{\beta^\star}
\newcommand{\Vstar}{V^\star}
\newcommand{\utQ}{\widetilde{\ov{Q}}}
\newcommand{\ltQ}{\underaccent{\tilde}{\ov{Q}}}
\newcommand{\utV}{\widetilde{\ov{V}}}
\newcommand{\ltV}{\underaccent{\tilde}{\ov{V}}}
\newcommand{\Var}{\mathrm{Var}}
\newcommand{\rQ}{\mathring{\ov{Q}}}
\newcommand{\rV}{\mathring{\ov{V}}}
\newcommand{\AdaGoalnormal}{{\textsc{AdaGoal}}\xspace}
\newcommand{\AdaGoal}{{\small\textsc{AdaGoal}}\xspace}
\newcommand{\UniGoal}{{\small\textsc{UniGoal}}\xspace}
\newcommand{\RareGoal}{{\small\textsc{RareGoal}}\xspace}
\newcommand{\uni}{{\small\textsc{uni}}\xspace}
\newcommand{\rare}{{\small\textsc{rare}}\xspace}
\newcommand{\AXstar}{{\small\textsc{AX$^{\star}$}}\xspace}
\newcommand{\MGE}{{\small\textsc{MGE}}\xspace}
\newcommand{\ALGO}{{\small\textsc{AdaGoal-UCBVI}}\xspace}
\newcommand{\TAE}{{\small\textsc{TAE}}\xspace}
\newcommand{\RFE}{{\small\textsc{RFE}}\xspace}
\newcommand{\UniGoalUCBVI}{{\small\textsc{UniGoal-UCBVI}}\xspace}
\newcommand{\RareGoalUCBVI}{{\small\textsc{RareGoal-UCBVI}}\xspace}
\newcommand{\VDS}{{\small\textsc{VDS}}\xspace}
\newcommand{\HER}{{\small\textsc{HER}}\xspace}
\newcommand{\ALGOscs}{{\scriptsize\textsc{AdaGoal-UCBVI}}\xspace}
\newcommand{\ALGOfns}{{\footnotesize\textsc{AdaGoal-UCBVI}}\xspace}
\newcommand{\UniGoalUCBVIfns}{{\footnotesize\textsc{UniGoal-UCBVI}}\xspace}
\newcommand{\RareGoalUCBVIfns}{{\footnotesize\textsc{RareGoal-UCBVI}}\xspace}
\newcommand{\AdaGoalfns}{{\footnotesize\textsc{AdaGoal}}\xspace}
\newcommand{\RareGoalfns}{{\footnotesize\textsc{RareGoal}}\xspace}
\newcommand{\ALGOLM}{{\small\textsc{AdaGoal-UCRL$\cdot$VTR}}\xspace}
\newcommand{\ALGOLMnormal}{{\textsc{AdaGoal-UCRL$\cdot$VTR}}\xspace}
\newcommand{\ALGOnormal}{{\textsc{AdaGoal-UCBVI}}\xspace}
\newcommand{\BPIUCBVI}{{\small\textsc{BPI-UCBVI}}\xspace}
\newcommand{\UcbExplore}{{\small\textsc{UcbExplore}}\xspace}
\newcommand{\DisCo}{{\small\textsc{DisCo}}\xspace}
\newcommand{\GOSPRL}{{\small\textsc{GOSPRL}}\xspace}
\newcommand{\N}{\mathbb{N}}
\newcommand{\cnt}{\mathrm{cnt}}
\DeclareMathOperator{\KL}{KL}
\newcommand{\CommaBin}{\mathbin{\raisebox{0.5ex}{,}}}
\newcommand{\bn}{\bar{n}}
\renewcommand{\bp}{\bar{p}}
\renewcommand{\epsilon}{\varepsilon}
\newcommand{\btheta}{\bm{\theta}}
\newcommand{\bphi}{\bm{\phi}}
\newcommand{\bpsi}{\bm{\psi}}
\newcommand{\bSigma}{\bm{\Sigma}}
\newcommand{\bbb}{\bm{b}}
\newcommand{\zero}{\bm{0}}
\newcommand{\vvalue}{V}
\newcommand{\qvalue}{Q}
\newcommand{\la}{\langle}
\newcommand{\ra}{\rangle}
\newcommand{\Ib}{\bm{I}}
\newcommand{\pnorm}{B}
\begin{document}

%

%
\runningauthor{Tarbouriech, Domingues, Ménard, Pirotta, Valko, Lazaric}

\twocolumn[

\aistatstitle{Adaptive Multi-Goal Exploration}

\aistatsauthor{ Jean Tarbouriech \And Omar Darwiche Domingues \And  Pierre Ménard}

\aistatsaddress{ Meta AI  \& Inria Scool \And  Inria Scool \And OvGU Magdeburg }

\vspace{-0.1in}

\aistatsauthor{ Matteo Pirotta \And Michal Valko \And  Alessandro Lazaric }

\aistatsaddress{ Meta AI \And DeepMind \And Meta AI }

\vspace{0.1in}
]

\begin{abstract} We introduce a generic strategy for provably efficient multi-goal exploration. It relies on \AdaGoal, a novel goal selection scheme that leverages a measure of uncertainty in reaching states to adaptively target goals that are neither too difficult nor too easy. We show how \AdaGoal can be used to tackle the objective of learning an $\epsilon$-optimal goal-conditioned policy for the (initially unknown) set of goal states that are reachable within $L$ steps in expectation from a reference state $s_0$ in a reward-free Markov decision process. In the tabular case with $S$ states and $A$ actions, our algorithm requires $\widetilde{O}(L^3 S A \epsilon^{-2})$ exploration steps, which is nearly minimax optimal. We also readily instantiate \AdaGoal in linear mixture Markov decision processes, yielding the first goal-oriented PAC guarantee with linear function approximation. Beyond its strong theoretical guarantees, we anchor \AdaGoal in goal-conditioned deep reinforcement learning, both conceptually and empirically, by connecting its idea of selecting ``uncertain'' goals to maximizing value ensemble disagreement. 
\end{abstract}

\doparttoc 
\faketableofcontents 

\renewcommand{\red}[1]{\noindent{\textcolor{black}{#1}}}

\section{INTRODUCTION} \label{sect_intro}

\newcommand{\GS}{\textbf{(GS)}\xspace}
\newcommand{\PE}{\textbf{(PE)}\xspace}

When the extrinsic reward signal is absent or non-informative, a reinforcement learning (RL) agent needs to explore the environment driven by objectives other than reward maximization \citep[see e.g.,][]{chentanez2005intrinsically,oudeyer2009intrinsic, singh2010intrinsically}. This general setting is often called \textit{unsupervised} RL, self-supervised RL or intrinsically motivated RL. A noteworthy instance of it is unsupervised \textit{goal-conditioned} RL (GC-RL, \citealp[see e.g.,][for a survey]{colas2020intrinsically}). In this framework, the agent must learn a goal-conditioned policy, which learns a distribution over actions conditioned not only on the current state but also on a goal state that it must reach as quickly as possible (in expectation). For the goal-conditioned policy to be able to reach a variety of goals in the unknown environment, the agent must autonomously set its own goals (via e.g., a curriculum) and learn to effectively reach them. 

\paragraph{Deep GC-RL.} Recently, GC-RL has been extensively studied in the context of deep RL~\citep[e.g.,][]{schaul2015universal, andrychowicz2017hindsight, florensa2018automatic, warde2018unsupervised, nair2018visual, colas2019curious, zhao2019maximum, hartikainen2019dynamical, ecoffet2020first, pong2020skew,zhang2020automatic,pitis2020maximum}. GC-RL has notably been shown to be a powerful framework to tackle navigation problems \citep{florensa2018automatic}, game playing \citep[][on Montezuma's Revenge]{ecoffet2020first} or real-world robotic manipulation tasks \citep{pong2020skew}. At a high-level, these deep GC-RL methods follow the same algorithmic structure that alternates between:
\begin{enumerate}[leftmargin=.5in,topsep=-1pt,itemsep=2pt,partopsep=0pt, parsep=0pt]
    \item[\GS] \textit{Goal Selection:} select a candidate goal state;
    \item[\PE] \textit{Policy Execution:} deploy an explorative policy conditioned on this goal for a fixed number of steps.
\end{enumerate}
Given this simple and unifying algorithmic structure, the core differences between the methods lie in the goal sampling distribution --- e.g., uniform \citep{kaelbling1993learning, schaul2015universal, andrychowicz2017hindsight}, proportional to rarity \citep{pong2020skew}, or targeting goals of increasing complexity \citep{florensa2018automatic, colas2019curious,zhang2020automatic} --- and in ways to take advantage of each policy execution as much as possible to speed up the learning --- e.g., goal relabeling \citep{andrychowicz2017hindsight} or encoding goal states in lower-dimensional representations \citep{pong2020skew}. The specific choice of \GS and \PE steps directly influence the learning speed as well as the quality of the goal-conditioned policy returned by the algorithm.

While these GC-RL approaches effectively leverage deep RL techniques and are able to achieve impressive results in complex domains, they often lack substantial theoretical understanding and guarantees, even when restricted to the tabular setting.

\paragraph{Theory of GC-RL.} On the other hand, GC-RL has been rather sparsely analyzed through a theoretical lens. Notably, \citet{lim2012autonomous, tarbouriech2020improved} study the \textit{incremental autonomous exploration} setting, where the agent is restricted to identifying and learning to efficiently reach only the goal states that are \textit{incrementally} attainable from a reference initial state. Meanwhile, \citet{tarbouriech2021provably} study the problem of \textit{goal-free cost-free exploration} (i.e., finding a near-optimal goal-reaching policy for any starting state, goal state and cost function) under the assumption that the Markov decision process (MDP) is \textit{communicating} (i.e., any state is reachable from any other state).

While the existing theoretical GC-RL approaches come with rigorous guarantees on the learning performance, their algorithmic designs are quite involved, far from the interpretable alternation of \GS and \PE steps (see Appendix \ref{app_examples} for details, where we identify three specific shortcomings). This makes it hard to extract relevant practical insights and to make them amenable to a deep RL implementation.

\paragraph{Objective.} 
In light of these conceptual and algorithmic discrepancies between the two branches of theoretical and deep GC-RL, we seek~to: 
\begin{center}
    \textit{Design a strategy for unsupervised goal-conditioned RL with both \\ \textbf{1)}~a simple and interpretable algorithmic structure whose conceptual idea can be adapted to deep RL, 
    and \textbf{2)}~strong learning guarantees under suitable assumptions on the environment (e.g., tabular, linear).}   
\end{center}

\paragraph{Contributions.} Our main contributions can be summarized as follows:
\begin{itemize}[leftmargin=.15in,topsep=-1.2pt,itemsep=1.4pt,partopsep=0pt, parsep=0pt]
        \item We formalize the \textit{multi-goal exploration} (\MGE) objective of minimizing the number of exploration steps (i.e., the sample complexity) required to learn a near-optimal goal-conditioned policy for all the goal states that are reachable within a given number of steps in expectation from the initial state. We formally motivate the need of an available reset action to the initial state so as to solve the objective in a reasonable number of exploration steps, and we derive a lower bound on the sample complexity.
        \item We introduce \AdaGoal, a novel \textit{goal selection scheme} for unsupervised goal-conditioned RL. It relies on a simple optimization problem that adaptively targets goal states of intermediate difficulty. It also provides an algorithmic \textit{stopping rule} and a \textit{set of candidate goal states} that the agent is confident it can reliably reach. We identify some generic conditions that make \AdaGoal theoretically sound. 
        \item We design \ALGO, an instantiation of \AdaGoal in tabular MDPs, and prove that it achieves nearly minimax optimal sample complexity, improving over existing bounds that only hold in special cases.
        \item Leveraging a similar algorithmic and proof structure, we readily design \ALGOLM for linear-mixture MDPs. This yields the first goal-oriented PAC\footnote{Recall that the probably approximately correct (PAC) learning setting provides sample complexity guarantees to find a near-optimal policy at the fixed initial state.} guarantee with linear function approximation.  
        \item We anchor \AdaGoal in deep GC-RL, both conceptually and empirically, by connecting its idea of selecting ``uncertain'' goals to a practical approximation of maximizing value ensemble disagreement \citep{zhang2020automatic}.
\end{itemize}

\subsection{Additional Related Work} 

\paragraph{Theory of unsupervised exploration.}  A recent line of work theoretically analyzes unsupervised exploration in RL (in other settings than GC-RL). Some approaches \citep[e.g.,][]{hazan2019provably, tarbouriech2019active, cheung2019regret, tarbouriech2020active} learn to induce a desired state(-action) distribution (e.g., maximally entropic), yet they do not analyze how this can be used to solve downstream tasks. In finite-horizon MDPs, \citet{jin2020reward} propose a paradigm composed of: an \textit{exploration phase}, where the agent interacts with the environment without the supervision of reward signals; followed by a \textit{planning phase}, where the agent is 
required to compute a near-optimal policy for some given reward function. 
If the reward function can be chosen arbitrarily (including adversarially), the problem is called \textit{reward-free exploration} (\RFE) \citep{jin2020reward, kaufmann2021adaptive, menard2021fast,zanette2020provably, wang2020reward, chen2021near,zhang2021reward,zhang2020nearly}. If there is only a finite number of rewards that are fixed a priori yet unknown during exploration, 
it is called \textit{task-agnostic exploration} (\TAE) \citep{zhang2020task,wu2020accommodating, wu2021gap}. Our \MGE problem bears some resemblance to \TAE in the sense that we also consider a finite number of unknown tasks to solve (specifically, finding a near-optimal goal-conditioned policy for the unknown set of reliably reachable goal states). However, \TAE (as well as \RFE) is limited to the \textit{finite-horizon} setting and does not extend to goal-reaching tasks, which belong to the category of stochastic shortest path problems.

\paragraph{Single-goal exploration (a.k.a.\,SSP).}  A special case of multi-goal exploration is when there is a single goal state that is extrinsically fixed throughout learning (i.e., there is no \GS). This is known as the stochastic shortest path (SSP) problem \citep{bertsekas1995dynamic}. A recent line of work has studied online learning in SSP in the context of regret minimization \citep{tarbouriech2019no,rosenberg2020near,cohen2021minimax,tarbouriech2021stochastic,jafarnia2021online,chen2021implicit,vial2021regret,min2021learning}. However, as argued by \citet{tarbouriech2021sample}, a standard regret-to-PAC conversion does not hold in general in SSP, which implies that these techniques cannot be directly applied to our case. See Section~\ref{section_analysis} for further discussion.

\section{MULTI-GOAL EXPLORATION}  \label{section2_MGE}

The agent interacts with an environment modeled by a Markov decision process (MDP) $\mathcal{M}$ that has no extrinsic reward (i.e., it is \textit{reward-free}), a finite state space $\mathcal{S}$ with $S \triangleq \abs{\mathcal{S}}$ states and a finite action space $\mathcal{A}$ with $A \triangleq \abs{\mathcal{A}}$ actions. Let $s_0 \in \mathcal{S}$ be a designated initial state. We denote by $p(s' \vert s,a)$ the probability transition from state~$s$ to state~$s'$ by taking action~$a$. 

A deterministic stationary policy $\pi: \mathcal{S} \rightarrow \mathcal{A}$ is a mapping between states to actions and we denote by~$\Pi$ the set of all possible policies. 
We measure the performance of a policy in navigating the MDP and define the shortest-path distance as follows.

\begin{definition} \label{def_VSSP}
For any policy $\pi \in \Pi$ and a pair of states $(s, s') \in \mathcal{S}^2$, let $V^{\pi}(s \rightarrow s') \in [0, + \infty]$ be the expected number of steps it takes to reach $s'$ starting from $s$ when executing policy $\pi$, i.e.,\footnote{Note that $V^{\pi}(s \rightarrow s')$ corresponds to the value function (a.k.a.\,expected cost-to-go) of policy $\pi$ in a stochastic shortest-path setting (SSP,~\citealp{bertsekas1995dynamic}) with initial state $s$, goal state $s'$ and unit cost function. If the policy $\pi$ reaches $s'$ from $s$ with probability 1, it is said to be \textit{proper}, otherwise it holds that $V^{\pi}(s \rightarrow s') = + \infty$.} 
\begin{align*}
    V^{\pi}(s \rightarrow s') &\triangleq \mathbb{E}\Big[ \omega_{\pi}(s \rightarrow s') \Big], \\
    \omega_{\pi}(s \rightarrow s') &\triangleq \inf \Big\{ i \geq 0: s_{i+1} = s' \,\big\vert\, s_1 = s, \pi \Big\},
\end{align*}
where the expectation is w.r.t.\,the random sequence of states generated by executing $\pi$ in $\mathcal{M}$ starting from state~$s$. For any state $g \in \cS$, let $V^{\star}(s_0 \rightarrow g) \in [0, + \infty]$ be the shortest-path distance from $s_0$ to $g$, i.e., $$V^{\star}(s_0 \rightarrow g) \triangleq \min_{\pi \in \Pi} V^{\pi}(s_0 \rightarrow g).$$ 
Let $D_0 \in [0, + \infty]$, $D \in [0, + \infty]$ be the MDP's (possibly infinite) $s_0$-diameter and diameter, respectively, i.e., 
\begin{align*}
    D_0 \triangleq \max_{g \in \cS} V^{\star}(s_0 \rightarrow g), \qquad D \triangleq \max_{s, g} V^{\star}(s \rightarrow g).
\end{align*}
\end{definition}

We denote by $\cG \subseteq \cS$ the \textit{goal space}, which corresponds to the set of goal states that the agent may condition its goal-conditioned policy on (in the absence of prior knowledge on the goal space we simply set $\cG = \cS$). In environments with arbitrary dynamics, there may be some goal states in $\cG$ that are \textit{too} difficult for the agent to reliably reach in a reasonable number of exploration steps, or even completely unreachable from $s_0$. Consequently, we consider the high-level objective of \textit{learning an accurate goal-conditioned policy for all the goal states that are reliably reachable} from $s_0$. 

\begin{definition}[Reliably $L$-reachable goal states $\SL$] 
For any threshold $L \geq 1$, we define a goal state~$g \in \cG$ to be reliably $L$-reachable if \mbox{$V^{\star}(s_0 \rightarrow g) \leq L$}, and we denote by $\SL$ the set of such goal states, i.e.,
    \begin{align*}
        \SL \triangleq \Big\{ g \in \cG : V^{\star}(s_0 \rightarrow g) \leq L \Big\}.
    \end{align*}
\end{definition}

We thus seek to learn a goal-conditioned policy that is accurate in reaching the goals in $\SL$. A challenge in solving this objective comes from the fact that the set of goals of interest $\SL$ is initially \textit{unknown} and it has to be discovered online at the same time as learning their corresponding optimal policy. The threshold~$L$ can be interpreted as the user's exploration radius of interest around~$s_0$. In the absence of a pre-specified threshold, the agent can build its own curriculum for~$L$ to guide its learning process.  

Since in environments with arbitrary dynamics the agent may get stuck in a state without being able to return to~$s_0$, we introduce the following ``reset'' assumption.\footnote{This setting should be contrasted with the finite-horizon setting, where each policy resets automatically after $H$ steps, or assumptions on the MDP dynamics such as ergodicity or bounded diameter, which guarantee that it is always possible to find a policy navigating between any two states.} In Lemma~\ref{lemma_exp_sep} we will formally motivate its role in solving our learning objective.

\begin{assumption}
	The action space contains a known action $\areset \in \cA$ such that $p(s_0| s, \areset) = 1$ for any state $s \in \mathcal{S}$.
	\label{assumption_reset}
\end{assumption}

Consider as input an exploration radius $L \geq 1$, an accuracy level $\epsilon \in (0, 1]$ and a confidence level $\delta \in (0,1)$. We now formally define our exploration objective.

\begin{definition}[Multi-Goal Exploration --- \MGE] An algorithm is said to be $(\epsilon, \delta, L, \cG)$-PAC for \MGE if 
\begin{itemize}[leftmargin=.15in,topsep=-1.2pt,itemsep=1.4pt,partopsep=0pt, parsep=0pt]
    \item it stops after some (possibly random) number of exploration steps $\tau$ that is less than some polynomial in the relevant quantities ($S, A, L, \epsilon^{-1}, \log \delta^{-1}$) with probability at least $1 - \delta$,
    \item it returns a set of goal states $\cX$ and a set of policies $\{\wh{\pi}_g\}_{g \in \cX}$ such that $\mathbb{P}\big( \cC_1  \cap \cC_2 \big) \geq 1 - \delta,$ where we define the conditions
\end{itemize}
\begin{align*}
    \cC_1 &\triangleq \Big\{ \forall g \in \cX, ~ V^{\wh{\pi}_g}(s_0 \rightarrow g) - V^{\star}(s_0 \rightarrow g) \leq \epsilon \Big\}, \\
    \cC_2 &\triangleq \Big\{ \SL \subseteq \mathcal{X} \subseteq \SLepsilon \Big\}.
\end{align*}
The objective is to build an $(\epsilon, \delta, L, \cG)$-PAC algorithm for which the \MGE sample complexity, that is the number of exploration steps $\tau$, is as small as possible.
\label{def_MGE}
\end{definition}

\begin{remark} Since the goal set $\SL$ is \textit{unknown}, it may not be possible to exactly \textit{identify} it within a reasonable number of exploration steps. Thus we allow the learner to output a larger set $\mathcal{X}$ of candidate goals and policies. Nonetheless, we constrain an $(\epsilon, \delta, L, \cG)$-PAC algorithm for \MGE to return a set $\cX$ that is at most contained in the slightly larger set $\SLepsilon$ (i.e., $\mathcal{X} \subseteq \SLepsilon$).
\end{remark}

\begin{remark}
      Consider that $\cG = \cS$. Then the inclusion $\SL \subseteq \cS$ is an equality if $\mathcal{M}$ is communicating (i.e., $D < + \infty$) and if the (unknown) $D_0$ is lower or equal to $L$ (note that under Assumption~\ref{assumption_reset}, $D \leq D_0 +1$). Moreover, denote by $\SLarr$ the set of incrementally $L$-reachable states defined by \citet[][Definition~5]{lim2012autonomous}. Then in the special case where $\SLarr = \SL$, the \MGE objective of Definition~\ref{def_MGE} is equivalent to the \AXstar objective proposed by \citet[][Definition~5]{tarbouriech2020improved} in the incremental autonomous exploration setting introduced by \citet{lim2012autonomous}.
\end{remark}

\paragraph{\MGE vs.\,reset-free \MGE.} Lemma~\ref{lemma_exp_sep} establishes an exponential separation between \MGE and reset-free \MGE (i.e., \MGE without Assumption~\ref{assumption_reset}). This motivates the use of Assumption \ref{assumption_reset} to solve our learning objective in a reasonable number of exploration steps.

\begin{lemma} \MGE can be solved in $\textrm{poly}(S, L, \epsilon^{-1}, A)$ steps. On the other hand, there exists an MDP and a goal space where any algorithm requires at least $\Omega(D)$ steps to solve reset-free \MGE, where $D$ is exponentially larger than $L, S, A, \epsilon^{-1}$. 
\label{lemma_exp_sep}
\end{lemma}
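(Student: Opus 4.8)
The plan is to prove the two halves separately. The positive direction --- that \MGE is solvable in $\mathrm{poly}(S,A,L,\epsilon^{-1})$ steps under \Cref{assumption_reset} --- is a direct consequence of our main algorithmic result: the instantiation \ALGO analyzed in \Cref{section_analysis} is $(\epsilon,\delta,L,\cG)$-PAC for \MGE with sample complexity $\widetilde{O}(L^3 S A \epsilon^{-2})$. (If one only wants \emph{some} polynomial bound, a cruder self-contained argument also works: by Markov's inequality plus the strong Markov property, an optimal goal-reaching policy for any $g \in \SL$ reaches $g$ within $O(L\log\epsilon^{-1})$ steps with probability at least $1-\epsilon/2$, so truncating every rollout at that horizon and using $\areset$ to restart reduces \MGE to a finite-horizon reward-free exploration problem with horizon $H = O(L\log\epsilon^{-1})$, which is solvable in $\mathrm{poly}(S,A,H,\epsilon^{-1})$ steps by standard finite-horizon techniques.)

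For the negative direction, I would exhibit a family $\{\cM_a\}_{a\in\cA}$ of reward-free MDPs indexed by a ``secret'' action $a\in\cA$, all sharing the following skeleton. The state space is $\cS=\{s_0,g,c_1,\dots,c_k\}$ with $S=k+2$, and $\cG=\cS$. At $s_0$, action $a$ goes deterministically to $g$ while every other action goes deterministically to $c_1$; the goal $g$ is absorbing; and the ``trap chain'' $c_1\to\cdots\to c_k$ is a biased random walk in which, from $c_i$ (under every action), one advances to $c_{i+1}$ --- or to $s_0$ if $i=k$ --- with probability $\tfrac12$ and otherwise falls all the way back to $c_1$. In the reset-augmented variant ($\areset$ with $p(s_0\mid\cdot,\areset)=1$), one computes $V^\star(s_0\to g)=1$, $V^\star(s_0\to c_1)=1$ and $V^\star(s_0\to c_i)>1$ for $i\ge2$, so with $L=1$ we get $\SL=\SLepsilon=\{s_0,g,c_1\}$, and such an instance is solved in $O(A\log\delta^{-1})$ steps by trying actions at $s_0$ and resetting after each wrong one. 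Without any reset, however, an agent that lands in $c_1$ can only return to $s_0$ through the trap chain, whose expected hitting time of $s_0$ from $c_1$ is $T_1=\Theta(2^{k})=2^{\Omega(S)}$; taking this (up to a $+1$) as $D$ via $D\le 1+D_0$ and $D_0=\Theta(2^k)$, we obtain $D$ exponentially larger than $L=1$, $A$, and $\epsilon^{-1}$ and exponential in $S$.

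The quantitative lower bound then proceeds as follows. To be $(\epsilon,\delta,L,\cG)$-PAC the algorithm must output policies $\wh{\pi}_g$ and $\wh{\pi}_{c_1}$ with expected cost at most $1+\epsilon<2$ to reach $g$ and $c_1$; in $\cM_a$ the only such policies play $a$ (resp.\ a non-$a$ action) at $s_0$, since any wrong choice at $s_0$ yields infinite expected cost. Averaging over $a$ drawn uniformly from $\cA$, the algorithm's first action at $s_0$ equals $a$ with probability $1/A$, so with probability at least $1-1/A$ the agent is sent to $c_1$ having eliminated only one candidate action; moreover a tail bound on the trap chain (each full traversal succeeds with probability $2^{-k}$, so within $t$ steps the return probability is $O(t/(k2^{k}))$) shows that with probability $1-O(1/k)$ it has not returned to $s_0$ within $D/10$ steps. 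On that event the remaining $A-1$ candidates for $a$ are still equally likely, so any stopping decision produces a correct output with probability at most $1/(A-1)$; choosing $A\ge3$ and $\delta$ a small constant, $\delta$-correctness forces $\Pr[\tau\ge D/10]\ge\tfrac12$, hence $\mathbb{E}[\tau]=\Omega(D)$. The step I expect to be the main obstacle is precisely this trap-chain tail estimate: one needs the return time from $c_1$ to $s_0$ to be simultaneously exponentially large in expectation and unlikely to be atypically small, which is why the dynamics ``restart to $c_1$'' (a plain drifting $\pm1$ walk would not give a clean $2^{-k}$ per-traversal success probability) and why the union bound must be taken carefully over the $O(t/k)$ essentially independent traversal attempts inside a window of length $t$.
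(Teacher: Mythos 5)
Your first half matches the paper's (both reduce to Theorem~\ref{lemma_UB}), but your lower-bound construction is genuinely different from the paper's. The paper (Figure~\ref{fig:mdp-separation}) keeps the secret action at a state $\wt{s}$ that is reached from $s_0$ only with a tunable probability $\eta$, so the MDP is communicating with $D=\Theta(\eta^{-1})$, and the $\Omega(D)$ cost comes from merely \emph{visiting} the informative state; this lets $D$ be made arbitrarily large while $S=5$ stays fixed. You instead place the secret action at $s_0$ and make a wrong guess costly by stranding the agent in a trap chain with return time $2^{\Omega(S)}$; the information-theoretic part (uniform prior over $\cA$, no information gained inside the trap, $1/(A-1)$ guessing probability, hence $\Pr[\tau\ge D/10]$ bounded below) is the same style of argument the paper uses, and your excursion-based tail bound ($\Pr[\text{return within }t]\le t\,2^{-k}$) is sound. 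Both routes prove the stated separation, though the paper's gives the stronger "exponential for fixed $S$" form.

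There is, however, one concrete flaw you must fix: you make $g$ absorbing, so your MDP is not communicating and its diameter as defined in Definition~\ref{def_VSSP} is $D=\max_{s,g'}V^{\star}(s\to g')\ge V^{\star}(g\to s_0)=+\infty$, so the claim "requires $\Omega(D)$ steps with $D=\Theta(2^{k})$" does not hold for the quantity $D$ actually named in the lemma. The inequality $D\le D_0+1$ you invoke to pass from the return time to $D$ is exactly the one that holds only under Assumption~\ref{assumption_reset}, which is what the reset-free setting removes. The repair is immediate: replace the self-loop at $g$ by $p(s_0\mid g,a)=1$ for all $a$ (as the paper does), which makes the MDP communicating with $D=\Theta(2^{k})$ and changes nothing in the argument (an agent whose first guess is correct simply returns to $s_0$ and already knows $a$). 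A second, cosmetic point: "any wrong choice at $s_0$ yields infinite expected cost" is only true for stationary deterministic policies; for the stochastic or non-stationary outputs allowed by Definition~\ref{def_MGE} you need the quantitative version you gesture at (wrong mass at $s_0$ is multiplied by the $\Theta(2^{k})$ detour through the trap, so $\epsilon$-optimality forces that mass to be $O(\epsilon 2^{-k})$), which the paper carries out explicitly for the uniform-over-$A$ case.
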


\paragraph{\MGE lower bound.} We now give a worst-case lower bound on the \MGE problem (details in Appendix \ref{app_proofs}).

\begin{lemma}
For any algorithm that is $(\epsilon, \delta, L, \cG)$-PAC for \MGE for any MDP and goal space $\cG$, there exists an MDP and a goal space where the algorithm requires, in expectation, at least $\Omega(L^3 S A \epsilon^{-2})$ exploration steps to stop.
\label{lemma_LB}
\end{lemma}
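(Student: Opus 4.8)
The plan is to exhibit a single hard MDP (with $\cG=\cS$) in which any $(\epsilon,\delta,L,\cG)$-PAC algorithm for \MGE must, in effect, solve $\Theta(S)$ statistically independent stochastic-shortest-path sub-problems, each requiring $\Omega(AL^3\epsilon^{-2})$ exploration steps just to pin down an $\epsilon$-optimal goal-reaching policy; the $S$ and $A$ factors come from the usual ``needle in a haystack'' argument and the cubic $L^3$ from a geometric self-loop gadget that makes a one-step transition probability of order $1/L$ the statistic to be estimated at accuracy $\epsilon/L^2$. Concretely, besides $s_0$ and the reset action $\areset$ (Assumption~\ref{assumption_reset}), introduce $n=\Theta(S)$ ``cluster'' states $y_1,\dots,y_n$, absorbing goal states $g_1,\dots,g_n$, and a depth-$O(\log_A S)$ deterministic routing tree letting the agent go from $s_0$ to any chosen $y_i$ in $O(\log S)$ steps. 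At $y_i$ each action $a$ either self-loops on $y_i$ or jumps to $g_i$; a hidden ``good'' action $a_i^\star$ has jump probability calibrated so that $V^\star(s_0\rightarrow g_i)=L$, while every other action has a slightly smaller jump probability, chosen so that using it at $y_i$ is exactly $\tfrac32\epsilon$-suboptimal. Let $\cM_{\btheta}$ be the instance with $a_i^\star=\theta_i$, and $\cM_0$ the reference instance where all actions at all $y_i$ use the ``bad'' probability, so that $V^\star(s_0\rightarrow g_i)=L+\tfrac32\epsilon$ and hence $g_i\notin\SLepsilon$; we work in the regime $\epsilon=O(L)$, $L=\Omega(\log S)$, the rest handled by rescaling/padding.

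\textbf{Reduction to identification.} Since the returned $\wh\pi_g$ are deterministic stationary, condition $\cC_1$ of Definition~\ref{def_MGE} at $g_i$ forces $\wh\pi_{g_i}$ to play $a_i^\star$ at $y_i$ in $\cM_{\btheta}$ (any other action raises the value by $\tfrac32\epsilon>\epsilon$), while $\cC_2$ forces $g_i\in\cX$ in $\cM_{\btheta}$ (since $g_i\in\SL$) but $g_i\notin\cX$ in $\cM_0$ (since $g_i\notin\SLepsilon$). Hence, writing $\mathcal E_i=\{g_i\in\cX\}$, a correct algorithm has $\P_{\cM_0}(\mathcal E_i)\le\delta$ and $\P_{\cM_{\btheta}}(\mathcal E_i)\ge1-\delta$ whenever $a_i^\star=\theta_i$.

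\textbf{Information-theoretic bound.} Fix a cluster $i$ and an arm $a$, and compare $\cM_0$ with the instance $\cM^{i,a}$ obtained from $\cM_0$ by promoting $a$ to the good arm at $y_i$: the two differ only in the kernel at $(y_i,a)$, which is a Bernoulli whose parameter is of order $1/L$ and changes by order $\epsilon/L^2$, so $\KL\big(p_{\cM_0}(\cdot\mid y_i,a)\,\Vert\,p_{\cM^{i,a}}(\cdot\mid y_i,a)\big)=\Theta(\epsilon^2/L^3)$. The standard Wald-type change-of-measure inequality applied to $\mathcal E_i$ and the visit count $N_{i,a}(\tau)$ of $(y_i,a)$ before stopping gives $\mathbb E_{\cM_0}[N_{i,a}(\tau)]\cdot\Theta(\epsilon^2/L^3)\ge\kl\big(\P_{\cM_0}(\mathcal E_i),\P_{\cM^{i,a}}(\mathcal E_i)\big)\ge\log(1/(2.4\delta))$, hence $\mathbb E_{\cM_0}[N_{i,a}(\tau)]=\Omega(L^3\epsilon^{-2}\log\delta^{-1})$ for every $a$. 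Summing over the $A$ arms at $y_i$ and over the $n=\Theta(S)$ clusters --- which are statistically decoupled under $\cM_0$ because the kernel at $(y_i,\cdot)$ depends only on $\theta_i$ --- and using $\tau\ge\sum_{i,a}N_{i,a}(\tau)$ up to a lower-order $O(\log S)$-per-excursion routing overhead, yields $\mathbb E_{\cM_0}[\tau]=\Omega(SAL^3\epsilon^{-2})$, which is the claim (for constant $\delta$; a $\log\delta^{-1}$ factor can be retained).

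\textbf{Main obstacle.} The delicate part is the joint calibration of the self-loop gadget so that, at once, $V^\star(s_0\rightarrow g_i)=L$, the per-action suboptimality gap is $\Theta(\epsilon)$, and one visit to $(y_i,a)$ reveals only $\Theta(\epsilon^2/L^3)$ nats --- this is precisely what upgrades the naive linear dependence into $L^3$, mirroring the $B_\star^3$ scaling in known SSP lower bounds --- together with making the change-of-measure rigorous for an adaptive algorithm with a random stopping time (Wald's identity / a transportation lemma) and verifying that neither the reset action nor revisiting goals lets the learner acquire information faster than one Bernoulli observation per exploration step, so that the bound on visit counts transfers to a bound on $\tau$.
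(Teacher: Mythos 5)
Your construction is sound and would yield the stated bound, but it takes a genuinely different route from the paper's proof. The paper reduces \MGE to best-policy identification for a \emph{single} goal (BPI-SSP, Lemma~\ref{BPI_SSP}): a decision state $s_d$ is reached from $s_0$ only with probability $q=\Theta(1/L)$, every action at $s_d$ reaches the goal with probability $1/2$ except a hidden $a^{\star}$ with probability $1/2+\wt{\epsilon}$ where $\wt{\epsilon}=\Theta(\epsilon/L)$; the per-visit KL is thus $\Theta(\epsilon^2/L^2)$, the identification event is $\{\recpolicy_\tsteps(a^{\star}\vert s_d)>1/2\}$ (i.e.\ condition $\cC_1$), and the third power of $L$ is recovered only at the end via Wald's equation, because each visit to $s_d$ costs $1/q=\Theta(L)$ time steps waiting at $s_0$; the factor $S$ comes from replicating the decision state. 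You instead build $\Theta(S)$ decoupled clusters, each with its own goal, push all three powers of $L$ into the per-sample KL by making the informative transition a rare event (a Bernoulli parameter of order $1/L$ perturbed by $\Theta(\epsilon/L^2)$, hence KL of order $\epsilon^2/L^3$), and use as identification event the membership $\{g_i\in\cX\}$, i.e.\ condition $\cC_2$ rather than $\cC_1$. Your calibration checks out ($1/(p-\Delta)-1/p\approx\Delta/p^2=\Theta(\epsilon)$ with $p=\Theta(1/L)$ and $\Delta=\Theta(\epsilon/L^2)$), and the $\cC_2$-based event buys you two simplifications: you never have to reason about stochastic or non-stationary recommended policies (which the paper handles explicitly), and you can dispense with Wald's identity since $\tau\ge\sum_{i,a}N_{i,a}(\tau)$ holds trivially when every visit carries the information. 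What the paper's version buys in exchange is a strictly stronger statement: its bound already holds for a singleton goal space $\cG=\{g\}$, i.e.\ even when goal selection is trivial, whereas your argument genuinely exploits the multi-goal structure of $\cC_2$. Both constructions require the same regime $L=\Omega(\log_A S)$ (cf.\ the condition $S\le A^{L/3-2}$ in Lemma~\ref{BPI_SSP}) so that the $\Theta(S)$ informative states remain reliably $L$-reachable.
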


\begin{remark} We can relate the dependencies in Lemma~\ref{lemma_LB} with the lower bound of the time steps needed to identify an $\epsilon$-optimal policy in both $\gamma$-discounted MDPs with a generative model --- i.e., $\Omega((1-\gamma)^{-3} S A \epsilon^{-2})$ \citep{azar2013minimax} --- and online stationary finite-horizon MDPs --- i.e., $\Omega(H^3 S A \epsilon^{-2})$ \citep{domingues2021episodic}. This correspondence is not surprising as $L$ captures the ``range'' of the \MGE problem, similar to the effective horizon $1/(1-\gamma)$ or the horizon $H$. Also recall that both discounted MDPs and finite-horizon MDPs are subclasses of goal-oriented MDPs, i.e., SSP-MDPs \citep{bertsekas1995dynamic}.
\end{remark}

\paragraph{\MGE under linear function approximation.} Lemma~\ref{lemma_LB} shows that the \MGE sample complexity must scale with $S A$ in the worst case, which may be prohibitive in the case of large state-action spaces. This motivates us to further analyze \MGE under \textit{linear function approximation}. In particular, we focus on the \textit{linear mixture} MDP setting \citep{ayoub2020model, zhou2021nearly}, which assumes that the transition probability is a linear mixture of $d$ signed basis measures. 

\begin{definition}[Linear Mixture MDP, \citealp{ayoub2020model, zhou2021nearly}]\label{def_linearmixture}
The unknown transition probability $p$ is a linear combination of $d$ signed basis measures $\phi_i(s'|s,a)$, i.e., $p(s'|s,a) \triangleq \sum_{i=1}^d \phi_i(s'|s,a)\theta^{\star}_i$. Meanwhile, for any $V: \cS \rightarrow [0,1]$, $i \in [d], (s,a) \in \cS \times \cA$, the summation $\sum_{s' \in \cS} \phi_i(s'|s,a) V(s')$ is computable. For simplicity, let $\bphi \triangleq [\phi_1, \dots, \phi_d]^\top$, $\btheta^{\star} \triangleq [\theta^{\star}_1,\dots, \theta^{\star}_d]^\top$ and $\bpsi_V(s, a) \triangleq \sum_{s' \in \cS} \bphi(s'| s, a)V(s)$. Without loss of generality, we assume $\|\btheta^{\star}\|_2 \le B, \|\bpsi_V(s, a)\|_2 \le 1$ for all $V: \cS \rightarrow [0,1]$ and $(s,a) \in \cS \times \cA$.
\end{definition}

\section{OUR \AdaGoalnormal APPROACH}
\label{sect_overview}

In Algorithm~\ref{algo}, we introduce the common algorithmic structure based on \AdaGoal. We use it to design \ALGO that tackles the \MGE problem in tabular MDPs, and \ALGOLM that tackles the \MGE problem in linear mixture MDPs. Both follow the goal-conditioned structure described in Section~\ref{sect_intro}. The agent sets a horizon of $H = \Omega(L \log L \epsilon^{-1})$ and splits its learning interaction in algorithmic episodes of length $H$. At the beginning of each algorithmic episode, it \GS selects a candidate goal state and \PE deploys an explorative (i.e., optimistic) policy conditioned on this goal for $H$ steps before resetting to $s_0$. It alternates     between these two steps until an adaptive stopping rule is met, at which point the algorithm terminates.\footnote{Indeed recall from Definition~\ref{def_MGE} that the algorithm must adaptively decide when to terminate its learning interaction.}

\SetKwInput{KwData}{Input}
\SetKwInput{KwOracles}{Oracles}
\SetKwInput{KwResult}{Output}
\SetKwInput{KwExample}{(\textit{Example}}
\SetKwComment{Comment}{$\triangleright$\ }{}

\let\oldnl\nl
\newcommand{\nonl}{\renewcommand{\nl}{\let\nl\oldnl}}

\begin{algorithm}[t!]
\SetAlgoLined
 \textbf{Input}: Exploration radius $L \geq 1$, accuracy level $\epsilon \in (0,1]$, confidence level $\delta \in (0,1)$. \\
 \textbf{Input}: \tab{Number of states $S$, number of actions $A$.} \\
 \textbf{Input}: \lm{Dimension of feature mapping $d$, bound $\pnorm$ on $\ell_2$-norm of $\btheta^{\star}$.} \\
 \textbf{Input}: Goal space $\cG \subseteq \cS$ (otherwise set $\cG = \cS$). \\
 Set as horizon $H \triangleq \lceil 5(L+\red{2})\log\big(\red{10}(L+\red{2})/\epsilon\big)/\log(2) \rceil.$ \label{line_H}\\
 \textbf{Initialize}: algorithmic episode index $k = 1$, distance estimates $\D_1(g) = \mathds{1}[g \neq s_0]$, error estimates $\E_1(g) = H \mathds{1}[g \neq s_0]$, goal-conditioned finite-horizon $Q$-values $\Q_{1,h}(s,a,g) = \mathds{1}[s \neq g]$, for all $(g, s, a, h) \in \cG \times \cS \times \cA \times [H]$.  \\
 \vspace{0.05in}
 \While{\textup{stopping rule \eqref{stopping_rule} is not met}}{
 \textbf{\raisebox{.5pt}{\textcircled{\raisebox{-.8pt} {i}}} Goal selection rule:} \\
 Select as goal state
 \nonl
  \vspace{-0.04in}
 \begin{align*}
g_k \in ~ \argmax_{g \in \cG}      \quad \quad &\E_k(g)        
\\
\textrm{subject to: } \quad& \D_k(g) \leq L. 
\end{align*}

\vspace{0.05in}
\textbf{\raisebox{.5pt}{\textcircled{\raisebox{-.8pt} {ii}}} Policy execution rule:} \\
For a duration of $H$ steps, run the optimistic goal-conditioned policy $\pi^k_{g_k}$ such that at step~$h$, $\pi^k_{g_k,h}(s) \in \argmin_{a \in \cA} \mathcal{Q}_{k,h}(s, a, g_k)$ (note that $\D_{k}(g_k) = \min_{a \in \cA} \mathcal{Q}_{k,1}(s_0, a, g_k)$). \\
Then execute action $\areset$ and increment episode index $k \mathrel{+}= 1$. 

\vspace{0.05in}
 \textbf{\raisebox{.5pt}{\textcircled{\raisebox{-.8pt} {iii}}} Update and check stopping rule:} \\
 \tab{Update estimates $\Q_k,\,\D_k,\,\E_k$ according to \eqref{eq_Qt},\,\eqref{eq_Dt},\,\eqref{eq_Et} using samples collected so far.} \label{line_estimates_tabular} \\
 \lm{Update estimates $\Q_k,\,\D_k,\,\E_k$ according to \eqref{eq_Qt_LFA},\,\eqref{eq_Dt_LFA},\,\eqref{eq_Et_LFA} using samples collected so far.} \label{line_estimates_LFA} \\
 Stop the algorithm if
 \nonl
  \vspace{-0.07in}
\begin{align}\label{stopping_rule}
\max_{g \in \cG: \, \D_k(g) \leq L} ~\E_k(g) ~\leq ~ \epsilon.
\end{align}
\vspace{-0.1in}
}

Let $\kappa \triangleq \inf \big\{ k \in \mathbb{N} : \max_{g \in \cG: \, \D_k(g) \leq L} \E_k(g) \leq \epsilon \big\}$. \\
\vspace{0.05in}
 \textbf{Output}: Goal states $\cX_{\kappa} \triangleq \{ g \in \cG: \D_{\kappa}(g) \leq L \}$, and for every $g \in \cX_{\kappa}$, a deterministic, non-stationary 
 policy $\wh{\pi}_g$ that at time step $i$ and state $s$ selects action $a$ according to:
 \nonl
 \vspace{-0.07in}
 \begin{align*}
        \wh{\pi}_g(a \vert s, i)
        \triangleq \left\{
    \begin{array}{ll}
        \argmin_{a \in \cA} \mathcal{Q}_{\kappa,h}(s, a, g) \\
        \quad \mbox{if } i \equiv h\ (\textrm{mod}\ H+1) ~\mbox{for } h \in [H], \\
        \areset \\
        \quad \mbox{if } i \equiv 0\ (\textrm{mod}\ H+1).
    \end{array}
    \right.
    \end{align*}
\caption{\AdaGoal-based algorithmic structure. \tab{Blue} text denotes \ALGO specific steps and \lm{purple} text denotes \ALGOLM specific steps.}
 \label{algo}
\end{algorithm}

\paragraph{$\Box$ \,\PE step.} Goal-conditioned finite-horizon $\cQ$-functions \citep{kaelbling1993learning, schaul2015universal} are maintained optimistically. At each episode $k$ and episode step $h \in [H]$, $\cQ_{k,h}(s,a,g)$ approximates (from below) the number of (expected) steps required to reach any goal~$g \in \cG$ starting from any state-action pair~$(s,a) \in \cS \times \cA$ and executing the optimal goal-reaching policy for $H-h$ steps. Intuitively, the $\cQ$-functions will gradually increase, more so for goal states that the agent struggles to reach. This is essentially done by initializing the $\cQ$-functions optimistically (i.e., at $0$), considering that the cost (i.e., negative reward) is $+1$ (resp.\,$0$) per time step if the conditioned goal is not reached (resp.\,reached), and carefully subtracting an exploration bonus to maintain optimism. Given a goal $g_k \in \cG$ selected at the beginning of episode $k$, the \PE step simply amounts to deploying an explorative policy conditioned on $g_k$, that is, a policy $\pi_{k,h}$ that greedily minimizes the current $\cQ$-functions, i.e., $\pi_{k,h}(s) \in \argmin_{a \in \cA} \cQ_{k,h}(s,a,g_k)$. 

\newcommand{\DHstar}{\D^{\star}_{\scaleto{\! H}{4pt}}}

\paragraph{$\Box$ \,\GS step.} To elect a relevant sequence of candidate goals $(g_k)_{k \geq 1}$, we introduce \AdaGoal, an adaptive goal selection scheme based on a simple constrained optimization problem. It relies on the agent's ability to compute two types of goal-conditioned quantities for any goal $g \in \cG$ and episode $k \geq 1$: 
\begin{itemize}[leftmargin=.12in,topsep=-1.5pt,itemsep=1pt,partopsep=0pt, parsep=0pt]
    \item a distance estimate from $s_0$ to $g$, denoted by $\D_k(g)$;
    \item an error of estimating this distance, denoted by $\E_k(g)$.
\end{itemize}
Conveniently, the distance estimates can be simply instantiated as $\D_k(g) \triangleq \min_{a \in \cA} \Q_{k,1}(s_0,a,g)$. Formally, we require the following properties of $\D$ and $\E$ to hold (with high probability):
\begin{itemize}[leftmargin=.12in,topsep=-1.5pt,itemsep=1.4pt,partopsep=0pt, parsep=0pt]
    \item \textbf{Property 1:} $\D$ is an \textit{optimistic distance estimate}, i.e., 
    \begin{align*}
        \D_k(g) \leq \DHstar(g), \quad \forall k \geq 1, \forall g \in \cG,
    \end{align*}
where $\DHstar(g) \triangleq \min_{\pi} \mathbb{E}\big[ \omega_{\pi}(s_0 \rightarrow g) \wedge H \big]$ denotes the shortest-path distance from $s_0$ to $g$ truncated at $H$ steps. Note that $\DHstar(g) \in [0, H]$ and that $\lim_{H \rightarrow + \infty} \DHstar(g) = V^{\star}(s_0 \rightarrow g)$.
    \item \textbf{Property 2:} $\E$ is an \textit{upper bound on the prediction error}, i.e.,
    \begin{align*}
        \abs{\DHstar(g) - \D_k(g)} \leq \E_k(g), \quad \forall k \geq 1, \forall g \in \cG.
    \end{align*}
\end{itemize}
Given these two goal-conditioned quantities, \AdaGoal selects at episode $k$ a candidate goal that solves the following constrained optimization problem:
\begin{subequations}
\begin{empheq}[box=\widefbox]{align}
    g_k \in ~ \argmax_{g \in \cG}      \quad \quad &\E_k(g)  \label{opt_pbm_constrained}      \\
    \textrm{subject to: } \quad& \D_k(g) \leq L. 
    \label{opt_pbm_constrained_constraint}
\end{empheq}
\end{subequations}

\begin{figure*}[t!]
\centering
\vspace{-0.02in}
\begin{minipage}{0.6\linewidth}
\includegraphics[width=1.45in,trim =20 20 20 5,clip]{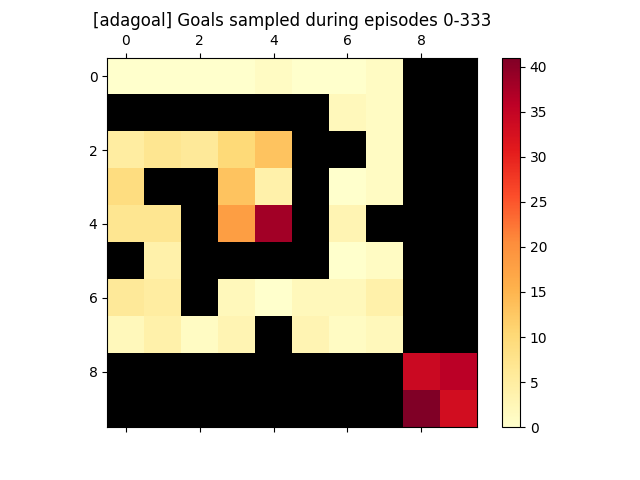} \hspace{-0.25in}
\includegraphics[width=1.45in,trim =20 20 20 5,clip]{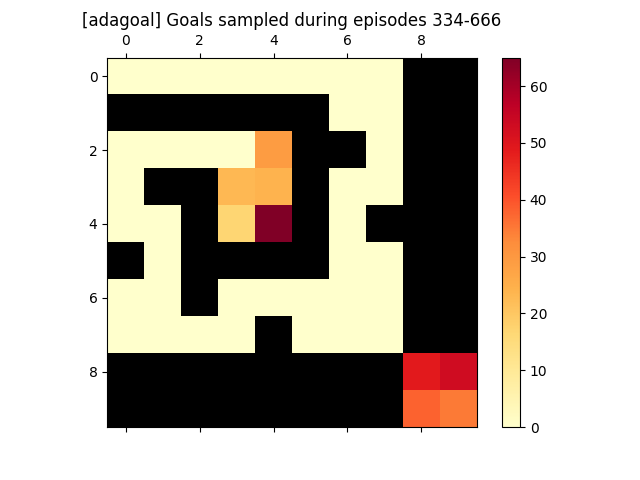} \hspace{-0.25in}
\includegraphics[width=1.45in,trim =20 20 20 5,clip]{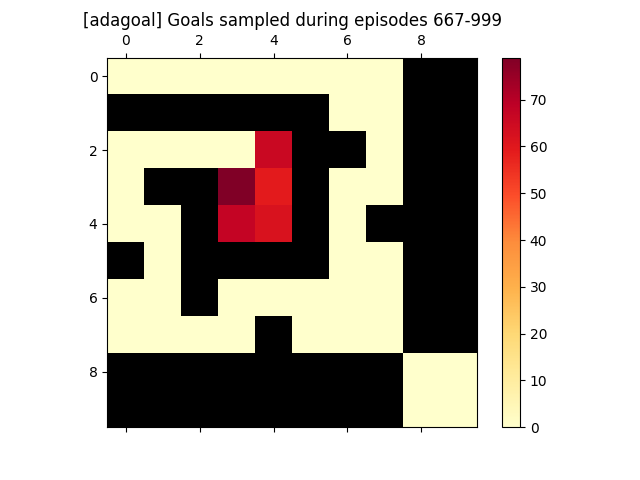}
\end{minipage}%
\hfill%
\begin{minipage}{0.40\linewidth}
\caption{Goal sampling frequency of \ALGOscs over $1000$ episodes of length $H=50$ (with $L=40$). The grid-world has $S=52$ states, starting state $s_0 = (0, 0)$ (top left), $A=5$ actions ($4$ cardinal ones and $\areset$). The $4$ states of the bottom right room can only be accessed from $s_0$ by any cardinal action with probability $\eta = 0.001$ (their associated $V^{\star}(s_0 \rightarrow \cdot)$ thus scale with $\eta^{-1}$).}
\label{fig:gridworld_goals_sampled_main}
\end{minipage}%
\vspace{-0.1in}
\end{figure*}

\paragraph{Interpretation.} The agent sequentially selects a goal state with highest prediction error~$\E$ among those whose distance estimate~$\D$ is not too large. If the agent is confident that a goal $g$ is either too easy or too hard to reach, it will assign a low prediction error $\E(g)$. As a result, the objective function in \eqref{opt_pbm_constrained} adaptively samples goal states on the frontier of the learning process. The constraint in \eqref{opt_pbm_constrained_constraint}, although it is not required for the final sample complexity result, further tightens the goal selection process. Indeed, for any $k \geq 1$, let
    \begin{align*}
         \mathcal{X}_k ~ \triangleq ~ \big\{ g \in \cG: \D_k(g) \leq L \big\}, \qquad \epsilon_k ~ \triangleq ~  \max_{g \in \mathcal{X}_k} ~ \E_k(g).
\end{align*}
Then, if as a warm-up we take the limit $H \rightarrow + \infty$, injecting Properties 1 and 2 in (\ref{opt_pbm_constrained}-\ref{opt_pbm_constrained_constraint}) entails that 
\begin{align} \label{eq_sandwich_Xk}
        \SL \subseteq \mathcal{X}_k \subseteq \SLepsilonk.
    \end{align}
We thus see that the constraint in (\ref{opt_pbm_constrained_constraint}) does not remove valid goals in $\SL$ from the set $\mathcal{X}_k$ of candidate goal states to sample. Second, decreasing $\epsilon_k$ has the dual impact of making the set of candidates goals $\mathcal{X}_k$ closer to $\SL$ and improving their distance estimates, which motivates the goal selection scheme in (\ref{opt_pbm_constrained}). In practice, we consider a \textit{finite} truncation $H$ (line~\ref{line_H} in Algorithm~\ref{algo}), thus we need to account for the bias $\rho_g \triangleq V^{\star}(s_0 \rightarrow g) - \DHstar(g)$, which can be arbitrarily large for goals $g$ that are hard or impossible to reach. Fortunately, our \AdaGoal strategy will be able to gradually discard such states, hence the final \MGE sample complexity will not pay for such terms. In fact, we will later see that the choice of horizon $H = \Omega(L \log L \epsilon^{-1})$ ensures that $\rho_g = O(\epsilon)$ for all the (unknown) goal states of interest $g \in \SL$.

\paragraph{Choice of $\Q, \D, \E$.} A key algorithmic design is how to build and update the goal-conditioned $\Q$-functions and the estimates~$\D$ and~$\E$. In Appendices~\ref{app_TAB} and~\ref{app_LFA}, we will carefully construct them with exact bonus-based estimates for both tabular MDPs and linear mixture MDPs. As suggested by the algorithms' names, the estimates of the former are inspired from \BPIUCBVI \citep{menard2021fast}, an algorithm for best policy identification in finite-horizon tabular MDPs, while those of the latter are inspired from \UCRLVTR \citep{ayoub2020model}, an algorithm for regret minimization in finite-horizon linear mixture MDPs. Since all our estimates are optimistic, we see that Algorithm~\ref{algo} relies on the principle of optimism in the face of uncertainty \textit{both} for the goal selection and the policy execution. Finally, in Section~\ref{section_AdaGoal_deepRL}, we propose a way to instantiate $\Q, \D, \E$ for a practical implementation in deep RL.

\paragraph{$\Box$ \,Adaptive stopping rule.} 

At the end of each algorithmic episode, the estimates are updated using the samples collected so far, and the algorithm checks whether a stopping rule \eqref{stopping_rule} based on \AdaGoal is triggered, in which case it terminates. This occurs when the prediction errors $\E$ of all the goal states that meet the \AdaGoal constraint (\ref{opt_pbm_constrained_constraint}) are below the prescribed accuracy level~$\epsilon$. These states then form the set of \textit{candidate goal states} output by Algorithm~\ref{algo}, along with their associated optimistic goal-reaching policies.

\paragraph{Empirical validation.} In Figure~\ref{fig:gridworld_goals_sampled_main} (see Appendix~\ref{app_exp} for details), we empirically study the sequence of goals selected by \ALGO during learning. We design a two-room grid-world with a very small probability of reaching the second room. We see that \AdaGoal is able to discard the states from the second room and target as goals the states in the first room that are ``furthest'' away from $s_0$, which effectively correspond to the fringe of what the agent can reliably reach.

\section{SAMPLE COMPLEXITY GUARANTEES}
\label{sect_theory}


\paragraph{$\Box$ \,Guarantee for \ALGO.} We first bound the \MGE sample complexity of \ALGO. For simplicity we consider that $\cG = \cS$, i.e., the goal space spans the entire state space (the results trivially extend to any $\cG \subseteq \cS$).

\newcommand{\thmMGE}{
\ALGO is $(\epsilon, \delta, L, \cS)$-PAC for \MGE and, with probability at least $1-\delta$, for $\epsilon\in (0,1/S]$ its \MGE sample complexity is of order\footnote{ The notation $\wt{O}$ in Theorem \ref{lemma_UB} hides poly-log terms in $\epsilon^{-1},S,A,L,\delta^{-1}$. See Lemma~\ref{lemmaAPP_algo_sample_complexity_for_MGE} in Appendix \ref{app_proof_UB} for a more detailed bound that includes the poly-log terms.} $\wt{O}(L^3 S A \epsilon^{-2})$.
}

\begin{theorem}
  \label{lemma_UB}
  \thmMGE
\end{theorem}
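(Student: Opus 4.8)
The plan is to verify the two PAC conditions $\cC_1$ and $\cC_2$ in turn, then bound the number of episodes $\kappa$ before the stopping rule fires, and finally multiply by the horizon $H = \Theta(L\log(L/\epsilon))$ to obtain the sample complexity $\tau = (H+1)\kappa$. The whole argument rests on having the right \emph{tabular} instantiation of $\Q_k, \D_k, \E_k$ satisfying Properties~1 and~2 with high probability, so the first step is to state the concrete bonus-based construction (deferred to Appendix~\ref{app_TAB}) adapted from \BPIUCBVI~\citep{menard2021fast}: optimistic, Bernstein-type exploration bonuses, with $\D_k(g) = \min_a \Q_{k,1}(s_0,a,g)$ lower-bounding the $H$-truncated shortest-path distance $\DHstar(g)$, and $\E_k(g)$ upper-bounding $|\DHstar(g) - \D_k(g)|$. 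I would then invoke the good event $\mathcal{E}$ (a union bound over all $(g,s,a,h,k)$, with total failure probability $\le \delta$) on which both properties hold simultaneously for all episodes.

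\textbf{Step 1: correctness ($\cC_1\cap\cC_2$).} On $\mathcal{E}$, for any $g\in\cX_\kappa = \{g:\D_\kappa(g)\le L\}$ the stopping rule gives $\E_\kappa(g)\le\epsilon$, so $\D_\kappa(g) \le \DHstar(g) \le \D_\kappa(g)+\E_\kappa(g) \le L+\epsilon$, hence $V^\star(s_0\to g) \le \DHstar(g) + \rho_g$; I need the key horizon lemma showing $H \ge 5(L+2)\log(10(L+2)/\epsilon)$ forces $\rho_g = V^\star(s_0\to g) - \DHstar(g) \le \epsilon$ \emph{whenever $V^\star(s_0\to g)$ is not too large} — this is the standard ``truncation error decays geometrically once $H \gtrsim L\log(L/\epsilon)$'' argument, proved by a Markov-inequality/sub-multiplicativity bound on the tail of the hitting time of a near-optimal proper policy. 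Combining, $V^\star(s_0\to g)\le L+O(\epsilon)$, i.e. $\cX_\kappa\subseteq\SLeps$; and conversely any $g\in\SL$ has $\D_\kappa(g)\le\DHstar(g)\le V^\star(s_0\to g)\le L$, so $g\in\cX_\kappa$ — this gives $\cC_2$ (after adjusting constants so the slack is exactly $\epsilon$, which is why the horizon carries the explicit $+2$, $\times 10$ factors). For $\cC_1$, the output policy $\wh\pi_g$ replays the greedy policy of $\Q_{\kappa,\cdot}(\cdot,\cdot,g)$ for blocks of $H$ steps separated by resets; a standard optimism-plus-error decomposition gives $V^{\wh\pi_g}_H(s_0\to g) \le \D_\kappa(g) + \E_\kappa(g) \le \D_\kappa(g)+\epsilon \le \DHstar(g)+\epsilon$, and then I add back $\rho_g\le\epsilon$ plus the (geometric-series) cost of the restart structure to conclude $V^{\wh\pi_g}(s_0\to g) - V^\star(s_0\to g) \le O(\epsilon)$; rescaling $\epsilon$ by a constant closes this.

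\textbf{Step 2: sample complexity (bounding $\kappa$).} This is the crux. While the stopping rule is not met, at every episode $k<\kappa$ the selected goal $g_k$ satisfies $\D_k(g_k)\le L$ and $\E_k(g_k) = \epsilon_k > \epsilon$. The plan is a potential/pigeonhole argument: sum $\E_k(g_k)^2$ over episodes, and show on one hand this sum is $\ge \kappa\,\epsilon^2$ (each term exceeds $\epsilon$), and on the other hand it is $\wt O(L^3 SA)$ by a regret-style telescoping of the Bernstein bonuses — this is where the $L^3 SA$ emerges, exactly as in finite-horizon BPI but with effective horizon $H=\Theta(L\log(L/\epsilon))$ contributing one factor of $H^2 \approx L^2$ (up to logs) and one factor of $L$ from the law-of-total-variance/visitation argument à la \citep{azar2017minimax, menard2021fast}, together with $SA$ from the number of state-action pairs. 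Concretely: $\sum_{k<\kappa}\E_k(g_k)^2 \lesssim \sum_{k<\kappa}\sum_h \mathbb{E}_{\pi^k_{g_k}}[\text{bonus}_{k,h}^2] \lesssim H^2\cdot H\cdot SA\cdot\mathrm{polylog} = \wt O(L^3 SA)$, so $\kappa = \wt O(L^3 SA\epsilon^{-2})$, and $\tau = (H+1)\kappa = \wt O(L^3 SA\epsilon^{-2})$ since the extra $\log$ from $H$ is absorbed. The restriction $\epsilon\le 1/S$ enters to control lower-order terms in the Bernstein bound (the $S/\epsilon$-type terms become lower order).

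\textbf{Main obstacle.} The hardest part is Step~2, specifically making the telescoping of the squared errors/bonuses rigorous for the \emph{goal-conditioned, fixed-horizon-with-resets} setting rather than an off-the-shelf episodic MDP: one must (i) handle that each episode targets a \emph{different} goal $g_k$ while the counts $n_k(s,a)$ are shared across goals, so the visitation/pigeonhole bound is over $(s,a)$ pairs not over $(s,a,g)$ triples — this is fine because the bonuses only depend on $(s,a)$-counts, but it needs to be argued carefully; (ii) deal with the bias $\rho_g$ cleanly, showing the selected goals $g_k$ always have $\DHstar(g_k)\le L$ so their bias is $\le\epsilon$ and never pollutes the sum; and (iii) get the correct power of $L$ — a naive analysis gives $L^4$ or $H^3$, and recovering the minimax $L^3$ requires the variance-aware (Bernstein, law-of-total-variance) argument, which is the technically delicate piece. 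I would structure Appendix~\ref{app_proof_UB} so that a generic ``\AdaGoal meta-theorem'' (using only Properties 1–2 plus a regret-type bound on $\sum\E_k(g_k)$) does the bookkeeping, and the tabular-specific work is confined to verifying the bonus construction and the $\wt O(\sqrt{L^3 SA\,\kappa})$-type bound on $\sum_{k}\E_k(g_k)$.
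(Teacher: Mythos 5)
Your overall architecture (good event, Properties 1--2, stopping rule, finite-horizon-to-SSP conversion via resets, $\tau=(H+1)\kappa$) matches the paper's proof, but there are two genuine problems. The main one is in Step 2, your self-identified crux: the accounting loses a factor of $L$. You bound $\sum_{k<\kappa}\E_k(g_k)^2=\wt O(H^3SA)$, deduce $\kappa=\wt O(L^3SA\epsilon^{-2})$, and then claim $\tau=(H+1)\kappa=\wt O(L^3SA\epsilon^{-2})$ because ``the extra $\log$ from $H$ is absorbed.'' But $H=\Theta(L\log(L/\epsilon))$ contributes a full factor of $L$, not a log, so your intermediate bounds actually give $\tau=\wt O(L^4SA\epsilon^{-2})$. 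To land on $L^3$ the number of \emph{episodes} must be $\wt O(L^2SA\epsilon^{-2})$, and your sum-of-squares potential does not deliver this: writing $\E_k\approx\sum_h\mathbb E[\mathrm{bonus}_{k,h}]$, Cauchy--Schwarz gives $\E_k^2\le H\sum_h\mathbb E[\mathrm{bonus}_{k,h}^2]$, and since the $1/n^k(s,a)$ weights destroy the per-episode law of total variance one can only bound the variance term by $H^2$, giving $\sum_k\E_k^2=\wt O(H^3SA)$ and hence $\kappa=\wt O(H^3SA\epsilon^{-2})$ --- one $H$ too many. The paper instead bounds the \emph{first} powers $\sum_k\E_k(g_k)$, applying Cauchy--Schwarz globally over $(k,h,s,a)$ so that the law of total variance ($\sum_h p_{k,h}\Var\le H^2$ per episode) is used on one factor and the pigeonhole bound $\sum_{k,h}p_{k,h}(s,a)/\bar n^k(s,a)= O(\log(KH))$ on the other; this yields $\sum_k\E_k\le\wt O(\sqrt{KH^2SA}+H^2S^2A)$, hence $\kappa\le\wt O(H^2SA\epsilon^{-2}+H^2S^2A\epsilon^{-1})$ and $\tau\le\wt O(L^3SA\epsilon^{-2})$ once $\epsilon\le1/S$ kills the second-order term.

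A second, smaller issue is in Step 1: your argument for $\cX_\kappa\subseteq\SLepsilon$ is circular as written. You invoke ``$\rho_g\le\epsilon$ whenever $V^\star(s_0\to g)$ is not too large'' in order to conclude $V^\star(s_0\to g)\le L+O(\epsilon)$ --- but the hypothesis of that truncation lemma is exactly the conclusion you are trying to establish. The paper's Lemma~\ref{lemma_EandD_gives_g_SLeps} avoids this by running the Markov-inequality/sub-multiplicativity argument directly off the \emph{finite-horizon} bound $\ovV^\star_{g,1}(s_0)\le\D_\kappa(g)+\E_\kappa(g)\le L+\epsilon/9$ together with the reset action (which gives $\ovV^\star_{g,1}(s')\le1+\ovV^\star_{g,1}(s_0)$ for every $s'$), and only afterwards transfers to the infinite-horizon $V^\star$. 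Both issues are fixable, but as stated your proof neither establishes the claimed rate nor closes the $\cC_2$ inclusion without circularity.
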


Lemma~\ref{lemma_LB} and Theorem~\ref{lemma_UB} imply that the \MGE sample complexity of \ALGO is nearly minimax optimal for small enough $\epsilon$ and up to logarithmic terms.

In the absence of a pre-specified exploration radius~$L$, the agent can build its own curriculum for~$L$ (i.e., design a sequence of increasing~$L$'s) to guide its learning. In this case, the total sample complexity is (up to a logarithmic factor) the same of \ALGO run with the final value of~$L$, as stated below.

\begin{corollary} \label{cor_UB_increasingL}
The successive execution of \ALGO for an increasing sequence $L \in \{ 2, 2^2, . . . , 2^f\}$ with $f \in \mathbb{N}^*$ is $(\epsilon, \delta, L_{\text{f}}, \cS)$-PAC for \MGE and, with probability at least $1-\delta$, for $\epsilon\in (0,1/S]$ its \MGE sample complexity is of order $\wt{O}(L_{\text{f}}^3 S A \epsilon^{-2})$, where \mbox{$L_{\text{f}} = 2^f$}. 
\end{corollary}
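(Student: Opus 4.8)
The plan is to reduce everything to Theorem~\ref{lemma_UB} applied separately to each value of the exploration radius in the schedule $L_i \triangleq 2^i$, $i \in \{1,\dots,f\}$, and then to combine a union bound over the $f$ runs with a geometric-series summation of the per-run sample complexities, exploiting that $\sum_{i=1}^f L_i^3$ is dominated by its last term $L_f^3$.

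First I would invoke Theorem~\ref{lemma_UB} for each $i \in \{1,\dots,f\}$ with exploration radius $L_i$ and confidence level $\delta_i \triangleq \delta/f$ (the accuracy $\epsilon \in (0,1/S]$ is the same across runs): the $i$-th execution of \ALGO is $(\epsilon,\delta_i,L_i,\cS)$-PAC for \MGE, so on a ``good'' event $\cF_i$ of probability at least $1-\delta_i$ it (a) halts after $\tau_i = \wt{O}(L_i^3 S A \epsilon^{-2})$ exploration steps and (b) outputs a pair $(\cX^{(i)}, \{\wh\pi^{(i)}_g\}_{g\in\cX^{(i)}})$ satisfying the correctness conditions $\cC_1$ and $\cC_2$ relative to the radius $L_i$. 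Note that here the $\wt{O}$ also hides a $\log(f/\delta) = O(\log\log L_f + \log\delta^{-1})$ factor coming from the confidence rescaling, which stays poly-logarithmic in the final quantities since $f = O(\log L_f)$. By a union bound over $i$, the event $\bigcap_{i=1}^f \cF_i$ holds with probability at least $1-\delta$, and I condition on it for the rest of the argument.

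Second, I would specify that the overall procedure returns the output of the \emph{last} run, i.e. $\cX \triangleq \cX^{(f)}$ and $\{\wh\pi_g\} \triangleq \{\wh\pi^{(f)}_g\}$: on $\cF_f$ this satisfies $\cC_1$ and $\cC_2$ with $L = L_f$, which is precisely the correctness requirement for $(\epsilon,\delta,L_f,\cS)$-PAC, and the overall failure probability is at most $\delta$ by the union bound above (the outputs of the runs $i < f$ only serve to build the curriculum and may be discarded; if one instead carries samples over between runs, this can only tighten the bounds). For the sample complexity, since the runs are executed successively the total number of exploration steps is $\sum_{i=1}^f \tau_i = \wt{O}\!\big(S A \epsilon^{-2} \sum_{i=1}^f L_i^3\big) = \wt{O}\!\big(S A \epsilon^{-2} \sum_{i=1}^f 8^i\big)$, and the geometric sum satisfies $\sum_{i=1}^f 8^i = \tfrac{8}{7}(8^f - 1) \le 2\cdot 8^f = 2 L_f^3$, yielding the claimed $\wt{O}(L_f^3 S A \epsilon^{-2})$.

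I do not expect a genuine obstacle here: this is essentially a bookkeeping corollary of Theorem~\ref{lemma_UB}. The only mild points of care are (i) that the per-run failure probabilities still sum to $\delta$ and the induced $\log(f/\delta)$ factor remains absorbed in $\wt O$, and (ii) that the geometric doubling schedule makes the cost of the entire curriculum dominated, up to a universal constant, by its last and most expensive stage—so that not knowing $L$ in advance costs essentially nothing.
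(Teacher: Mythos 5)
Your proposal is correct and is exactly the argument the paper intends (the paper states this corollary without an explicit proof, noting only that the total cost is ``up to a logarithmic factor'' that of the final run): apply Theorem~\ref{lemma_UB} to each $L_i = 2^i$ with confidence $\delta/f$, union bound, return the last run's output, and observe that $\sum_{i=1}^f L_i^3 = \sum_{i=1}^f 8^i = O(L_f^3)$. Your two points of care --- the absorbed $\log(f/\delta)$ factor and the geometric domination by the last stage --- are precisely the right ones, so there is nothing to add.
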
 

\newcommand{\wtD}{{\scalebox{0.92}{$\wt D$}}}

Finally, we can investigate the special case where $\mathcal{M}$ is communicating and the objective is to learn an $\epsilon$-optimal goal-conditioned policy for \textit{every} goal state. Since the $s_0$-diameter $D_0$ is unknown, we can use as an initial subroutine the \GOSPRL algorithm \citep{tarbouriech2021provably} to compute an estimate $\wtD$ such that $D_0 \leq \wtD \leq 2 D_0$ in $\wt{O}(D_0^3 S^2 A)$ time steps. Then we can execute \ALGO with $L = \wtD$, which leads to the following guarantee. 

\begin{corollary} \label{cor_UB_comm}
Assume that the MDP $\mathcal{M}$ has a finite and unknown $s_0$-diameter $D_0$. Then the above strategy is $(\epsilon, \delta, D_0, \cS)$-PAC for \MGE and, with probability at least $1-\delta$, for $\epsilon\in (0, 1/S]$, its \MGE sample complexity is of order $\wt{O}(D_0^3 S A \epsilon^{-2})$. 
\end{corollary}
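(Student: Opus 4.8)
The strategy is a two-phase composition: run \GOSPRL to get a constant-factor estimate $\wtD$ of the unknown $s_0$-diameter, then run \ALGO with input radius $L=\wtD$; the analysis just pastes together the \GOSPRL guarantee, Theorem~\ref{lemma_UB}, and a union bound, and exploits the regime $\epsilon\le 1/S$ to hide the cost of the first phase inside the sample complexity of the second.

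\textbf{Step 1 (estimation phase).} Under Assumption~\ref{assumption_reset}, a finite $s_0$-diameter $D_0$ forces $D\le D_0+1<+\infty$, so $\mathcal{M}$ is communicating and \GOSPRL \citep{tarbouriech2021provably} is applicable. Run it with confidence $\delta/2$: with probability at least $1-\delta/2$ it returns, after at most $\wt{O}(D_0^3 S^2 A)$ exploration steps, an estimate $\wtD$ with $D_0\le\wtD\le 2D_0$. Condition on this event. In particular $\wtD\ge D_0=\max_{g}V^{\star}(s_0\rightarrow g)\ge 1$, so $\wtD$ is an admissible input radius, and moreover $\mathcal{G}_{\wtD}=\{g: V^{\star}(s_0\rightarrow g)\le\wtD\}=\cS$ and $\mathcal{G}_{\wtD+\epsilon}=\cS$ (and the same equalities hold with $D_0$ in place of $\wtD$, since $D_0$ is the $s_0$-diameter).

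\textbf{Step 2 (learning phase).} Apply Theorem~\ref{lemma_UB} to \ALGO with input radius $L=\wtD$ and confidence $\delta/2$: with probability at least $1-\delta/2$ it is $(\epsilon,\delta/2,\wtD,\cS)$-PAC for \MGE with sample complexity $\wt{O}(\wtD^3 S A\epsilon^{-2})$. On the intersection of the two good events (probability at least $1-\delta$), its output satisfies $\mathcal{G}_{\wtD}\subseteq\cX_\kappa\subseteq\mathcal{G}_{\wtD+\epsilon}$ and $V^{\wh{\pi}_g}(s_0\rightarrow g)-V^{\star}(s_0\rightarrow g)\le\epsilon$ for every $g\in\cX_\kappa$. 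By the equalities of Step~1, $\cX_\kappa=\cS$, and combining with $\mathcal{G}_{D_0}=\cS=\mathcal{G}_{D_0+\epsilon}$ shows that both $\cC_1$ and $\cC_2$ hold for threshold $D_0$; hence the overall strategy is $(\epsilon,\delta,D_0,\cS)$-PAC.

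\textbf{Step 3 (sample complexity) and main obstacle.} The total count is at most $\wt{O}(D_0^3 S^2 A)+\wt{O}(\wtD^3 S A\epsilon^{-2})$. Using $\wtD\le 2D_0$ the second term is $\wt{O}(D_0^3 S A\epsilon^{-2})$, and since $\epsilon\le 1/S$ gives $S^2\le\epsilon^{-2}$, the first term $\wt{O}(D_0^3 S^2 A)$ is dominated by it, yielding $\wt{O}(D_0^3 S A\epsilon^{-2})$. There is no deep difficulty here---the corollary is essentially a composition lemma---so the only point requiring care is the phase-transition bookkeeping: verifying that \GOSPRL leaves the agent able to reset to $s_0$ (via $\areset$) before \ALGO begins, and that the two high-probability guarantees are built on disjoint randomness so the union bound is clean.
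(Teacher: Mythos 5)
Your proposal is correct and matches the paper's own argument: the paper's proof of Corollary~\ref{cor_UB_comm} is exactly the composition you describe — run \GOSPRL to obtain $\wtD$ with $D_0 \leq \wtD \leq 2D_0$ in $\wt{O}(D_0^3 S^2 A)$ steps, then invoke Theorem~\ref{lemma_UB} with $L = \wtD$, absorbing the first phase via $\epsilon \leq 1/S$. Your additional bookkeeping (noting $\mathcal{G}_{\wtD} = \cS = \mathcal{G}_{D_0}$ and splitting the confidence budget) is consistent with what the paper leaves implicit.
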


\paragraph{Comparison with existing bounds.} A different although related problem is considered in \citet{tarbouriech2021provably} in \textit{communicating} MDPs, where the agent must find an $\epsilon$-optimal goal-conditioned policy for any arbitrary starting state, goal state and positive cost function. For small enough $\epsilon$, their bound in the unit-cost case scales as $\wt{O}(D^4 \Gamma S A \epsilon^{-2})$, where~$\Gamma$ is the branching factor which in the worst case is~$S$. We see that Corollary~\ref{cor_UB_comm} improves over \citet{tarbouriech2021provably} by a factor~$D_0$ as well as a factor $\Gamma \leq S$. Although the latter considers a more demanding cost-free objective (i.e., for any positive cost function), it is unable to avoid its superlinear dependence in~$S$ when instantiated in our scenario of unit cost functions, since the algorithm's design is to estimate uniformly well the transition kernel.~Finally, \DisCo \citep{tarbouriech2020improved} designed for incremental autonomous exploration can solve \MGE in the special case where $\SLarr = \SL$, where $\SLarr$ denotes the set of incrementally $L$-reachable states (see \citealp[][Definition~5]{lim2012autonomous}). However, the bound of \DisCo would translate to $\wt{O}(L^5 \Gamma S A \epsilon^{-2})$, which is also worse than Theorem \ref{lemma_UB}. 

\paragraph{$\Box$ \,Guarantee for \ALGOLM.} We now bound the \MGE sample complexity of Algorithm~\ref{algo} in linear mixture MDPs (Definition~\ref{def_linearmixture}). Since the state space $\cS$ may be large, we consider that the known goal space is in all generality a subset of it, i.e., $\cG \subseteq \cS$, where $G \triangleq \abs{\cG}$ denotes the cardinality of the goal space. 

\begin{theorem} \label{SC_algolm} In linear mixture MDPs, for $\epsilon\in (0,1]$, \ALGOLM is $(\epsilon, \delta, L, \cG)$-PAC for \MGE and, with probability at least $1-\delta$, its \MGE sample complexity is of order\footnote{The notation $\wt{O}$ in Theorem \ref{SC_algolm} hides poly-log terms in $\epsilon^{-1},G,d,L,\delta^{-1}$.} $\wt{O}\left( L^4 d^2 \epsilon^{-2} \right)$, where $d$ is the dimension of the feature mapping.
\end{theorem}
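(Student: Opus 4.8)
The plan is to realise the generic quantities $\Q_k,\D_k,\E_k$ of Algorithm~\ref{algo} by \UCRLVTR-style optimistic planning in the estimated linear mixture model, verify on a high-probability event that they satisfy Properties~1--2 of Section~\ref{sect_overview} together with an elliptical-potential-type control of $\sum_k\E_k(g_k)$, and then run the same ``accumulated error $\Rightarrow$ episode count'' argument as for \ALGO, re-instantiated with linear-mixture confidence widths. Concretely, at episode $k$ I would maintain the regularised Gram matrix $\Lambda_k$ and the least-squares estimate $\wh\btheta_k$ of the value-targeted regression, with confidence radius $\beta_k=\wt{O}(H\sqrt d)$ (absorbing $\sqrt{\log(GH/\delta)}$ and $\sqrt{\lambda}\,B$), set $\wh p_k(\cdot|s,a)=\bphi(\cdot|s,a)^\top\wh\btheta_k$ and bonus $b_k(s,a)=\beta_k\norm{\bpsi_{V}(s,a)}_{\Lambda_k^{-1}}$ (evaluated at the pertinent value iterate $V$), and for each $g\in\cG$ run depth-$H$ optimistic value iteration on $\wh p_k$ with per-step cost $\ind[s\neq g]$ and $g$ absorbing, subtracting $b_k$ at each backup, to obtain $\Q_{k,h}(\cdot,\cdot,g)$ and $\D_k(g)=\min_a\Q_{k,1}(s_0,a,g)$; here $\E_k(g)=\min_a\E_{k,1}(s_0,a,g)$ is produced by a parallel backward recursion engineered to upper-bound $\mathbb{E}_{\pi^k_g}\!\big[\sum_{h=1}^H 2\,b_k(s_h,a_h)\mid s_1=s_0\big]$ (Eqs.~\eqref{eq_Qt_LFA}--\eqref{eq_Et_LFA}).

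\noindent\textbf{Good event and Properties~1--2.} By the self-normalised concentration bound for value-targeted regression in linear mixture MDPs \citep{ayoub2020model,zhou2021nearly}, a union bound over the $G$ goals and the $H$ backup levels yields an event of probability at least $1-\delta$ on which $\norm{\wh\btheta_k-\btheta^\star}_{\Lambda_k}\leq\beta_k$ for all $k$. On this event one has $|(\wh p_k-p)V|\leq b_k$ for every intermediate value $V:\cS\to[0,H]$, and unrolling this over the $H$ Bellman layers (keeping the estimate pessimistic/optimistic in the correct direction) gives Property~1, $\D_k(g)\leq\DHstar(g)$, while the companion cost-difference decomposition along the greedy policy $\pi^k_g$ gives $\DHstar(g)-\D_k(g)\leq\mathbb{E}_{\pi^k_g}[\sum_h 2b_k]\leq\E_k(g)$, i.e.\ Property~2, uniformly in $k\geq1$ and $g\in\cG$.

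\noindent\textbf{Correctness.} On the good event, if the stopping rule~\eqref{stopping_rule} fires at episode $\kappa$ then every $g\in\cX_\kappa=\{g:\D_\kappa(g)\leq L\}$ satisfies $\E_\kappa(g)\leq\epsilon$, hence $\DHstar(g)\leq\D_\kappa(g)+\epsilon\leq L+\epsilon$; conversely any $g\in\SL$ satisfies $\D_\kappa(g)\leq\DHstar(g)\leq V^\star(s_0\rightarrow g)\leq L$, so $g\in\cX_\kappa$. Invoking the horizon--truncation lemma shared with the analysis of \ALGO --- which uses the geometric tail of (near-)optimal proper policies to show that $H=\lceil 5(L+2)\log(10(L+2)/\epsilon)/\log 2\rceil$ makes the bias $\rho_g=V^\star(s_0\rightarrow g)-\DHstar(g)$ of the relevant goals, as well as the restart overhead of $\wh\pi_g$ (the depth-$H$ greedy policy replayed with $\areset$ every $H{+}1$ steps), all $O(\epsilon)$ --- and rescaling $\epsilon$ by a universal constant, one concludes $V^{\wh\pi_g}(s_0\rightarrow g)-V^\star(s_0\rightarrow g)\leq\epsilon$ (condition $\cC_1$) and $\SL\subseteq\cX_\kappa\subseteq\SLepsilon$ (condition $\cC_2$); that the rule fires within polynomially many steps is established by the next step.

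\noindent\textbf{Sample complexity and the main obstacle.} On the good event, for each episode $k<\kappa$ the goal $g_k$ attains the maximum in~\eqref{opt_pbm_constrained} and the stopping rule is not yet met, so $\E_k(g_k)>\epsilon$ and $\sum_{k=1}^{\kappa}\E_k(g_k)>(\kappa-1)\epsilon$. Conversely $\E_k(g_k)\leq\mathbb{E}_{\pi^k_{g_k}}[\sum_h 2b_k(s_h,a_h)\mid s_0]$ and episode $k$ is actually played under $\pi^k_{g_k}$ from $s_0$, so an Azuma--Hoeffding bound over the $\kappa$ episodes replaces the conditional expectations by the realised bonuses up to an additive $\wt{O}(H\sqrt\kappa)$, after which the elliptical potential lemma gives, with $T=\kappa H$, $\sum_{t=1}^{T}\min\{1,\norm{\bpsi_t}_{\Lambda_t^{-1}}\}=\wt{O}(\sqrt{dT})$ (the within-episode staleness of $\Lambda_k$ costs only a $\log$ factor, e.g.\ by updating the regression every step while recomputing value functions per episode), whence $\sum_{k=1}^{\kappa}\E_k(g_k)=\wt{O}(\beta_\kappa\sqrt{dT})=\wt{O}(Hd\sqrt T)=\wt{O}(H^{3/2}d\sqrt\kappa)$. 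Combining the two bounds forces $\kappa=\wt{O}(H^3 d^2\epsilon^{-2})$, so the number of exploration steps is $\tau=\kappa H=\wt{O}(H^4 d^2\epsilon^{-2})=\wt{O}(L^4 d^2\epsilon^{-2})$ using $H=\wt{O}(L)$, which also certifies termination within polynomially many steps with probability $\geq1-\delta$ and completes the $(\epsilon,\delta,L,\cG)$-PAC claim. The main obstacle is the construction of $\E$ in the second step: it must \emph{simultaneously} be a valid high-probability upper bound on $|\DHstar(g)-\D_k(g)|$ --- forcing a direction-consistent propagation of the linear-mixture model error through $H$ Bellman layers, with an extra correction for the recursion's own model error --- \emph{and} be dominated by a quantity to which the elliptical potential lemma applies, all in the goal-conditioned, reset-interleaved regime where value-target regression samples are pooled across length-$H$ episodes while the model is shared but value iteration is run per goal.
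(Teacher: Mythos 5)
Your overall architecture matches the paper's: value-targeted ridge regression \emph{à la} \UCRLVTR for $\Q$ and $\D$, verification of Properties~1--2 on a self-normalized concentration event (union-bounded over the $G$ goals), an elliptical-potential control of $\sum_k \E_k(g_k)$, inversion of the resulting functional inequality to get $\kappa = \wt{O}(H^3 d^2 \epsilon^{-2})$ and hence $\tau = \wt{O}(L^4 d^2\epsilon^{-2})$, and the same horizon-truncation lemma to transfer back to the SSP objective. The correctness and counting steps are essentially the paper's key steps \ding{172}--\ding{175}.

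However, there is a genuine gap at precisely the point you flag as ``the main obstacle'': you never construct the computable error estimate $\E_k(g)$, and the construction is not routine. Your counting step assumes $\E_k(g_k)\leq \mathbb{E}_{\pi^k_{g_k}}\big[\sum_h 2b_k(s_h,a_h)\big]$, but that expectation is taken under the \emph{unknown} dynamics $p$, so it cannot serve as the algorithm's $\E_k$; and if you instead propagate the bonuses through an \emph{estimated} model, the recursion acquires its own model-error slack which must (a) be a valid high-probability upper bound on $[pU_{g,k,h+1}]$ and (b) still telescope under an elliptical potential lemma. The paper resolves this with a second, \emph{error-targeted} ridge regression: it defines recursive error functions $U_{g,k,h}$ (Eq.~\eqref{eq_LFA_U}) whose propagation term is $\big\la \bphi_{U_{g,k,h+1}},\mathring{\btheta}_{g,k}\big\ra + \beta_k\big\|\mathring{\bSigma}_{g,k,1}^{-1/2}\bphi_{U_{g,k,h+1}}\big\|_2$, where $\mathring{\btheta}_{g,k}$ is obtained by regressing the realized targets $U_{g,k',h}(s^{k'}_{h+1})$ on the contexts $\bphi_{U_{g,k',h}}(s^{k'}_h,a^{k'}_h)$ and $\mathring{\bSigma}_{g,k}$ is the corresponding Gram matrix. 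A second confidence ellipsoid $\mathcal{C}'_{g,k}$ then certifies $[pU_{g,k,h+1}]\leq \la\bphi_{U},\mathring{\btheta}\ra + \beta_k\|\mathring{\bSigma}^{-1/2}\bphi_U\|_2$ (giving Property~2 by induction on $h$), and the new bonus contributes a term $Z_2$ in the telescoping of $\sum_k U_{g_k,k,1}(s_0)$ that is controlled by the elliptical potential lemma applied to $\mathring{\bSigma}$ --- a term your accounting (which only has the analogues of the paper's $Z_1$ and $Z_3$) omits. Since $Z_2$ has the same order as $Z_1$, the final rate is unaffected, but without this second estimator the quantity $\E_k$ is either uncomputable or lacks a provable potential bound, so the proof as written does not close.
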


To the best of our knowledge, Theorem \ref{SC_algolm} yields the first goal-oriented PAC guarantee with linear function approximation. The algorithm's choice of $\E,\,\D,\,\Q$ relies on two regression-based goal-conditioned estimators, one standard ``value-targeted'' estimator inspired from \UCRLVTR \citep{ayoub2020model} and one novel ``error-targeted'' estimator, see Appendix~\ref{app_LFA}. We expect that the bound of Theorem \ref{SC_algolm} can be refined using tighter Bernstein-based estimates, for instance inspired from \UCRLVTRp \citep{zhou2021nearly}, which we leave as future work. Note that the $\wt{O}$ notation in Theorem \ref{SC_algolm} contains a $\log(G)$ factor (which appears when performing a union bound argument over all goals $g \in \cG$), and the computational complexity of \ALGOLM scales with $G$ (since the algorithm maintains goal-conditioned estimates). We point out that here we still consider that the MDP has a finite number of states~$S$. This is to be expected given the way a goal is currently modeled (at the granular level of states), independently of how large the state space is, where it may be very hard to visit specific states. Learning in single- or multi-goal RL beyond a finite state space is an interesting direction of future investigation. Note that existing works on single-goal exploration (i.e., SSP regret minimization) with linear function approximation \citep{vial2021regret, min2021learning} also assume that the state space is finite.

\section{ANALYSIS OVERVIEW} \label{section_analysis}

The proofs of Theorems~\ref{lemma_UB} and~\ref{SC_algolm} (see Appendices~\ref{app_TAB} and~\ref{app_LFA}) are decomposed in the same following key steps.

\paragraph{$\triangleright$~\textit{Key step \ding{172}: Optimism and gap bounds.}}

We prove that Properties 1 and 2 hold with high probability, i.e., \textbf{(i)} the quantities $\D$ are optimistic estimates of the optimal goal-conditioned finite-horizon value functions $\DHstar$ and \textbf{(ii)} the quantities $\E$ are valid upper bounds to the goal-conditioned finite-horizon gaps.

\paragraph{$\triangleright$~\textit{Key step \ding{173}: Bounding the cumulative gap bounds.}}

We make explicit a function $f_{\mathcal{M}}$ (depending on the MDP $\mathcal{M}$) that is strictly decreasing in $K$ with $f_{\mathcal{M}}(K) \rightarrow_{K \rightarrow \infty}\,0$, such that with high probability, 
\begin{align} \label{eq_PE}
    \sum_{k=1}^K \DHstar(g_k) - \D_k(g_k) \leq  \sum_{k=1}^K \E_k(g_k) \leq K \cdot f_{\mathcal{M}}(K),
\end{align}
for any number of algorithmic episodes $K$. Specifically, we establish that $f_{\mathcal{M}}(K)= \wt{O}\big( \sqrt{K H^2 S A} + H^2 S^2 A \big)$ for \ALGO, and $f_{\mathcal{M}}(K)= \wt{O}\big( \sqrt{ K H^{3} d^2} + H^2 d^{3/2} \big)$ for \ALGOLM. 
\eqref{eq_PE} resembles a \textit{no-regret} property of the exploration algorithm that receives as input the sequence of goals $(g_k)_{k \geq 1}$ prescribed by \AdaGoal and performs the~\PE step. Indeed, intuitively, the aim of the \PE step is to improve the estimation of $\D_k(g_k)$ and make it closer to $\DHstar(g_k)$, i.e., to decrease the prediction error $\E_k(g_k)$.

\paragraph{$\triangleright$~\textit{Key step \ding{174}: Bounding the sample complexity.}} To bound $\kappa$ the episode index at which Algorithm~\ref{algo} terminates, we combine \eqref{eq_PE} and the termination condition \eqref{stopping_rule} to simultaneously lower and upper bound (with high probability) the cumulative errors $\E$ as 
\begin{align*}
    \epsilon \cdot (\kappa -1) \leq  \sum_{k=1}^{\kappa-1} \E_k(g_k) \leq (\kappa - 1) \cdot f_{\mathcal{M}}(\kappa - 1).
\end{align*}
Inverting this functional inequality in $\kappa$ yields that $\kappa$ is finite and bounded as 
\begin{align} \label{eq_kappa_bound}
    \kappa \leq f^{-1}_{\mathcal{M}}(\epsilon) + 2.
\end{align}
The sample complexity is $(H+1) \cdot \kappa$ with $H = \wt{O}(L)$, thus \ALGO (resp.\,\ALGOLM) stops in $\wt{O}(L^3 S A \epsilon^{-2} + L^3 S^2 A \epsilon^{-1})$ (resp.\,$\wt{O}(L^4 d^2 \epsilon^{-2})$) time steps, with high probability.

\paragraph{$\triangleright$~\textit{Key step \ding{175}: Connecting to the original \MGE objective.}}  The key remaining step is to prove that the \MGE objective is indeed fulfilled. 

\begin{remark} Algorithm~\ref{algo} relies on a finite-horizon construction, with algorithmic episodes of length~$H$. This relates to the reduction of SSP to finite-horizon studied in some SSP regret minimization works \citep{cohen2021minimax, chen2021finding}, which rely on the idea that an SSP problem can be approximated by a finite-horizon problem if the horizon is large enough w.r.t.\,$T_{\star}$, the optimal expected hitting time to the goal starting from any state. Two main differences arise in our \MGE setting: (i) first, in these works, the goal state is fixed throughout learning and $T_{\star}$ is assumed known, whereas we need to deal with goal selection and find the relevant goals of interest while having to discard those that are poorly reachable or unreachable. (ii) Second, these works ensure that the \textit{empirical} goal-reaching performance of the algorithm's non-stationary policy \textit{over the whole learning interaction} is good enough (by definition of the regret objective in SSP). As such, they do not show that the \textit{expected} performance of some \textit{candidate} policy is good enough (i.e., the SSP value function is small enough) -- in fact, they do not even explicitly prove that the executed policies are proper. The latter property may actually not be possible to obtain since standard regret-to-PAC conversion may not work in SSP \citep{tarbouriech2021sample}. In our \MGE objective, the key difference lies in the availability of the reset action (Assumption~\ref{assumption_reset}), as we will now see.
\end{remark}

Our analysis builds on the following reasoning: given a goal state in $\SLepsilon$, we can find a candidate policy with near-optimal goal-reaching behavior (i.e., SSP value function) by: (i) first computing a near-optimal policy $\wt{\pi}$ in the finite-horizon reduction (using the stopping rule \eqref{stopping_rule}), (ii) and then expanding $\wt{\pi}$ into an infinite-horizon policy via the reset action every $H$ time steps to get our desired candidate policy. 

Now, importantly, the above reasoning \textit{only holds} for the goals in $\SLepsilon$, which is an \textit{unknown} set. This is where our \AdaGoal strategy comes into the picture, as it provides a simple and computable \textit{sufficient condition} for a goal to belong to $\SLepsilon$. 

\newcommand{\lemEandDgivesgSLeps}{
With probability at least $1-\delta$, if a goal state $g \in \cG$ satisfies $\D_k(g) \leq L$ and $\E_k(g) \leq \epsilon$ for an episode $k \geq 1$, then $g \in \SLepsilon$. 
}

\begin{lemma}
  \label{lemma_EandD_gives_g_SLeps}
  \lemEandDgivesgSLeps
\end{lemma}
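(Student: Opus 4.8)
The plan is to reduce the claim to a statement about the $H$-truncated shortest-path value $\DHstar(g)$, and then to control the truncation bias $\rho_g \triangleq V^{\star}(s_0 \rightarrow g) - \DHstar(g)$ using the reset action. Everything below is argued on the event of probability at least $1-\delta$ on which Properties~1 and~2 hold (established in the first key step of the analysis, optimism and gap bounds). Under the hypotheses $\D_k(g) \leq L$ and $\E_k(g) \leq \epsilon$, Property~2 gives $\DHstar(g) \leq \D_k(g) + \E_k(g) \leq L + \epsilon$. Since the horizon satisfies $H > L + \epsilon$, a policy $\pi$ attaining $\DHstar(g)$ reaches $g$ within $H$ steps with probability $\mathbb{P}_{\pi}\big(\omega_{\pi}(s_0 \rightarrow g) \leq H\big) \geq 1 - \DHstar(g)/H > 0$ by Markov's inequality, so $g$ is reachable from $s_0$ and $V^{\star}(s_0 \rightarrow g) < +\infty$.

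Next I would establish the crude bound $V^{\star}(s_0 \rightarrow g) = O(L)$. Consider the proper policy that runs the $\DHstar(g)$-optimal $\pi$ for $H$ steps and, whenever $g$ is not reached within a block, plays $\areset$ and replays $\pi$. Summing over the geometric number of blocks, its expected hitting time of $g$ equals $\big(\DHstar(g) + (1-q)\big)/q$ with $q \triangleq \mathbb{P}_{\pi}\big(\omega_{\pi}(s_0 \rightarrow g) \leq H\big) \geq 1 - \DHstar(g)/H$; since $\DHstar(g)/H$ is a small absolute constant by the choice $H = \Theta(L\log(L/\epsilon))$, this hitting time --- hence $V^{\star}(s_0 \rightarrow g)$ --- is at most $c_0 L$ for an explicit constant $c_0$. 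Assumption~\ref{assumption_reset} then enters crucially: for \emph{every} state $s$, playing $\areset$ followed by the optimal goal-$g$ policy gives $V^{\star}(s \rightarrow g) \leq 1 + V^{\star}(s_0 \rightarrow g) \leq c_0 L + 1 =: T$. Thus the optimal goal-$g$ policy $\pi^{\star}_g$ is proper from every state with cost-to-go uniformly at most $T = O(L)$, and iterating the one-step Markov inequality over blocks of length $2T$ yields the exponential tail $\mathbb{P}_{\pi^{\star}_g}\big(\omega_{\pi^{\star}_g}(s_0 \rightarrow g) > t\big) \leq 2^{-\lfloor t/(2T)\rfloor}$.

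Finally I would bound $\rho_g$. The truncated optimal values $\DHstar(\cdot)$ are precisely $H$ iterations of the goal-$g$ Bellman operator started from $\mathbf{0}$, so using the uniform cost-to-go bound $T = O(L)$, the standard stochastic-shortest-path-to-finite-horizon contraction argument \citep{cohen2021minimax, chen2021finding} (equivalently, integrating the exponential hitting-time tail against the horizon) shows that each block of $\Theta(T)$ iterations shrinks the $\ell_\infty$-distance to $V^{\star}(\cdot \rightarrow g)$ by a factor $\tfrac12$; hence $\|\DHstar(\cdot) - V^{\star}(\cdot \rightarrow g)\|_\infty \leq \epsilon$ as soon as $H \gtrsim T\log(T/\epsilon)$, which is precisely what $H = \lceil 5(L+2)\log(10(L+2)/\epsilon)/\log 2\rceil$ guarantees. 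Combining the three parts, $V^{\star}(s_0 \rightarrow g) \leq \DHstar(g) + \rho_g \leq L + O(\epsilon)$, i.e.\ $g \in \SLepsilon$. I expect the main obstacle to be exactly this bias control: without an a priori bound on $\max_s V^{\star}(s \rightarrow g)$ the truncated value can undershoot $V^{\star}(s_0 \rightarrow g)$ arbitrarily (the term $\rho_g$ the paper flags), and it is Assumption~\ref{assumption_reset} --- via $V^{\star}(s \rightarrow g) \leq 1 + V^{\star}(s_0 \rightarrow g)$ --- that caps this quantity and thereby makes the geometric contraction rate, and hence the merely \emph{logarithmic} dependence of $H$ on $1/\epsilon$, sufficient.
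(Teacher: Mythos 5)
Your proof is correct and follows essentially the same route as the paper's: bound the $H$-truncated optimal value at $s_0$ by $L+O(\epsilon)$ via Properties 1--2, use the reset action to cap the cost-to-go from every state by $O(L)$, run the blockwise Markov argument to obtain an exponential tail on the optimal policy's hitting time, and convert back to the SSP value via the resetting extension (this is exactly the content of Lemmas~\ref{lem:control_terminal_prob} and~\ref{lem:link_FH_SSP}). The only structural difference is that you insert an intermediate crude renewal bound $V^{\star}(s_0 \rightarrow g) = O(L)$ on the \emph{untruncated} value before capping $\max_s V^{\star}(s \rightarrow g)$, whereas the paper applies the reset directly to the truncated values, $\bVstar_{g,H,1}(s') \leq 1 + \bVstar_{g,H,1}(s_0) \leq 1 + L + \epsilon/9$, and thus skips that step; both are valid.
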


We are now ready to put everything together and prove that \ALGO and \ALGOLM are $(\epsilon, \delta, L)$-PAC for \MGE. The candidate goal states are $\cX_{\kappa} \triangleq \{ g \in \cS: \D_{\kappa}(g) \leq L \}$, with candidate policies $\wh{\pi}_g \triangleq (\pi^{\kappa + 1}_g)^{|H}$. In what follows we reason with high probability. Property 1 ensures that $\SL \subseteq \cX_{\kappa}$, while Lemma~\ref{lemma_EandD_gives_g_SLeps} entails that $\cX_{\kappa} \subseteq \SLepsilon$. Finally, for any $g \in \cX_{\kappa}$, combining Property 2 and the termination condition \eqref{stopping_rule} gives that $\pi_{g}^{\kappa + 1}$ is $\epsilon/9$-optimal in $\bcM_{g,H}$. As a result, the translation from the finite-horizon to goal-oriented objective (which holds since $g \in \SLepsilon$ and by choice of the horizon $H = \Omega(L \log L \epsilon^{-1})$, see Lemma~\ref{lem:link_FH_SSP}) yields that $V_{g}^{\wh{\pi}_g}(s_0) \leq V^{\star}_{g}(s_0) + \epsilon$, i.e., $\wh{\pi}_g$ is $\epsilon$-optimal for the original SSP objective. This concludes the proofs of Theorems \ref{lemma_UB} and \ref{SC_algolm}.

\begin{figure*}[t!]
\centering
\vspace{-0.08in}
\includegraphics[width=1.6in,trim=0 -4cm 0 0]{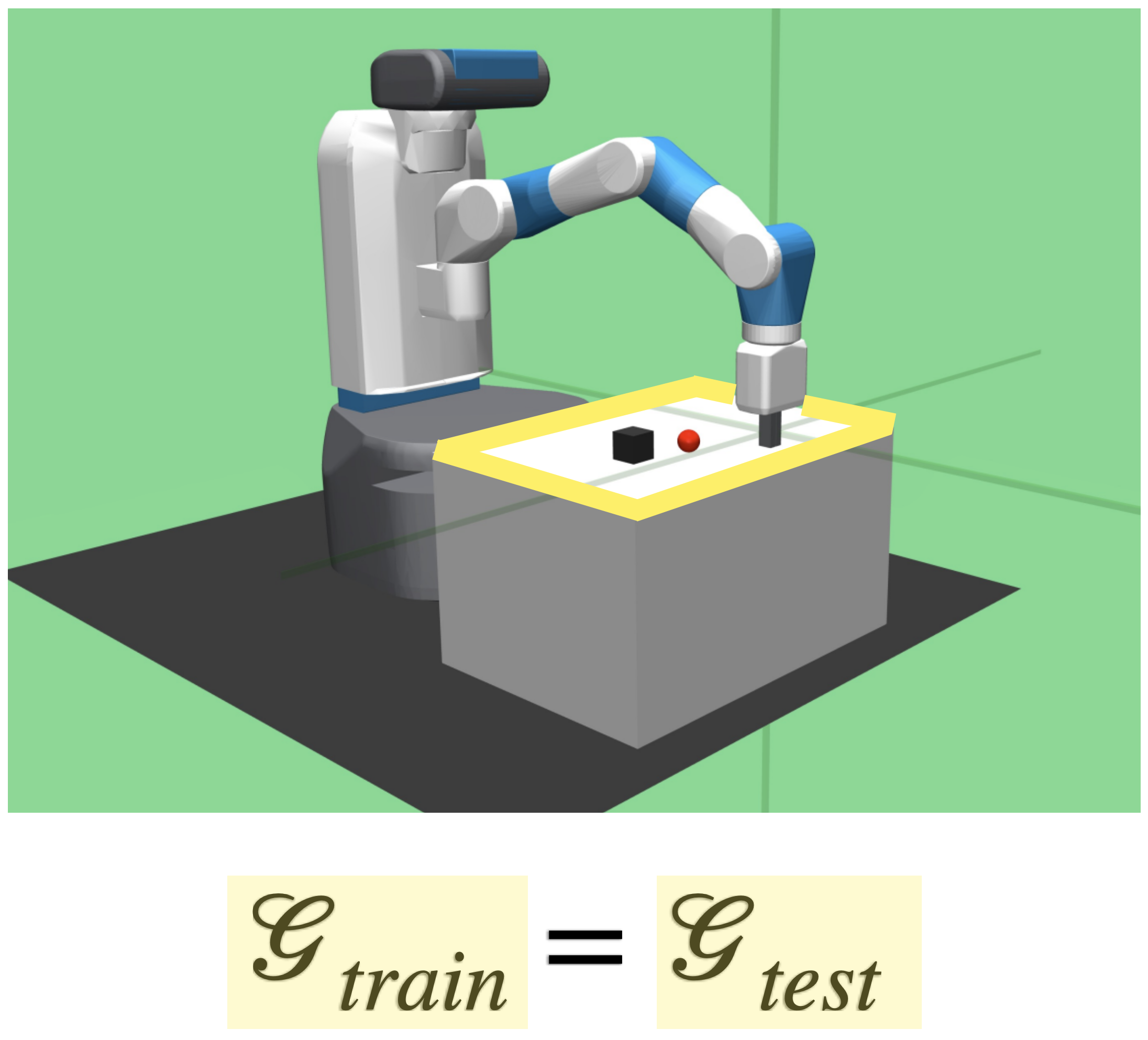}\hspace{0.1in} 
\includegraphics[width=4in]{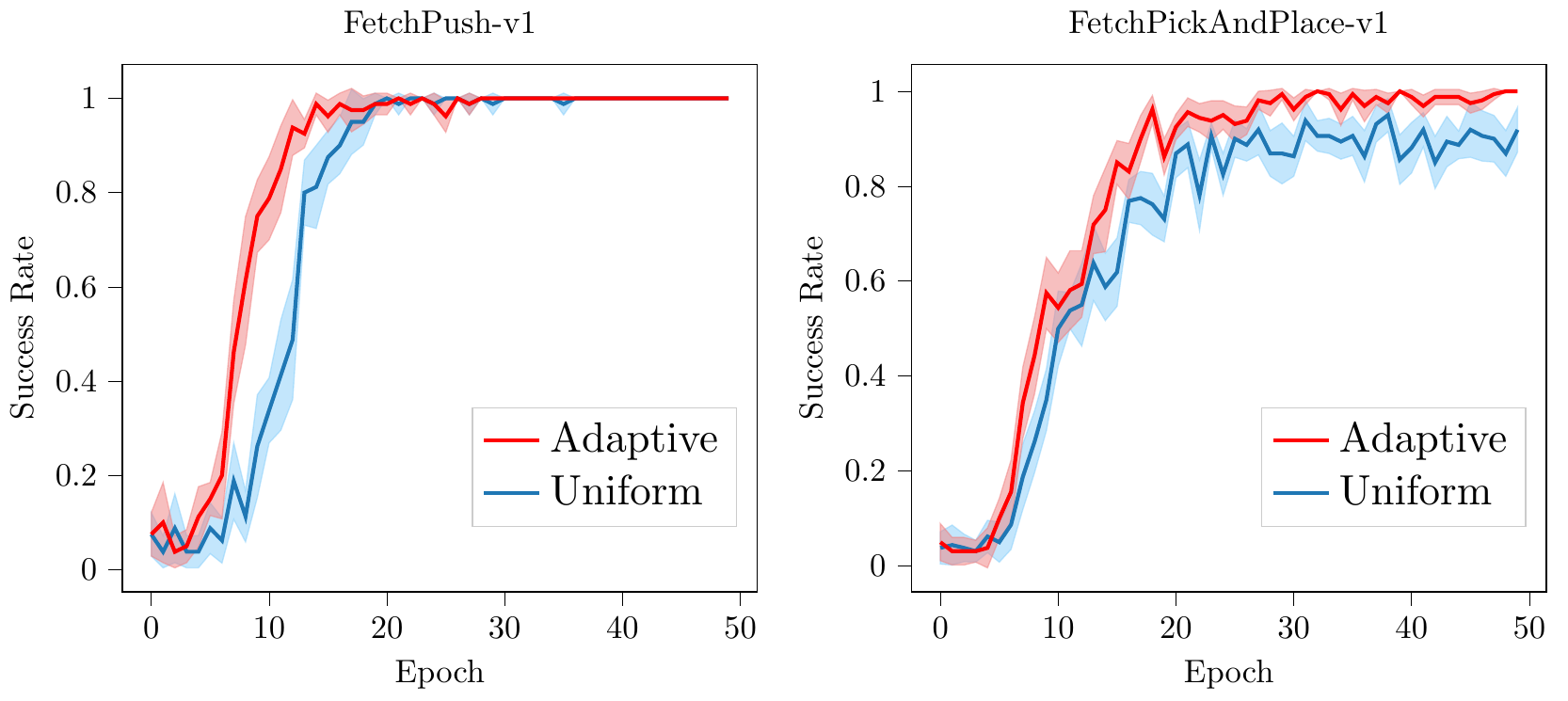} \\
\hspace*{0.16in}~\includegraphics[width=1.6in,trim=0 -4cm 0 0]{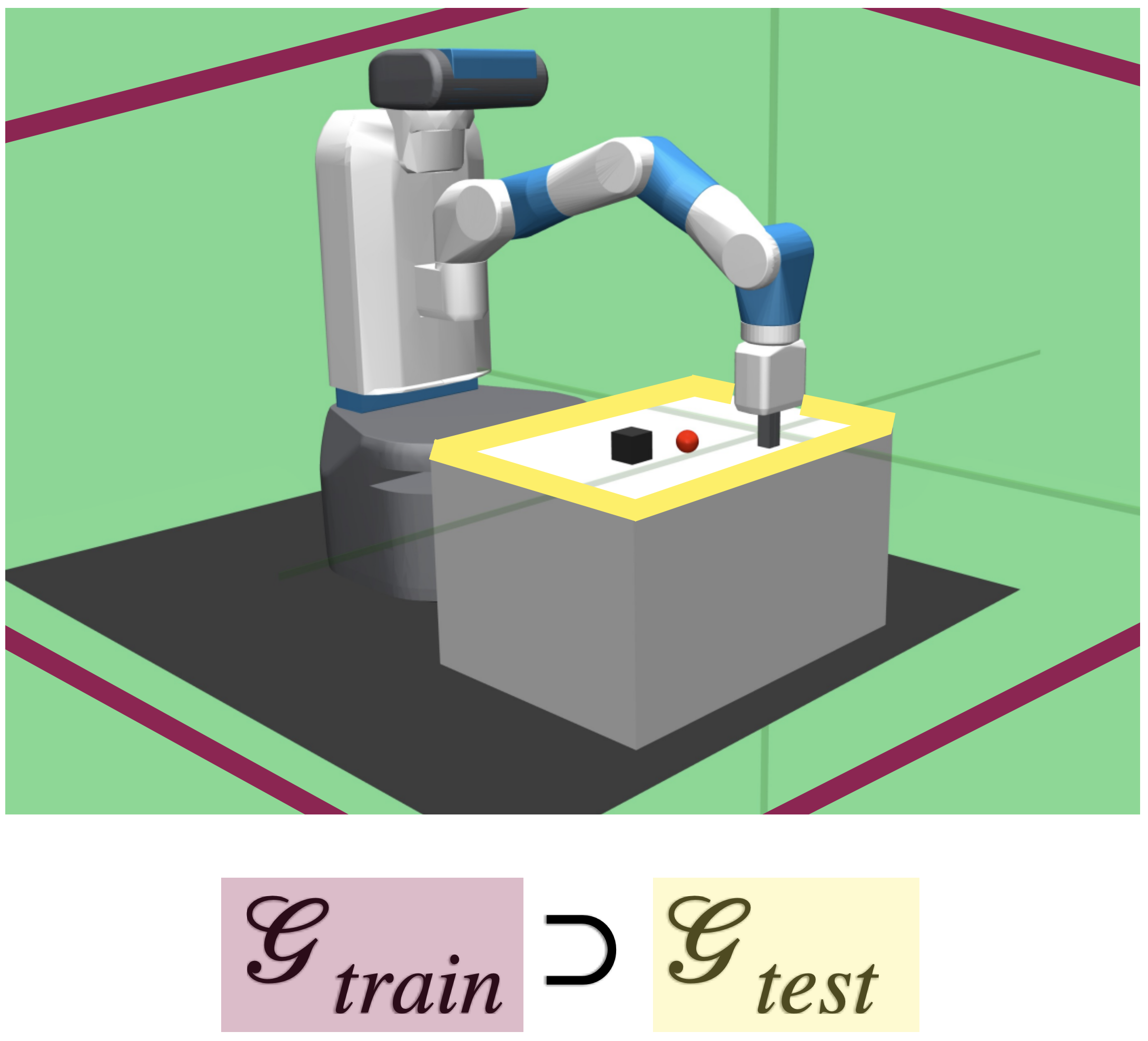} \hspace{0.1in} 
\includegraphics[width=4.16in]{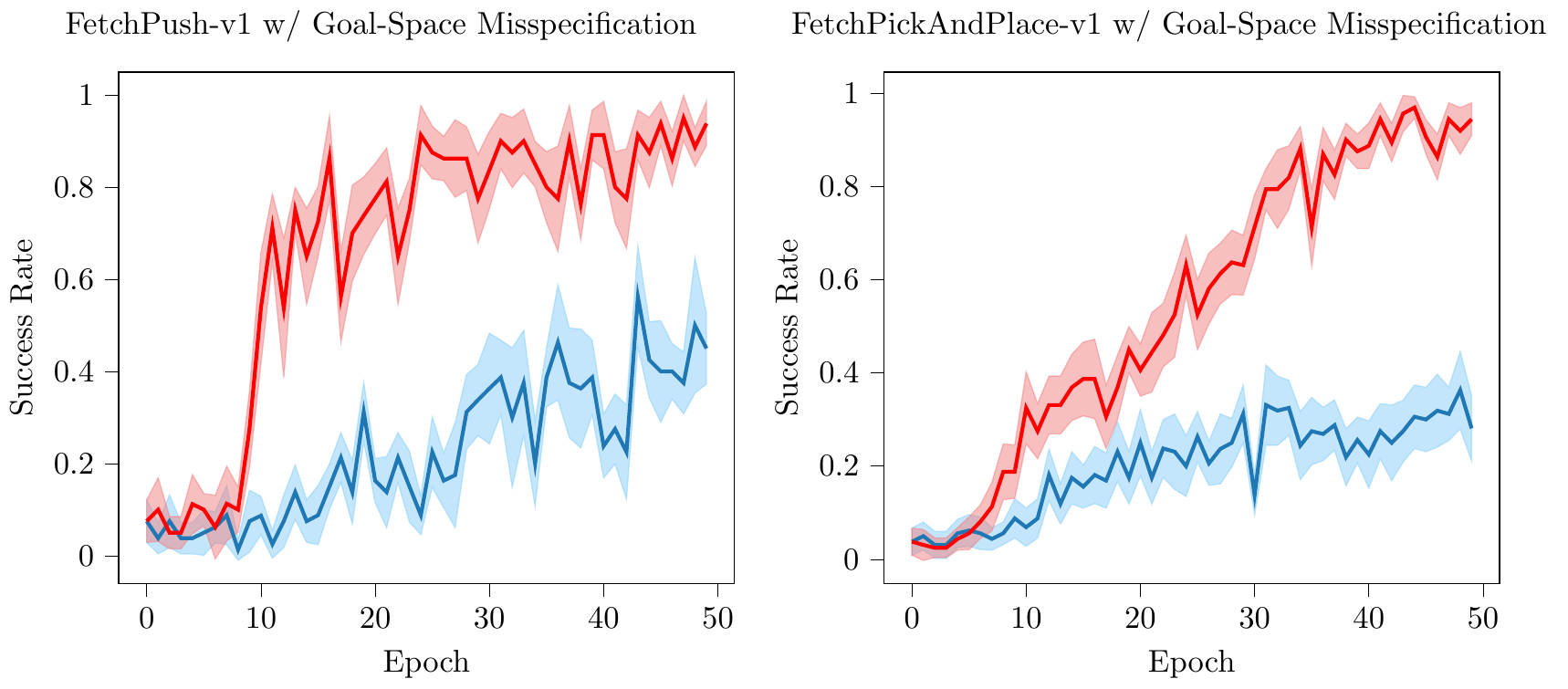}
\vspace{-0.09in}
\caption{Success rate evaluated on $\cG_{\text{test}}$ with the latest policy trained on $\cG_{\text{train}}$. The shaded region represents confidence over 5 random seeds. The adaptive goal sampling scheme improves the learning performance over the uniform sampling of~\HER. This is especially the case in the presence of goal-space misspecification (bottom row), where the training goal space $\cG_{\text{train}}$ (delimited in purple) is larger than the test goal space $\cG_{\text{test}}$ (delimited in yellow).}
\label{fig:deep}
\vspace{-0.02in}
\end{figure*}

\section{OPERATIONALIZING \AdaGoalnormal IN DEEP RL} \label{section_AdaGoal_deepRL}

In this section, we present a way to operationalize the \AdaGoal idea of targeting goals with high ``uncertainty''. We show it can be implemented similar to the deep RL algorithm of \citet{zhang2020automatic}, and we investigate an aspect that was not considered in the latter paper, which pertains to the capability of \AdaGoal to adapt to an unknown target goal set ($\SL$) given a goal set that is possibly misspecified $(\cG)$. 

First, we notice that $\Q$ and $\D$ in Algorithm~\ref{algo} can be learned in practice with a goal-conditioned value-based neural network \citep{schaul2015universal}. Meanwhile, the predictions errors $\E$ can be approximated by the disagreement between an ensemble of goal-conditioned Q-functions. 
Interestingly, this approach has already been investigated in the deep GC-RL algorithm \VDS \citep{zhang2020automatic}, which considers a similar goal-proposal module to prioritize goals that maximize the epistemic uncertainty of the Q-function of the current policy, in order to sample goals at the frontier of the set of goals that the agent is able to reach.

Specifically, we take an ensemble $\{ Q_j\}_{1 \leq j \leq J}$ of $J$ randomly initialized goal-conditioned $Q$-functions and we instantiate $\E(g)$ as the standard deviation of the ensemble's $Q_j$ values conditioned on goal $g$. 
Since computing the maximum over $g \in \cG$ in (\ref{opt_pbm_constrained}) is expensive if the goal space $\cG$ is large, the procedure is replaced by first uniformly sampling a set of candidate goals $\{ g^{(n)}\}_{n=1}^N \subset \cG$, and then selecting a goal $g^{(n)}$ with probability 
\begin{align*}
    q_n &\triangleq \frac{\E(g^{(n)})}{\sum_{n'=1}^N \E(g^{(n')})}, ~ \E(g) \triangleq\!\underset{1 \leq j \leq J}{\textrm{std}}\!\big\{\!\min_{a \in \cA} Q_j(s_0, a , g)\big\}.
\end{align*}
Moreover, approximating $H \approx L$ renders the constraint in (\ref{opt_pbm_constrained_constraint}) always valid so it can be omitted. Hence, this approximation of \AdaGoal exactly recovers the goal sampling scheme of \citet{zhang2020automatic}, which pairs it with Hindsight Experience Replay \citep[\HER,][]{andrychowicz2017hindsight} that performs \textit{uniform} goal sampling. 

We consider the multi-goal environments of FetchPush and FetchPickAndPlace, which are sparse-reward simulated robotic manipulation tasks from OpenAI Gym \citep{plappert2018multi}. We empirically compare the performance of such an adaptive goal selection that prioritizes ``uncertain'' goals and the performance of \HER's uniform goal selection (see Appendix~\ref{app_deepRL_exp} for implementation details). We observe in Figure~\ref{fig:deep} (top row) that the adaptive goal sampling scheme outperforms the uniform one of \HER, which is consistent with the results of \citet{zhang2020automatic}. 

In the above experimental set-up (which is in fact considered by most deep GC-RL works), the goal space $\cG_{\text{train}}$ seen at train time is the same as the goal space $\cG_{\text{test}}$ on which the agent is evaluated at test time, i.e., the white rectangular table. In the language of the previous sections, by relating $\cG_{\text{train}} \leftrightarrow \cG$ and $\cG_{\text{test}} \leftrightarrow \SL$, this means that the environment is considered communicating and $\cG = \cG_L$. However, in some cases, there may be some \emph{misspecification} in the goal space seen during the learning interaction. This may occur if the agent is unaware of the goals of interest, in which case we have that $ \cG_L \subsetneq \cG$, where we recall that $\cG_L$ is a priori unknown. We design an experiment to model this scenario by translating the x-y range of $\cG_{\text{train}}$ by a factor of $\lambda \geq 1$. Specifically, denoting by $(x_0,y_0,z_0)$ the center of the table and letting $r \triangleq 0.15$, we leave $\cG_{\text{test}}$ unchanged yet we expand $\cG_{\text{train}} \supset \cG_{\text{test}}$ as 
\begin{align*}
    \cG_{\text{test}} &\triangleq \big\{ ( x_0 + \mathcal{U}(-r, r), y_0 +  \mathcal{U}(-r, r), z_0) \big\}, \\
    \cG_{\text{train}} &\triangleq \big\{ ( x_0 + \mathcal{U}(-\lambda r, \lambda r), y_0 + \mathcal{U}(-\lambda r, \lambda r), z_0) \big\},
\end{align*}
with $\lambda_{\text{\scalebox{.8}{FetchPush}}} = 10$, $\lambda_{\text{\scalebox{.8}{FetchPickAndPlace}}} = 5$. In this scenario, Figure~\ref{fig:deep} (bottom row) shows that an adaptive goal sampling scheme is particularly pertinent. Intuitively, it enables to discard the set of goals $\cG_{\text{train}} \setminus \cG_{\text{test}}$ that cannot be reached and thus hinder learning when the agent conditions its behavior on them. This empirically corroborates \AdaGoal's (theoretically established) ability to adapt to an unknown target goal set ($\SL$) given a goal set that is possibly misspecified $(\cG)$.




\newpage

\subsection*{Acknowledgement}

We thank Evrard Garcelon, Andrea Tirinzoni and Yann Ollivier for helpful discussion.


\bibliographystyle{plainnat}
\bibliography{bibliography.bib}

\onecolumn
\appendix

\newpage

\part{Appendix}

\parttoc

\renewcommand{\red}[1]{\noindent{\textcolor{black}{#1}}}

\section{SHORTCOMINGS OF ALGORITHMIC DESIGNS OF EXISTING THEORETICAL GC-RL METHODS}\label{app_examples}

Here we corroborate our discussion in Section \ref{sect_intro} that the existing theoretical GC-RL approaches have quite involved algorithmic designs, far from the interpretable alternation of \GS and \PE steps, thus making them poorly amenable to a more practical implementation. We identify three specific algorithmic shortcomings that we describe below. Specifically, \UcbExplore \citep{lim2012autonomous} suffers from Shortcomings 1 and 2, while both \DisCo \citep{tarbouriech2020improved} and \GOSPRL \citep{tarbouriech2021provably} suffer from Shortcoming 3. 
\begin{itemize}
    \item \textit{Shortcoming 1: Cumbersome \PE step.} Once the method fixes a relevant goal state, it has an elaborate policy execution phase, by running numerous executions of the candidate optimistic goal-conditioned policy and checking after each execution whether an empirical performance check is verified to determine whether to continue the policy execution phase or not. 
    
    In contrast, our \AdaGoal-based approach has a simple \PE step, which executes a single rollout of the candidate optimistic goal-conditioned policy for a fixed number of steps before resetting and selecting a (possibly) different, more relevant candidate goal. 
    
    \item \textit{Shortcoming 2: Possibly eliminatory \GS step.} The method establishes a strict separation between goal states that have been identified as reliably reachable by a policy and those that have not. In particular, the goal states belonging to the former category can never be selected again as candidate goals, even if the agent could later discover more promising policies leading to it. 
    
    In contrast, our \AdaGoal-based approach only implicitly targets goal states of intermediate difficulty (without relying on an explicit distinction between goal states that have already been identified as being reliably reachable and the other goal states). This also means that \AdaGoalnormal does not need to formalize a notion of ``fringe'' or ``border'', which can make it more amenable to implementation.
    
    \item \textit{Shortcoming 3: Exhaustive/non-adaptive stopping condition.} The method has a predefined stopping condition, that poorly (or does not) adapts to the samples collected so far. For instance, the method explicitly fixes a minimum number of samples to have collected from \textit{every} state-action pair of interest, or fixes a minimum number of times a candidate goal must be selected. 
    
    In contrast, our \AdaGoal-based approach relies on an \textit{adaptive}, data-driven strategy to select goals and to terminate the algorithm. This may also pave the way for problem-dependent sample complexity guarantees.
\end{itemize}

\section{PROOFS} 
\label{app_proofs}

\begin{proof}[Proof of Equation~\eqref{eq_sandwich_Xk}] Here we take the limit $H \rightarrow + \infty$. Let $g \in \SL$, then Property 1 entails that $\D_k(g) \leq V^{\star}(s_0 \rightarrow g) \leq L$, thus $g \in \Xk$, therefore $\SL \subseteq \Xk$. Now let $g \in \Xk$, then by Property 2 and definition of $\epsilon_k$, we have that $V^{\star}(s_0 \rightarrow g) \leq \D_k(g) + \E_k(g) \leq L + \E_k(g) \leq L + \epsilon_k$, therefore $\Xk \subseteq \SLepsilonk$. 
\end{proof}

\begin{proof}[Proof of Equation~\eqref{eq_kappa_bound}]
    Fix any finite episode index $K < \kappa$, where $\kappa$ denotes the (possibly unbounded) episode index at which \ALGO terminates. By design of \AdaGoal, we have that $\epsilon \leq \E_k(g_k)$ for every $k \leq K$. We assume that $\kappa>1$ otherwise the result is trivially true. Define $\kappa' \triangleq \min\{\kappa-1, K\} \geq 1$. Summing the inequality above yields $\epsilon \cdot \kappa' \leq \sum_{k=1}^{\kappa'} \E_k(g_k) \leq \kappa' \cdot f_{\mathcal{M}}(\kappa')$. This implies that $\epsilon \leq f_{\mathcal{M}}(\kappa')$, in which case $f^{-1}_{\mathcal{M}}(\epsilon) \geq \kappa'$, since $f^{-1}_{\mathcal{M}}$ is strictly decreasing (like $f_{\mathcal{M}}$). Since the last inequality holds for any finite $K < \kappa$, letting $K \rightarrow + \infty$ implies that $\kappa$ is finite and bounded by $f^{-1}_{\mathcal{M}}(\epsilon) + 2$.
\end{proof}

\subsection{Proof of Lemma \ref{lemma_exp_sep}}
We assume throughout that $L \geq 2$ and $\epsilon \in (0, 1]$. On the one hand, Theorem \ref{lemma_UB} proves that \ALGO is $(\epsilon, \delta, L, \cG)$-PAC for \MGE with sample complexity of order $\wt{O}(L^3 S A \epsilon^{-2})$, therefore \MGE is solvable in $\textrm{poly}(S, L, \epsilon^{-1}, A)$ steps (up to poly-log factors). On the other hand, reset-free \MGE is a special case of the cost-free goal-free exploration problem in communicating MDPs studied by \citet{tarbouriech2021provably}, whose algorithm {\small\textsc{GOSPRL}} can solve it in $\textrm{poly}(S, D, \epsilon^{-1}, A)$ steps (up to poly-log factors). Now we prove that there exists an MDP such that any algorithm requires at least $\Omega(D)$ steps to solve the reset-free \MGE problem (i.e., without Assumption~\ref{assumption_reset}), where the MDP's diameter $D$ can be made exponentially larger than $L, S, A, \epsilon^{-1}$. 

To this end, we design in Figure \ref{fig:mdp-separation} a communicating MDP $\mathcal{M}_{a^{\dagger}}$ that does not satisfy Assumption~\ref{assumption_reset}, with $A \geq 4$ actions (including a special action $a^{\dagger} \in \cA$) and $S = 5$ states, where  $\cS \triangleq \{ s_0, g, x, \wt{s}, \ov{s} \} $, and all states apart from $\ov{s}$ are reliably $L$-reachable from $s_0$. 
We define as goal space $\cG \triangleq \{ g \}$. We consider the problem of learning an $\epsilon$-optimal goal-reaching policy with goal state $g$ from the starting state $s_0$, i.e., finding a policy $\pi$ such that $V^{\pi}(s_0 \rightarrow g) \leq V^{\star}(s_0 \rightarrow g) + \epsilon$, which is a sub-problem of the \MGE objective. 

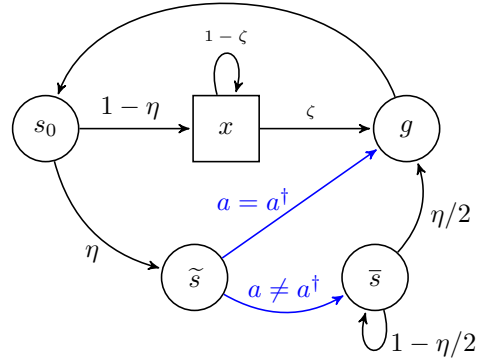
\begin{wrapfigure}[10]{r}{0.42\linewidth}
\flushright
\vspace{-0.4in}
\begin{tikzpicture}[->,>=stealth',shorten >=1pt,auto,node distance=2.8cm, semithick]
\tikzstyle{every state}=[fill=white,text=black]

\node[state]                (S0)                  {$s_0$};
\node[state,rectangle]      (X) [right=1.5cm of S0]    {$x$};
\node[state]                (G) [right=1.5cm of X]    {$g$};
\node[state, fill=white]    (ST) [below right of=S0] {$\wt{s}$};
\node[state, fill=white]    (OS) [below right of=X] {$\ov{s}$};


\path  
    (G)  edge[bend right=70]   (S0)
	(S0)  edge node[xshift=-0.5ex]{$1-\eta$} (X)
	(X)  edge node[xshift=-0.5ex]{{\tiny $\zeta$}} (G)
	(X)  edge [in=20,out=20,loop above,yshift=-1.5ex] node{{\tiny $1 - \zeta$}} (X)  
	(S0)  edge[bend right]   node[below,yshift=-0.5ex]{$\eta$} (ST)
	(ST)  edge[blue] node[yshift=-1.5ex]{$\textcolor{blue}{a = a^{\dagger}}$}  (G)
	(ST)  edge[bend right,blue] node{$\textcolor{blue}{a \neq a^{\dagger}}$}  (OS)
	(OS)  edge [loop below] node[right,xshift=0.5ex,yshift=0.5ex]{$1 - \eta/2$} (OS)  
	(OS)  edge [bend right] node[right,yshift=-0.5ex]{$\eta/2$} (G)  
	;

\end{tikzpicture}
 \vspace{-0.15in}
 \captionof{figure}{Illustration of the MDP instance~$\mathcal{M}_{a^{\dagger}}$.}
    \label{fig:mdp-separation}
\end{wrapfigure}%
Given $\eta \in (0, 1)$ and an unknown action $a^{\dagger} \in \cA$, we define the following transition probabilities for all $a \in \cA$, 
\begin{align*}
    &p(\wt{s} \vert s_0, a) \triangleq \eta, \quad \quad \quad \quad \quad p(x \vert s_0, a) \triangleq 1 - \eta, \\
    &p(g \vert x, a) \triangleq \zeta, \quad \quad \quad \quad \quad ~ p(x \vert x, a) \triangleq 1 - \zeta, \\
    &p(g \vert \wt{s}, a) \triangleq \mathds{1}[a = a^{\dagger}], \quad \quad p(\ov{s} \vert \wt{s}, a) \triangleq \mathds{1}[a \neq a^{\dagger}], \\
    &p(g \vert \ov{s}, a) \triangleq \frac{\eta}{2}, \quad \quad \quad \quad \quad \, p(\ov{s} \vert \ov{s}, a) \triangleq 1 - \frac{\eta}{2}, \\
    &p(s_0 \vert g, a) \triangleq 1,
\end{align*}
where we can set any $\zeta = O(1/L)$ such that $g$ is reliably $L$-reachable from $s_0$ (i.e., $V^{\star}(s_0 \rightarrow g)$). It is easy to see that the MDP's diameter verifies $D = \alpha \eta^{-1}$ for a constant $\alpha > 0$. Finally, we denote by $\mathcal{F}$ the family of MDPs of the form of Figure \ref{fig:mdp-separation} parameterized by $a^{\dagger} \in \{1, \ldots A\}$, i.e., $\mathcal{F} \triangleq \{ \mathcal{M}_{a^{\dagger}} \}_{a^{\dagger} \in \{1, \ldots A\}} $.

We define the event $\mathcal{J}_t$ that state $\wt{s}$ has never been visited by the agent by time $t$ (recalling that it initially starts at state $s_0$), i.e., $\mathcal{J}_t \triangleq \{ n^t(\wt{s}) = 0 \}$ (note that its probability is the same for all MDPs in $\mathcal{F}$). We now fix an MDP $\mathcal{M}_{a^{\dagger}}$ and denote by $\pi^{\star}$ its optimal policy, i.e., $\pi^{\star} \in \argmin_{\pi} V^{\pi}(\cdot \rightarrow g)$, which in particular selects action $a^{\dagger}$ at state $\wt{s}$. First, we consider any deterministic algorithm $\mathfrak{A}$ whose candidate policy $\wh{\pi}$ does not select action $a^{\dagger}$ when it is in state $\wt{s}$. Then it holds that
\begin{align}
     &V^{\wh{\pi}}(\wt{s} \rightarrow g) = 1 + \sum_{y \in \cS} p(y \vert s_0, \wh{\pi}(\wt{s})) V^{\wh{\pi}}(y \rightarrow g) \geq 1 + V^{\wh{\pi}}(\ov{s} \rightarrow g) = 1 + \frac{2}{\eta},  \label{eq_sep_1} \\
     &V^{\wh{\pi}}(\wt{s} \rightarrow g) -  V^{\pi^{\star}}(\wt{s} \rightarrow g) \geq \frac{2}{\eta} , \nonumber \\
    &V^{\pi}(s_0 \rightarrow g) = 1 + \eta V^{\pi}(\wt{s} \rightarrow g) + (1-\eta) V^{\pi}(x \rightarrow g), \quad \forall \pi, \nonumber \\ 
    &V^{\wh{\pi}}(s_0 \rightarrow g) - V^{\pi^{\star}}(s_0 \rightarrow g) = (1-\eta) \big(  \underbrace{ V^{\wh{\pi}}(x \rightarrow g) -  V^{\pi^{\star}}(x \rightarrow g)}_{\geq 0} \big) + \eta \big(   \underbrace{V^{\wh{\pi}}(\wt{s} \rightarrow g) -  V^{\pi^{\star}}(\wt{s} \rightarrow g) }_{\geq 2/\eta}\big) \geq 2 > \epsilon, \label{eq_sep_2}
\end{align}%
thus $\wh{\pi}$ has a sub-optimality gap larger than $\epsilon$. This means that under the event $\mathcal{J}_t$, where the algorithm cannot know which is the favorable action $a^{\dagger}$, it holds that with probability at least $1 - \frac{1}{A} \geq \frac{3}{4}$, any deterministic algorithm's candidate policy $\wh{\pi}$ does not select the action $a^{\dagger}$ at state $\wt{s}$ thus its value function is not $\epsilon$-optimal. Second, we note that we can easily extent to the case where $\mathfrak{A}$ outputs stochastic actions at state $\wt{s}$. Given that $a^{\dagger}$ is unknown, in the best case scenario it can set $\wh{\pi}(\cdot \vert \wt{s}) = 1/A$. Then we can retrace the reasoning above and replace \eqref{eq_sep_1} with $V^{\wh{\pi}}(\wt{s}) \geq 1 + \frac{A-1}{A} \frac{2}{\eta}$, and thus \eqref{eq_sep_2} with $V^{\wh{\pi}}(s_0) - V^{\pi^{\star}}(s_0) \geq 2 \frac{A-1}{A} > 1 \geq \epsilon$ since $A > 2$, which leads to the same result that $\wh{\pi}$ is not $\epsilon$-optimal on at least one of the MDPs in $\mathcal{F}$.

Now we study how the probability of the event $\mathcal{J}_t$ evolves over time $t$, i.e. we bound the time required to visit $\wt{s}$ at least once, which we denote by $\wt{T}$. Recall that $\wt{s}$ can only be reached  with probability $\eta$ from $s_0$ following any action. The random variable $\wt{T}$ can be seen as an upper bound of a random variable distributed geometrically with success probability $\eta$, thus Chernoff's inequality entails that with probability at least $\frac{1}{2}$ we have $\wt{T} \geq \frac{1}{9 \eta}$, i.e., the event $\mathcal{J}_{t}$ for $t = \lfloor 1 / (9 \eta) \rfloor$ holds with probability at least $\frac{1}{2}$.

Putting everything together, there exists an MDP in $\mathcal{F}$ such that with probability at least $\frac{1}{4}$, the number of time steps required to output a candidate policy that is $\epsilon$-optimal for goal state $g$ is at least $\frac{1}{9 \eta} = \Omega(D)$, where $\eta$ can be made arbitrarily small, so in particular $D$ can be exponentially larger than $L$. Hence in this MDP instance, no algorithm can solve \MGE in $\textrm{poly}(L)$ steps with overwhelming probability. While here we considered $S = 5$ for simplicity, note that the result easily holds for any $S \geq 5$ by replacing the state $x$ in the construction above with a set of $S - 4$ states; the only property that must be still verified is that $g$ remains reliably $L$-reachable from $s_0$.

Therefore, for any $L \geq 2, S \geq 5, A \geq 4, \epsilon \in (0, 1]$, there exists an MDP instance and a goal space for which any algorithm requires at least $\Omega(D)$ time steps to solve reset-free \MGE, where the diameter $D$ can be exponentially larger than $L, S, A, \epsilon^{-1}$, which concludes the proof of Lemma \ref{lemma_exp_sep}.


\def \histpi {\bm{\pi}}
\def \recpolicy {\widehat{\pi}}
\def \tsteps {\tau}
\newcommand{\prob}[2][]{ \mathbb{P}_{#1} \left[ #2 \right] }
\newcommand{\MDP}{\mathcal{M}}
\newcommand{\expect}[2][]{ \mathbb{E}_{#1} \left[ #2 \right] }

\subsection{Proof of Lemma \ref{lemma_LB}}

In the following, we will prove in Lemma \ref{BPI_SSP} a lower bound on the expected number of exploration steps to find an $\epsilon$-optimal SSP policy from $s_0$ for a specific goal state $g$ that is reliably $L$-reachable from $s_0$. Such BPI-SSP objective corresponds to our \MGE objective with goal space $\cG = \{ g \}$ and it thus induces a lower bound on the \MGE problem, which will conclude the proof of Lemma \ref{lemma_LB}.

\textit{Notation.} We largely follow the notations and definitions of \cite{domingues2021episodic}. We define an RL algorithm as a history-dependent policy $\histpi$ used to interact with the environment. In the BPI setting, where we eventually stop and recommend a policy, an algorithm is defined as a triple $(\histpi, \tsteps, \wh{\pi}_\tsteps)$ where $\tsteps$ is a stopping time and $\wh{\pi}_\tsteps$ is a Markov policy recommended after $\tsteps$ time steps. We now write more formally our objective. 

\begin{definition}[BPI-SSP] An algorithm $(\histpi, \tsteps, \recpolicy_\tsteps)$ is $(\epsilon, \delta,L)$-PAC for best-policy identification in a stochastic shortest path problem satisfying Assumption~\ref{assumption_reset} (BPI-SSP) if $V^{\star}(s_0) \leq L$ and if the policy $\recpolicy_\tsteps$ returned after $\tsteps$ time steps satisfies, for the initial state $s_0$,
\begin{align*}
    \prob[\histpi, \MDP]{V^{\recpolicy_\tsteps}(s_0) - V^*(s_0)\leq \epsilon } \geq 1-\delta,
\end{align*}
where we denote the goal state by $g$ and the SSP value of any policy $\pi$ at any state $s$ by $V^{\pi}(s) \triangleq \expect[\histpi, \MDP]{\inf \{ i \geq 0: s_{i+1} = g \}\,\vert\, s_1 = s}$, and $V^{\star}(s) \triangleq \min_{\pi} V^{\pi}(s)$.
\end{definition}

\begin{lemma}[BPI-SSP Lower Bound] \label{BPI_SSP} There exist absolute constants $L_0, S_0, A_0, \epsilon_0, \delta_0$ such that, for any $L \geq L_0,\, A\geq A_0 ,\, \epsilon \leq \epsilon_0,\, \delta\leq \delta_0$ and $S_0 \leq S \leq A^{\frac{L}{3}-2}$, and for any algorithm $(\histpi, \tsteps, \recpolicy_\tsteps)$ that is $(\epsilon, \delta,L)$-PAC for BPI-SSP in any finite MDP with $S$ states and $A$ actions, there exists an MDP $\MDP$ with a goal state belonging to $\SL$ and an absolute constant $\beta$ such that
\begin{align*}
    \expect[\histpi, \MDP]{\tsteps}
    \geq
    \beta
    \frac{L^3SA}{\epsilon^2} \log\left(\frac{1}{\delta}\right).
\end{align*}
\end{lemma}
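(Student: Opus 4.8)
The plan is to establish Lemma~\ref{BPI_SSP} (from which Lemma~\ref{lemma_LB} follows, since BPI-SSP is the special case of \MGE with goal space $\cG=\{g\}$) by transporting the episodic lower bound of \citet{domingues2021episodic} to the stochastic shortest path setting, with the shortest-path range $L$ playing the role of the finite horizon $H$. The only genuinely SSP-specific work is to replace the automatic ``end of episode'' by a goal state $g$ reached through a long chain and to check that $g\in\SL$. Concretely, I would build a family $\{\MDP_\theta\}_{\theta\in\cA^{N}}$ of MDPs with $\Theta(S)$ states, $A$ actions, the reset action $\areset$ (so Assumption~\ref{assumption_reset} holds), and a shared goal $g$ with $p(s_0\,|\,g,\cdot)=1$. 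Each $\MDP_\theta$ consists of a deterministic chain from $s_0$ of length $\Theta(L)$ into a ``routing gadget'' --- an \emph{uncontrolled} uniform random walk down an $A$-ary tree of depth $\lceil\log_A N\rceil$ whose leaves are the $N\triangleq\lfloor S/c_0\rfloor$ ``hard states'' $s_1,\dots,s_N$ --- followed, at each hard state $s_j$, by an $A$-armed gadget in which the distinguished action $\theta(j)$ reaches a length-$O(1)$ ``fast lane'' to $g$ with probability $\tfrac12+\Delta$ (else a deterministic ``slow lane'' of length $\Theta(L)$), while every other action uses probability $\tfrac12$. Setting $\Delta\triangleq c_1\,\epsilon/L$ and choosing the chain, tree depth and slow-lane length with the right absolute constants (this is exactly where the hypothesis $S\le A^{L/3-2}$ is used: it makes the tree depth $O(L)$ so that the whole construction fits within range $L$), one verifies $V^{\star}_{\MDP_\theta}(s_0)=\Theta(L)\le L$ for $L\ge L_0$, hence $g\in\SL$; moreover reaching any hard state requires $\Omega(L)$ steps of traversal from $s_0$, and a hard state is never revisited without first returning to $s_0$.

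I would then combine the two usual ingredients. \textbf{(a) From PAC to identification.} Since the routing is uncontrolled, under \emph{every} policy each $s_j$ is reached from $s_0$ with probability $\Theta(1/N)$, so selecting a non-distinguished action at $s_j$ inflates $V^{\recpolicy}(s_0)-V^{\star}(s_0)$ by $\Theta(\Delta\cdot L/N)=\Theta(\epsilon/N)$. Tuning $c_1$ so that this per-mistake cost exceeds $c\,\epsilon/N$ for a large absolute constant $c$, any policy that is $\epsilon$-optimal from $s_0$ must be correct at a $(1-\Theta(1/c))$-fraction of the hard states; hence an $(\epsilon,\delta,L)$-PAC algorithm must, with probability $\ge1-\delta$, identify the distinguished action at $\Omega(N)=\Omega(S)$ of the hard states, i.e.\ solve $\Omega(S)$ $A$-armed best-arm-identification problems whose Bernoulli gaps are all equal to $\Delta$. \textbf{(b) Information-theoretic cost.} Using the divergence decomposition of \citet{domingues2021episodic}, valid for the (possibly random) stopping time $\tsteps$ via optional stopping --- for $\theta,\theta'$ differing only at $s_j$,
\begin{align*}
\KL\!\big(\prob[\histpi,\MDP_\theta]{\cdot}\,\big\|\,\prob[\histpi,\MDP_{\theta'}]{\cdot}\big)\;\le\;\Theta(\Delta^{2})\cdot\expect[\histpi,\MDP_\theta]{n_{\tsteps}(s_j,\cdot)},
\end{align*}
where $n_{\tsteps}(s,a)$ counts visits to $(s,a)$ before stopping and $n_{\tsteps}(s_j,\cdot)=\sum_{a}n_{\tsteps}(s_j,a)$ --- together with the transportation (change-of-measure) inequality and the standard $A$-armed best-arm-identification lower bound applied at each of the $\Omega(S)$ identified hard states, I would conclude that there exists $\theta\in\cA^{N}$ with $\sum_{j}\expect[\histpi,\MDP_\theta]{n_{\tsteps}(s_j,\cdot)}=\Omega\!\big(SA\,\Delta^{-2}\log(1/\delta)\big)$. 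Finally, since every visit to a hard state is preceded by $\Omega(L)$ distinct steps (a traversal of the chain and routing gadget from $s_0$) that cannot be shared with another hard-state visit, $\expect[\histpi,\MDP_\theta]{\tsteps}\ge\Omega(L)\cdot\sum_{j}\expect[\histpi,\MDP_\theta]{n_{\tsteps}(s_j,\cdot)}=\Omega\!\big(L\,SA\,\Delta^{-2}\log(1/\delta)\big)=\Omega\!\big(L^{3}SA\,\epsilon^{-2}\log(1/\delta)\big)$, using $\Delta=\Theta(\epsilon/L)$; choosing $L_0,S_0,A_0,\epsilon_0,\delta_0$ large enough (resp.\ small enough) so that all the constant-factor inequalities above hold produces the absolute constant $\beta$.

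The hard part is step (b): one must show that an $\epsilon$-PAC algorithm is forced to \emph{individually} solve $\Omega(S)$ of the per-hard-state bandit problems, each at confidence level $\sim\delta$ (this is what yields the $\log(1/\delta)$ factor together with the full $SA$ factor), \emph{even though} getting any single hard state wrong only costs $\Theta(\epsilon/N)\ll\epsilon$. This requires the careful Assouad-style / change-of-measure bookkeeping over the $\cA^{N}$ family that accounts for the $1/N$ visitation damping, exactly as in \citet{domingues2021episodic}. The remaining SSP-specific adjustments --- fitting the routing gadget within range $L$ using $S\le A^{L/3-2}$ while keeping $V^{\star}(s_0)\le L$ and the per-mistake cost $\Theta(\epsilon/N)$, and handling the random stopping time --- are routine but must be carried out with explicit constants to produce the thresholds $L_0,S_0,A_0,\epsilon_0,\delta_0$.
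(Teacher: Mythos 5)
Your high-level plan matches the paper's: adapt the episodic BPI lower bound of \citet{domingues2021episodic}, and win the extra factor of $L$ (upgrading $\Delta^{-2}=\Theta(L^2/\epsilon^2)$ to $L^3/\epsilon^2$) by charging $\Omega(L)$ traversal steps from $s_0$ for every informative visit to a decision state --- the paper does this with a geometric holding time at $s_0$ (probability $q=1/H$ of leaving) plus Wald's equation, you with a deterministic chain; both work. The genuine gap is in your step (b). By making the routing to the $N=\Theta(S)$ hard states \emph{uncontrolled} and planting an independent needle at every hard state, you reduce the per-mistake cost to $\Theta(\epsilon/N)$ and are forced into an Assouad-type argument in which the recommended policy only needs to be correct at a $(1-\Theta(1/c))$-fraction of the hard states. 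That requirement is provably too weak to force $\sum_j\expect{n_\tsteps(s_j,\cdot)}=\Omega\big(SA\Delta^{-2}\log(1/\delta)\big)$: an algorithm can estimate each of the $N$ bandit subproblems from its own disjoint (hence independent) data at a fixed \emph{constant} confidence using $O(A\Delta^{-2})$ visits per hard state, and a Chernoff bound over the $N$ independent per-state error events already drives the probability of erring at more than a $\gamma$-fraction below $\delta$ as soon as $N\gtrsim\log(1/\delta)$. So your family admits an $(\epsilon,\delta,L)$-PAC algorithm using only $O(L^3SA\epsilon^{-2}\log S)$ steps in the regime $S\gtrsim\log(1/\delta)$, and therefore cannot certify the claimed bound with the multiplicative $\log(1/\delta)$. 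Your appeal to \citet{domingues2021episodic} for this bookkeeping is also misplaced: they do not run an Assouad argument over $\cA^N$; they plant a \emph{single} favorable (state, action) pair among $\Theta(SA)$ candidates reachable through a \emph{controlled} $A$-ary tree, so that missing the needle costs the full $\epsilon$ and every candidate position must be ruled out at confidence $1-\delta$, which is exactly what yields the product $SA\log(1/\delta)$.

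The fix is to follow the paper: keep your chain/tree/gadget skeleton but let the agent choose its path down the tree, put the biased action at a single unknown decision state, and run the single-needle change of measure against the reference MDP with no needle (the paper does this verbatim for $S=4$ with one decision state $s_d$ and defers the $S$-fold extension to \citet[Theorem 7]{domingues2021episodic}). Your traversal-cost accounting --- or the paper's Wald's-equation step, which is the clean way to handle the stopping time $\tsteps$ being measured in steps rather than episodes --- then converts $\Omega(SA\Delta^{-2}\log(1/\delta))$ decision-state visits into $\Omega(L^3SA\epsilon^{-2}\log(1/\delta))$ steps. The rest of your sketch --- the choice $\Delta=\Theta(\epsilon/L)$, the role of $S\le A^{L/3-2}$ in keeping the tree depth $O(L)$ so that $g\in\SL$, and the reduction of Lemma~\ref{lemma_LB} to the single-goal BPI-SSP statement --- is consistent with what the paper does.
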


\begin{proof}[Proof of Lemma \ref{BPI_SSP}] ~ 

\vspace{0.05in}

\newcommand{\geqODD}{\mathrel{\stackrel{\makebox[0pt]{\mbox{\normalfont\tiny \textcolor{pearDark}{(D)}}}}{\,\geq\,}}}

\noindent \begin{minipage}{0.53\linewidth}
We first define our family of hard MDPs for $S=4$ states, and the extension to any $S$ states can be done as in \cite{domingues2021episodic} as explained later. Consider the hard MDP illustrated in Figure~\ref{fig:mdp-lower-bound}, where all states incur a cost of $1$ apart from the goal state $s_g$ (a.k.a.\,``good'' state). The agent stays in the initial state $s_0$ with probability $1-q$, and goes to a \emph{decision state} $s_d$ with probability $q$. For any action $a$ taken in $s_d$, the agent reaches $s_g$ with probability $1/2$ and a ``bad'' state $s_b$ with probability $1/2$, except if an action $a^{\star}$ is chosen, that increases to $1/2+\wt{\epsilon}$ the probability of reaching $s_g$. From $s_b$, the agent returns to the initial state $s_0$ with probability $1$. The goal state $s_g$ is absorbing, and the agent stays there unless the reset action is taken, which brings the agent back to $s_0$. Note that the MDP satisfies Assumption~\ref{assumption_reset} (the arrows of the reset action from $s_d$ to $s_0$ and from $s_g$ to $s_0$ are not represented in Figure~\ref{fig:mdp-lower-bound} for visual convenience). Moreover, we define the following parameters
\begin{align*}
    H \triangleq \lceil \frac{L}{2} - 1 \rceil, \quad q \triangleq 1/ H, \quad \wt{\epsilon} \triangleq \frac{\epsilon}{2 (H+1)}.
\end{align*}
\end{minipage}%
\hfill%
\begin{minipage}{0.44\linewidth}
    \flushright
    \begin{tikzpicture}[->,>=stealth',shorten >=1pt,auto,node distance=2.8cm, semithick]
\tikzstyle{every state}=[fill=white,text=black]

\node[state]                (W)                  {$s_0$};
\node[state]                (A) [below=1cm of W]    {$s_d$};
\node[state, fill=white]    (B) [below right of=A] {$s_g$};
\node[state, fill=white]    (C) [below left  of=A] {$s_b$};

\node[state, fill=white,draw=none]    (RG) [right=0.25cm of B,xshift=-1.75ex] {cost = $0$};

\path  
	(W)  edge node{$q$} (A)
	(W)  edge [loop above] node{$1-q$} (W)  
	(A)  edge[blue]   node{$\textcolor{blue}{\frac{1}{2}+\wt{\epsilon}} $} (B)
	(A)  edge[bend right,dashed]   node[below]{$\frac{1}{2}$} (B)
	(A)  edge[bend left,dashed]   node[below]{$\frac{1}{2}$} (C)
	(A)  edge[blue]   node[above left,xshift=1ex]{$\textcolor{blue}{\frac{1}{2}-\wt{\epsilon}}$} (C)
	(B)  edge [loop below] node{1} (B)  
	(C)  edge [bend left] node{1} (W)    
	;

\end{tikzpicture}

    \vspace{-0.05in}
    
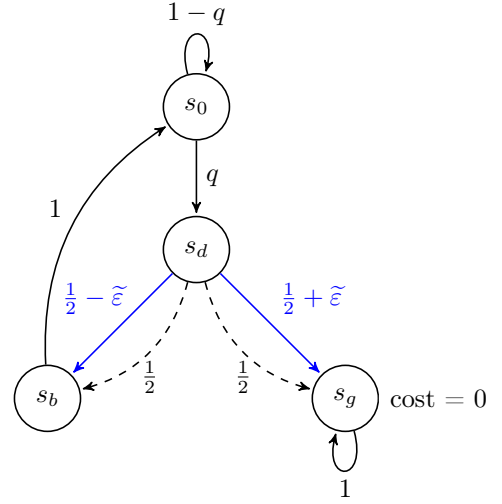
\captionof{figure}{Illustration of our hard MDP.} 
    \label{fig:mdp-lower-bound}
\end{minipage}%

\vspace{0.1in}
Note that this hard MDP instance is inspired from hard MDPs used in prior lower-bound constructions \citep[see e.g.,][]{lattimore2012pac, domingues2021episodic}, albeit with slight modifications. Indeed, a key difference with respect to the discounted MDP setting \citep{lattimore2012pac} or the finite-horizon MDP setting \citep{domingues2021episodic} is that in our case, the agent has access to an anytime reset action (Assumption~\ref{assumption_reset}). This implies that we cannot do as prior works that rely on absorption properties of states in the MDP (e.g., the ``good'' and ``bad'' states $s_b$ and $s_g$) in order to compound errors and add an extra effective horizon term (either $1/(1-\gamma)$ or $H$) in the sample complexity (i.e., to go from quadratic to cubic). The only absorption property we can rely on here is at the initial state $s_0$. It turns out that this will be sufficient in our setting to compound error and go from $L^2$ to $L^3$ dependence. The intuition for this is that the SSP value function generates a cost of $+1$ at each time step until the goal state is reached, which compounds errors more than the usual reward-based value function in the sparse-reward MDP constructions of \citet{lattimore2012pac, domingues2021episodic}.

We consider a family of MDPs of the form of Figure \ref{fig:mdp-lower-bound}, parameterized by $a^{\star} \in \{1, \ldots A\}$, where we denote by $\MDP_{a^{\star}}$ the MDP such that $a^{\star}$ increases the probability by $\wt{\epsilon}$ of reaching the goal state $s_g$ from state $s_d$. For any policy $\pi$ we denote its SSP value (with goal state $s_g$) at state $s$ in $\MDP_{a^{\star}}$ by $V_{a^{\star}}^{\pi}(s)$. 

We also define a reference MDP $\MDP_0$, where $\wt{\epsilon} = 0$, that is, there is no special action increasing the probability of reaching the goal state $s_g$. We denote by $\prob[a^{\star}]{\cdot}$ and $\expect[a^{\star}]{\cdot}$ the probability measure and the expectation in the MDP $\MDP_{a^{\star}}$ by following the algorithm $\histpi$ and by $\prob[0]{\cdot}$ and $\expect[0]{\cdot}$ the corresponding operators in $\MDP_0$.

The optimal value function does not depend on the MDP parameter $a^{\star}$, and for any MDP $\MDP_{a^{\star}}$ we get
\begin{align*}
    & V^{\star}(s_0) = 
    \frac{1}{q}
    +
    \Big(\frac{1}{2}+\wt{\epsilon}\Big)
    +
    \Big(1 - \frac{1}{2} -\wt{\epsilon}\Big)
    \big(1+ V^{\star}(s_0)\big) \\
    & \implies V^{\star}(s_0) =  \Big( \frac{1}{q} + 1 \Big)\frac{1}{1/2+\wt{\epsilon}}.
\end{align*}
Note that by our choice of $q$, it importantly holds that $V^{\star}(s_0) \leq L$, i.e., $s_g \in \SL$.

Meanwhile, the value function of the recommended policy $\recpolicy_\tsteps$ in $\MDP_{a^{\star}}$ is 
\begin{align*}
    V_{a^{\star}}^{\recpolicy_\tsteps}(s_0) =
    \Big( \frac{1}{q} + 1 \Big)\frac{1}{1/2+\wt{\epsilon} \cdot \recpolicy_\tsteps( a^{\star} \vert s_d) }.  
\end{align*}
As a result, 
\begin{align*}
     V_{a^{\star}}^{\recpolicy_\tsteps}(s_0) - V^{\star}(s_0) = \Big( \frac{1}{q} + 1 \Big) \frac{\wt{\epsilon}(1 - \recpolicy_\tsteps( a^{\star} \vert s_d))}{(1/2+\wt{\epsilon})(1/2+\wt{\epsilon} \cdot \recpolicy_\tsteps( a^{\star} \vert s_d))} \leq \underbrace{4 \Big( \frac{1}{q} + 1 \Big) \wt{\epsilon}}_{= 2 \epsilon} \cdot (1 - \recpolicy_\tsteps( a^{\star} \vert s_d)),
\end{align*}
and thus 
\begin{align*}
     V_{a^{\star}}^{\recpolicy_\tsteps}(s_0) - V^{\star}(s_0) < \epsilon \iff \recpolicy_\tsteps( a^{\star} \vert s_d) > \frac{1}{2}.
\end{align*}
We observe that this analysis of the suboptimality gap of $\recpolicy_\tsteps$ in terms of SSP value functions can be mapped to the one of \citet[][Proof of Theorem 7]{domingues2021episodic} for their finite-horizon value functions, despite the different MDP constructions. This means that we can retrace the steps of \citet[][Proof of Theorem 7]{domingues2021episodic}. In the following, we use the notation $\geqODD$ to signify that the inequality stems from following the same corresponding steps of \citet[][Proof of Theorem 7]{domingues2021episodic}. In particular, we similarly define the event 
\begin{align*}
    \mathcal{E}^{\tsteps}_{a^{\star}} \triangleq \Big\{ \recpolicy_\tsteps( a^{\star} \vert s_d) > \frac{1}{2} \Big\},
\end{align*}
and since the algorithm is assumed to be $(\epsilon, \delta, L)$-PAC for BPI-SSP for any MDP, we have 
\begin{align*}
    \mathbb{P}_{a^{\star}}\Big[ \mathcal{E}^{\tsteps}_{a^{\star}} \Big] = \mathbb{P}_{a^{\star}}\Big[ V_{a^{\star}}^{\recpolicy_\tsteps}(s_0) < V^{\star}(s_0) + \epsilon \Big] \geq 1 - \delta. 
\end{align*}
We now proceed with lower bounding the expectation of the sample complexity $\tsteps$ in the reference MDP $\MDP_0$. We define 
\begin{align*}
    N^{\tsteps}_{a^{\star}} \triangleq \sum_{t=1}^{\tsteps} \mathds{1}[S_t = s_d, A_t = a^{\star}], \quad N^{\tsteps} \triangleq \sum_{a^{\star}} N^{\tsteps}_{a^{\star}}.
\end{align*}
Following the steps of \citet[][Proof of Theorem 7]{domingues2021episodic}, we have that 
\begin{align*}
    \mathbb{E}_0\Big[ N^{\tsteps}_{a^{\star}} \Big] 4 \wt{\epsilon}^2 ~\geqODD ~ \mathbb{E}_0\Big[ N^{\tsteps}_{a^{\star}} \Big] \textrm{kl}\Big( \frac{1}{2},  \frac{1}{2} + \wt{\epsilon} \Big)     ~\geqODD~ \Big[ \Big(1 - \mathbb{P}_0 \Big[ \mathcal{E}^{\tsteps}_{a^{\star}}  \Big] \Big) \log\Big(\frac{1}{\delta}\Big) - \log(2) \Big],
\end{align*}
thus 
\begin{align*}
    \mathbb{E}_0\Big[ N^{\tsteps}_{a^{\star}} \Big]  &\geq \frac{1}{4 \wt{\epsilon}^2} \Big[ \Big(1 - \mathbb{P}_0 \Big[ \mathcal{E}^{\tsteps}_{a^{\star}}  \Big] \Big) \log\Big(\frac{1}{\delta}\Big) - \log(2) \Big].
\end{align*}
Summing all over MDP instances, we obtain following \citet[][Proof of Theorem 7]{domingues2021episodic} that
\begin{align*}
    \mathbb{E}_0 \Big[ N^{\tsteps} \Big] &= \sum_{a^{\star}}  \mathbb{E}_0 \Big[ N^{\tsteps}_{a^{\star}} \Big] 
    ~\geqODD~  \frac{A}{8 \wt{\epsilon}^2}\log\Big(\frac{1}{\delta}\Big).
\end{align*}

While the proof of \citet[][Theorem 7]{domingues2021episodic} can stop at this stage, our proof requires an additional step of linking this back to the sample complexity $\tsteps$, since the latter is not defined in terms of number of episodes but in terms of number of time steps. 

For any $i \leq \tsteps$, denote by $W_i(s_0 \rightarrow s_d)$ the random variable of the number of time steps required to reach $s_d$ starting from $s_0$ for the $i$-th time --- note importantly that this quantity is independent of the algorithm and of the MDP parameter $a^{\star}$, and we can write $\mathbb{E}\left[W_i(s_0 \rightarrow s_d) \right] = \mathbb{E}\left[W(s_0 \rightarrow s_d) \right] = \frac{1}{q}$. Then from Wald's equation we have 
\begin{align*}
     \mathbb{E}_0[ \tsteps ] \geq \mathbb{E}_0\left[ \sum_{i=1}^{N^{\tsteps}} W_i(s_0 \rightarrow s_d) \right]
                    = \mathbb{E}\left[W(s_0 \rightarrow s_d) \right] \cdot \mathbb{E}_0\left[N^{\tsteps} \right] &\geq \frac{1}{q} \cdot \alpha \frac{1}{\wt{\epsilon}^2} A \log\left( \frac{1}{\delta}\right)
                    \geq \alpha' \frac{L^3 A}{\epsilon^2} \log\left( \frac{1}{\delta}\right),
\end{align*}
where $\alpha$ and $\alpha'$ are absolute constants.

Finally, as mentioned, the extension to any $S$ to make appear the multiplicative dependence on $S$ can be done by following the steps done in \citet[][Proof of Theorem 7]{domingues2021episodic} (which relies on their Assumption~1, see their Theorem 10 for the relaxed statement). The idea of the construction is to consider not just $1$ decision state $s_d$ but $S-3$ of them, where only one of them possesses the favorable action~$a^{\star}$; intuitively this generates $S A$ actions instead of $A$ \citep[see e.g.,][Section 38.7]{lattimore2020bandit}, thus leading to the additional $S$ factor in the sample complexity.
\end{proof}

\section{DETAILS OF \ALGOnormal AND ANALYSIS} \label{app_TAB}

In this section, we focus on \ALGO. In Section~\ref{notation}, we introduce useful notation. In Section~\ref{subsect_choices}, we define the exact choice of estimates $\D,\,\E,\,\Q$ used by \ALGO (line~\ref{line_estimates_tabular} of Algorithm~\ref{algo}). Then in Section~\ref{app_proof_UB}, we provide the full proof of Theorem \ref{lemma_UB}, by establishing the key steps followed in Section~\ref{sect_theory}. Throughout the analysis we consider that $\cG = \cS$, $\epsilon \in (0, 1]$ and $\delta \in (0,1)$.

\subsection{Notation} \label{notation}

Given a goal state $g \in \cG$, denote by $\mathcal{M}_g$ the unit-cost SSP-MDP which adds a self-loop at $g$ to the original MDP $\mathcal{M}$, and denote by $p_g$ its transition function and $c_g$ its cost function. Formally, let 
    \begin{align*}
    c_{g}(s, a) \triangleq \mathds{1}[s \neq g], \quad p_{g}(s' \vert s,a) \triangleq \left\{
    \begin{array}{ll}
        p(s' \vert s,a) & \mbox{if\,} s \neq g \\
        \mathds{1}[s'=g] & \mbox{if\,} s = g.
    \end{array}
\right.
    \end{align*}
For any (possibly non-stationary) policy $\pi = (\pi_h)_{h \geq 1}$, let $V_g^{\pi}$ be its SSP value function (i.e., expected cost-to-go) in $\mathcal{M}_g$, i.e., 
    \begin{align*}
        V_g^{\pi}(s_0) \triangleq \mathbb{E}\left[ \sum_{h=1}^{+\infty} c_g(s_h, a_h) ~\big\vert~ s_1 = s_0, \pi, \mathcal{M}_g \right],
    \end{align*}
where $a_h \triangleq \pi_h(s_h)$ and $s_{h+1} \sim P_g(s_h, a_h)$. Let $\pi_g^{\star} \in \argmin_{\pi} V_g^{\pi}$ and $V_g^{\star} \triangleq V_g^{\pi_g^{\star}}$. We now define the set of finite-horizon goal-conditioned models. 

\begin{definition}[Finite-Horizon Goal-Conditioned Models] \label{def_multistate}
    Fix a horizon $H \geq 1$. For any goal state $g \in \cG$, denote by $\overline{\mathcal{M}}_{g,H}$ the finite-horizon model that corresponds to starting from state $s_0$ and interacting for $H$ steps with the original MDP $\mathcal{M}$ in which state~$g$ is made absorbing. $\overline{\mathcal{M}}_{g,H}$ admits as cost function $\overline{c}_g \triangleq c_g$ and as transition function $\overline{p}_g \triangleq p_g$.
\end{definition}

\begin{remark} Note that for a given goal state $g \in \cG$, the cost function $\overline{c}_{g}$ is known to the agent, while the MDP transitions $\overline{p}_{g}$ are unknown with some (known) goal-specific changes w.r.t.\,the original MDP $\mathcal{M}$ (namely, the self-loop at $g$). An alternative way of framing the problem is that there is one single MDP with state space $\mathcal{S} \times \mathcal{G}$, i.e., with state variable $(s,g)$.
\end{remark}

For a finite-horizon policy $\pi \in \Pi_H$, denote by $\ov{V}_{g,h}^{\pi}$ its finite-horizon value function at step $1 \leq h \leq H$ in the finite-horizon instance $\overline{\mathcal{M}}_{g,H}$, i.e., 
    \begin{align*}
        \ov{V}_{g,h}^{\pi}(s_0) \triangleq \mathbb{E}\left[ \sum_{h'=h}^H \overline{c}_g(s_{h'}, a_{h'}) ~\big\vert~ s_1 = s_0, \pi, \overline{\mathcal{M}}_{g,H}  \right].
    \end{align*}
We define the corresponding optimal value function as $\ov{V}_{g,h}^{\star} \triangleq \min_{\pi} \ov{V}_{g,h}^{\pi}$. Observe that $\ov{V}_{g,1}^{\star}(s_0) = \DHstar(g)$ (notation used in Properties 1 and 2 of Section~\ref{sect_overview}). 

Let $(s_i, a_i, s_{i+1})$ be the state, action and next state observed by an algorithm at time step~$i$. Let $n^{k}(s,a) \triangleq \sum_{i=1}^{k H} \mathds{1}[(s_i,a_i) = (s,a)]$ be the number of times state-action pair $(s,a)$ was visited in the first $k$ episodes and $n^{k}(s,a,s') \triangleq \sum_{i=1}^{k H} \mathds{1}[(s_i,a_i,s_{i+1}) = (s,a,s')]$. We define the empirical transitions as $\wh{p}^{\,k}(s' | s,a) \triangleq n^k(s,a,s') / n^k(s,a)$ if $n^k(s,a)>0$, and $\wh{p}^{\,k}(s' | s,a) \triangleq 1/S$ otherwise. Also, $p X(s, a) \triangleq \mathbb{E}_{s' \sim p(\cdot | s, a)} \left[X(s')\right]$ denotes the expectation operator w.r.t.\,the transition probabilities $p$ and $\pi_h Y(s) \triangleq  Y(s,\pi_h(s))$ denotes the composition with policy~$\pi$ at step $h$, so that $p \pi_h Z(s,a) \triangleq \mathbb{E}_{s' \sim p(\cdot | s, a)} \left[Z(s', \pi_h(s'))\right]$. Finally, we denote the clip function by $\clip(x,y,z) \triangleq \max(\min(x,z),y)$.

Finally, in the analysis we denote by $p_{g,h}^k(s,a)$ the probability of reaching state-action pair $(s,a)$ at step $h$ under policy $\pi_g^{k}$ in the true MDP. We also define the pseudo-counts as $\ov{n}^k(s,a) \triangleq \sum_{h=1}^H \sum_{l=1}^k p_{g_l,h}^l(s,a)$, where $g_l \in \cS$ denotes the goal state selected by \ALGO at the beginning of algorithmic episode $l$. 

\subsection{Algorithmic Choices of $\D,\,\E,\,\Q$ for \ALGO} \label{subsect_choices}

\newcommand{\QQQzero}{{\scalebox{0.90}{$\utQ_{g,h}^{0}$}}}
\newcommand{\VVV}{{\scalebox{0.90}{$\utV_{g,h+1}^k$}}}
\newcommand{\VVVhone}{{\scalebox{0.90}{$\utV_{g,1}$}}}
\newcommand{\QQQ}{{\scalebox{0.90}{$\utQ_{g,h}^{k}$}}}
\newcommand{\JJJ}{{\scalebox{0.90}{$\ov{V}_{g,1}^{\pi^{k+1}}(s_0)$}}}

We generalize to our goal-conditioned scenario the estimates used by \BPIUCBVI \citep{menard2021fast}, a recent algorithm designed for Best-Policy Identification (BPI) in finite-horizon non-stationary MDPs. First, we build the optimistic goal-conditioned Q-values and value functions in the finite-horizon models $\overline{\mathcal{M}}_{g,H}$ for $g \in \cS$ and $h \leq H$ as follows,  $\QQQzero(s,a)\triangleq \mathds{1}[s \neq g]$,
\begin{align*}
  \utQ_{g,h}^{k}(s,a) &\triangleq \clip\!\!\Bigg(\! \mathds{1}[s \neq g]  - 3\sqrt{\Var_{\hp^{\,k}}(\utV_{g,h+1}^k)(s,a) \frac{\betastar(n^k(s,a),\delta)}{n^k(s,a)}} - 14H^2 \frac{\beta(n^k(s,a),\delta)}{n^k(s,a)} \\ &\qquad~\quad -\frac{1}{H}\hp^{\,k} (\ltV_{g,h+1}^k - \utV_{g,h+1}^{k})(s,a)+ \hp^{\,k} \utV_{g,h+1}^{k}(s,a), ~0, ~H\!\!\Bigg),\\
  \utV_{g,h}^k(s) &\triangleq \min_{a\in\cA}\utQ_{g,h}^k(s,a),\quad \utV_{g,h}^k(g) \triangleq 0, \quad \utV_{g,H+1}^{k}(s) \triangleq0,
\end{align*}%
where we define the variance of $\VVV$ with respect to $\wh{p}^{\,k}(\cdot|s,a)$ as \begin{small}
$\Var_{\wh{p}^{\,k}}(\VVV)(s,a) \!\triangleq\!\! \sum_{s'}  \wh{p}^{\,k}(s'|s,a) \big(\VVV(s') -\wh{p}^k\VVV(s,a) \big)^{\!2}$,
\end{small}%
where $\beta(n,\delta) = \wt{O}(S\log(n/\delta))$ and $\beta^{\star}(n,\delta) = \wt{O}(\log(n/\delta))$ are some exploration thresholds and $\ltV_g^k$ is a pessimistic finite-horizon goal-conditioned value function; see Appendix \ref{app_proof_UB} for the complete definitions. Let $\pi_{g,h}^{k+1}$ be the greedy policy with respect to the lower bounds $\QQQ$. We recursively define the functions $\ov{U}_g^{k}$ for $g \in \cS$ and $h \leq H$ as follows, $\ov{U}_{g,h}^{\,0}(s,a)\triangleq H \mathds{1}[s \neq g]$,
\begin{align*}
  \ov{U}_{g,h}^{\,k}(s,a) &\triangleq \clip\!\!\Bigg(\!  6\sqrt{\Var_{\hp^k}(\utV_{g,h+1}^k)(s,a) \frac{\betastar(n^k(s,a),\delta)}{n^k(s,a)}} + 36 H^2 \frac{\beta(n^k(s,a),\delta)}{n^k(s,a)} \\ 
  &\qquad~\quad + \left(1+\frac{3}{H}\right)\hp^{\,k} \pi_{g,h+1}^{k+1} \ov{U}_{g,h+1}^k(s,a),0,H\!\Bigg), \\
  \ov{U}_{g,h}^{\,k}(g,a) &\triangleq 0, \quad \ov{U}_{g,H+1}^{\,k}(s,a) \triangleq 0,
\end{align*}%
We are now ready to define the distance and error estimates of \ALGO (Algorithm~\ref{algo}) as follows
\begin{align}
    \Q_{k,h}(s,a,g) &\triangleq \utQ_{g,h}^{k-1}(s,a), \label{eq_Qt} \\
    \D_k(g) &\triangleq \wt{\ov V}^{k-1}_{g,1}(s_0), \label{eq_Dt} \\
    \E_k(g) &\triangleq \pi_{g,1}^{k} \ov{U}_{g,1}^{k-1}(s_0) + \frac{\red{8}\epsilon}{\red{9}}. \label{eq_Et}
\end{align}

\subsection{Proof of Theorem \ref{lemma_UB}} \label{app_proof_UB}

In this section, we provide the full proof of Theorem \ref{lemma_UB}, by establishing the key steps followed in Section~\ref{sect_theory}. Appendices~\ref{proof_keystep1}, \ref{proof_keystep2} and \ref{proof_keystep3}, which prove ``key steps \ding{172}, \ding{173}, \ding{174}'', focus on more ``standard'' technical tools (e.g., high-probability events, variance-aware concentration inequalities); in particular building on the analysis of \citet{menard2021fast} on the sample complexity of BPI in finite-horizon MDPs and extending it to our goal-conditioned scenario. Then Appendix~\ref{proof_keystep4} proves ``key step \ding{175}'', which focuses on the technical novelty of the \AdaGoal goal selection scheme that is specific to the multi-goal exploration setting.

\newcommand{\cross}[1]{\textcolor{blue}{\cancelto{}{#1}}}
\renewcommand{\cross}[1]{}

We begin by stating a simple property that we will rely on throughout the analysis. 
\begin{lemma} \label{simple_lemma}
    For any state-action pair $(s,a) \in \cS \times \cA$ and goal state $g \in \cS$, consider any vector $Y \in \mathbb{R}^S$ such that $Y(g) = 0$, then $p Y(s,a) = p_g Y(s,a)$, where we recall that 
 \begin{align*}
    p_{g}(s' \vert s,a) \triangleq \left\{
    \begin{array}{ll}
        p(s' \vert s,a) & \mbox{if } s \neq g \\
        \mathds{1}[s'=g] & \mbox{if } s = g.
    \end{array}
\right.
\end{align*}
\end{lemma}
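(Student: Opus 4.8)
This is a one-step bookkeeping identity, so I would prove it by a direct case distinction matching the two branches in the definition of $p_g$. Fix $(s,a)\in\SA$, a goal $g\in\cS$, and $Y\in\mathbb{R}^S$ with $Y(g)=0$.

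If $s\neq g$, the definition of $p_g$ gives $p_g(s'\mid s,a)=p(s'\mid s,a)$ for every $s'\in\cS$, so the two kernels $p_g(\cdot\mid s,a)$ and $p(\cdot\mid s,a)$ coincide and hence $p_g Y(s,a)=\sum_{s'}p_g(s'\mid s,a)\,Y(s')=\sum_{s'}p(s'\mid s,a)\,Y(s')=pY(s,a)$; the hypothesis $Y(g)=0$ is not needed in this branch. If instead $s=g$, then $p_g(\cdot\mid g,a)$ is the point mass at $g$, so $p_g Y(g,a)=Y(g)=0$ by hypothesis. Throughout the analysis the lemma is applied to vectors $Y$ (e.g.\ the truncated value functions $\utV^k_{g,h+1}$, $\ltV^k_{g,h+1}$, $\ov U^k_{g,h+1}$) that vanish at $g$ and that enter the Bellman recursions only through the component at $s\neq g$ — the component at the goal being overridden by the hard-coded value $0$ — so the two sides agree on every component that is actually used, which is all that is required downstream.

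There is essentially no real obstacle here: the statement is purely definitional. The only (very minor) subtlety is keeping track of which transition kernel the empirical estimate $\hp^k$ is meant to approximate; the role of the lemma is precisely that, on vectors vanishing at the goal, one may freely replace $p_g$ — the kernel defining the SSP value function of $\cM_g$ and the finite-horizon value functions of $\bcM_{g,H}$, in particular $\DHstar(g)=\ov V^{\star}_{g,1}(s_0)$ — by the true-MDP kernel $p$, which is what $\hp^k$ estimates. This lets the optimistic backups built from $\hp^k$ be compared directly with the goal-conditioned value functions that appear in Properties~1 and~2.
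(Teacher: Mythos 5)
Your proof is correct and takes essentially the same route as the paper's, which simply splits the sum $pY(s,a)$ at $s'=g$, uses $Y(g)=0$ to kill that term, and replaces $p(\cdot\mid s,a)$ by $p_g(\cdot\mid s,a)$ on $\cS\setminus\{g\}$. Your explicit case split is in fact slightly more careful than the paper's one-line chain of equalities, which silently assumes $s\neq g$ (for $s=g$ the two kernels disagree off the goal and $pY(g,a)$ need not vanish); you correctly observe that at $s=g$ one only has $p_gY(g,a)=Y(g)=0$ and that this row is hard-coded to zero and never used downstream.
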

\begin{proof} It is easy to see that
\begin{align*}
    p Y(s,a) &= \sum_{s' \in \cS \setminus \{ g \}} p(s' \vert s,a) Y(s') + p(g \vert s,a) \underbrace{Y(g)}_{= 0} \\ &= \sum_{s' \in \cS \setminus \{ g \}} p_g(s' \vert s,a) Y(s') + p_g(g \vert s,a) \underbrace{Y(g)}_{= 0} \\ &= p_g Y(s,a).
\end{align*}
\end{proof}
Thanks to the above observation, our analysis will not require to handle goal-conditioned (true or empirical) transition probabilities, and will only need to deal with the (true or empirical) transition probabilities of the original MDP $\mathcal{M}$.

\subsubsection{Proof of ``key step \ding{172}''} \label{proof_keystep1}

\paragraph{Concentration events.} 
Here we define the high-probability event $\cU$ on which we condition our statements. We follow the notation of \citet[][Appendix A]{menard2021fast} and define the three following favorable events: $\cE$ the event where the empirical transition probabilities are close to the true ones, $\cE^\cnt$ the event where the pseudo-counts are close to their expectation, and $\cE^\star$ where the empirical means of the optimal goal-conditioned value functions are close to the true ones. Denoting by $\KL$ the Kullback-Leibler divergence, we set 
\begin{align*}
 \cE &\triangleq \left\{\forall k \in \N, \forall (s,a)\in\cS\times\cA:\ \KL\left(\widehat{p}^{\,k}(\cdot | s,a), p(\cdot | s,a)\right)\leq \frac{\beta(n^k(s,a),\delta)}{n^k(s,a)}\right\}\CommaBin \\
 \cE^{\cnt} &\triangleq  \left\{ \forall k \in \N, \forall (s ,a)\in\cS\times\cA:\ n^k(s,a) \geq \frac{1}{2}\bar n^k(s,a)-\beta^{\cnt}(\delta)  \right\}\CommaBin \\
 \cE^\star &\triangleq \Bigg\{\forall k \in \N, \forall h \in [H], \forall (s,a)\in\cS\times\cA, \forall g \in \cS:\\
  & \quad\quad\quad \big|(\hp^{\,k}-p)\ov{V}^{\star}_{g,h+1}(s,a)\big|\leq\min\left(\!H, \sqrt{2\Var_{p}(\ov{V}^{\star}_{g,h+1})(s,a)\frac{\betastar(n^k(s,a),\delta) }{n^k(s,a)}}+3 H \frac{\betastar(n^k(s,a),\delta) }{n^k(s,a)}\right)\Bigg\}\cdot
\end{align*}
We define the intersection of these events as
\begin{align}\label{eq_event_hp}
    \cU \triangleq \cE \cap \cE^{\cnt}\cap \cE^\star.
\end{align}
We prove that for the right choice of the functions~$\beta$ the above event holds with high probability.
\begin{lemma}
\label{lem:proba_master_event}
For the following choices of functions $\beta,$
\begin{align*}
  \beta(n,\delta) &\triangleq   \log(3S^2AH/\delta) + S\log \left(8e(n+1)\right),\\
  \beta^\cnt(\delta) &\triangleq \log\left(3S^2AH/\delta\right), \\
  \betastar(n,\delta) &\triangleq \log(3S^2AH/\delta) + \log\left(8e(n+1)\right),
\end{align*}
it holds that $\P(\cU)\geq 1-\delta$.
\end{lemma}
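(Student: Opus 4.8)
Since $\cU = \cE\cap\cE^{\cnt}\cap\cE^{\star}$, I would bound the failure probability of each of the three events by $\delta/3$ and conclude by a union bound. The main structural observation is that, thanks to Lemma~\ref{simple_lemma}, none of these events actually involves a goal-conditioned transition kernel: for each fixed goal $g$, the quantities $\hp^{\,k}$, $n^k$, $\bar n^k$ are those of the original MDP $\mathcal{M}$, and $g$ enters only through the \emph{data-independent} truncated optimal value functions $\ov V^{\star}_{g,h+1}$. Consequently the argument reduces essentially to the finite-horizon analysis of \citet{menard2021fast} (their master event), up to one extra union bound over the $S$ possible goal states — which is precisely why the logarithmic term carries $S^2AH$ rather than $SAH$. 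I would simply take the uniform threshold $\log(3S^2AH/\delta)$ everywhere, paying the worst-case union factor over all three events for notational cleanliness (any of the three bounds only makes the thresholds larger than strictly needed, so the events still hold).

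\textbf{Events $\cE$ and $\cE^{\cnt}$.} For $\cE$ I would invoke a time-uniform concentration inequality for the Kullback--Leibler divergence between an empirical categorical distribution over $S$ outcomes and its mean: with probability at least $1-\delta'$, simultaneously over all $k$, $n^k(s,a)\,\KL(\hp^{\,k}(\cdot|s,a),p(\cdot|s,a)) \le \log(1/\delta') + S\log(8e(n^k(s,a)+1))$, where the $S\log(\cdot)$ term arises from a standard counting/covering argument over the probability simplex combined with a peeling (Laplace/stopping-time) device in $n$. Applying this with $\delta' = \delta/(3S^2AH)$ and union bounding over the $SA$ state--action pairs gives $\P(\cE)\ge 1-\delta/3$ for the stated $\beta$ (with the convention $\hp^{\,k}=1/S$ for $n^k(s,a)=0$ making the corresponding inequality vacuous). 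For $\cE^{\cnt}$ I would use the pseudo-count lemma of Dann--Lattimore--Brunskill / Ménard type: for each fixed $(s,a)$, the increments $\mathds{1}[(s_i,a_i)=(s,a)] - p^l_{g_l,h}(s,a)$, indexed chronologically, form a bounded martingale-difference sequence, so a Freedman/Bernstein inequality yields, uniformly over $k$ and with probability $\ge 1-\delta'$, that $n^k(s,a)\ge\tfrac12\bar n^k(s,a) - \log(1/\delta')$; taking $\delta'=\delta/(3S^2AH)$ and union bounding over $(s,a)$ gives $\P(\cE^{\cnt})\ge1-\delta/3$ with $\beta^{\cnt}(\delta)=\log(3S^2AH/\delta)$.

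\textbf{Event $\cE^{\star}$.} The crucial point is that for each fixed $g$ and $h$ the map $\ov V^{\star}_{g,h+1}:\cS\to[0,H]$ is deterministic — it depends only on the true MDP and on $g$, not on the data — so no covering over value functions is required, only a finite union over the $S$ goals. For a fixed $(s,a,h,g)$, the visits to $(s,a)$ generate i.i.d.\ samples of $\ov V^{\star}_{g,h+1}(s_{i+1})$ with mean $p\ov V^{\star}_{g,h+1}(s,a)$, and a time-uniform Bernstein (Freedman) inequality gives, uniformly over $k$ and with probability $\ge 1-\delta'$, $|(\hp^{\,k}-p)\ov V^{\star}_{g,h+1}(s,a)| \le \sqrt{2\Var_p(\ov V^{\star}_{g,h+1})(s,a)\,\betastar(n^k(s,a),\delta)/n^k(s,a)} + 3H\,\betastar(n^k(s,a),\delta)/n^k(s,a)$; combining with the trivial bound $|(\hp^{\,k}-p)\ov V^{\star}_{g,h+1}(s,a)|\le H$ gives the minimum in the definition of $\cE^{\star}$. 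Union bounding over all $S^2AH$ tuples $(s,a,h,g)$ with $\delta'=\delta/(3S^2AH)$ yields $\P(\cE^{\star})\ge1-\delta/3$ for the stated $\betastar$.

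\textbf{Conclusion and main obstacle.} Combining the three bounds, $\P(\cU^c)\le\P(\cE^c)+\P((\cE^{\cnt})^c)+\P((\cE^{\star})^c)\le\delta$, which proves the claim. The only genuinely delicate ingredients are (i) the time-uniformity of all three inequalities over the (random) number of episodes $k$, which is handled by the standard stopping-time/peeling machinery and which I would quote rather than rederive, and (ii) verifying that goal-conditioning introduces no data-dependent function class — so that $\cE^{\star}$ only needs a finite $S$-fold union over goals — which is exactly where Lemma~\ref{simple_lemma} and the data-independence of $\ov V^{\star}_{g,h+1}$ are used. Beyond carefully importing the finite-horizon results of \citet{menard2021fast} and tracking the extra factor $S$ from the union over goals, I expect no new difficulty.
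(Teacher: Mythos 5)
Your proposal is correct and follows essentially the same route as the paper: the paper's proof likewise reduces each of $\cE$, $\cE^{\cnt}$, $\cE^{\star}$ to the corresponding concentration results of \citet{menard2021fast} (their Theorems 3--5), observes that the only change is an extra union bound over the $S$ goal states in $\cE^{\star}$ (hence $\delta \leftarrow \delta/S$ and the $S^2AH$ in the thresholds), and concludes by a union bound over the three events. Your additional remarks on the data-independence of $\ov V^{\star}_{g,h+1}$ and the role of Lemma~\ref{simple_lemma} are consistent with, and slightly more explicit than, what the paper states.
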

\begin{proof}
The only difference with respect to the concentration inequalities of \citet[][Appendix A]{menard2021fast} is that we need to take a union bound over the goal states $g \in \cS$ when concentrating our optimal goal-conditioned value functions. We thus set $\delta \leftarrow \delta / S$ in the choices of functions $\beta$ compared to \citet{menard2021fast}. As a result, by \citet[][Theorem 3 \& 4 \& 5]{menard2021fast} we have that $\P(\cE)\geq 1-\frac{\delta}{3}$, $\P(\cE^{\cnt})\geq 1-\frac{\delta}{3}$ and $\P(\cE^\star)\geq 1-\frac{\delta}{3}$, respectively. Applying a union to the above three inequalities, we conclude that $\P(\cU)\geq 1-\delta$.
\end{proof}

We recall the definitions of the functions $\ov{U}_g^{k}$, $\utQ_{g,h}^{k}$ and  $\utV_{g,h}^k$ in Appendix~\ref{subsect_choices}. They rely on the pessimistic finite-horizon goal-conditioned values $\ltV_g^k$ defined as
\begin{align*}
  \ltQ_{g,h}^{k}(s,a) &\triangleq \clip\!\!\Bigg(\! \mathds{1}[s \neq g] +  3\sqrt{\Var_{\hp^{\,k}}(\utV_{g,h+1}^k)(s,a) \frac{\betastar(n^k(s,a),\delta)}{n^k(s,a)}} + 14H^2 \frac{\beta(n^k(s,a),\delta)}{n^k(s,a)}\\ 
  &\qquad \qquad +\frac{1}{H}\hp^{\,k} (\ltV_{g,h+1}^k - \utV_{g,h+1}^{k})(s,a)+ \hp^{\,k} \ltV_{g,h+1}^{k}(s,a), ~0, ~H\!\!\Bigg),\\
  \ltV_{g,h}^k(s) \triangleq &\min_{a\in\cA}\ltQ_{g,h}^k(s,a),\quad \ltV_{g,h}^k(g) \triangleq 0, \quad \ltV_{g,H+1}^{k}(s) \triangleq0.
\end{align*}
Finally, we define the following quantities
\begin{align*}
  \rQ_{g,h}^{k}(s,a) &\triangleq \max\!\!\Bigg(\! \mathds{1}[s \neq g] + p \rV_{g,h+1}(s,a) , ~ \clip\!\!\Bigg(\! \mathds{1}[s \neq g] + 3\sqrt{\Var_{\hp^{\,k}}(\utV_{g,h+1}^k)(s,a) \frac{\betastar(n^k(s,a),\delta)}{n^k(s,a)}}\\ 
  &\qquad \qquad + 14H^2 \frac{\beta(n^k(s,a),\delta)}{n^k(s,a)} +\frac{1}{H}\hp^{\,k} (\ltV_{g,h+1}^k - \utV_{g,h+1}^{k})(s,a)+ \hp^{\,k} \utV_{g,h+1}^{k}(s,a), ~0, ~H\!\!\Bigg) ~\!\!\Bigg),\\
  \rV_{g,h}^k(s) &\triangleq \pi_{g,h}^{k+1} \rQ_{g,h}^{k}(s,a), \\
  \rV_{g,h}^k(g) &\triangleq 0, \\ 
  \rV_{g,H+1}^{k}(s) &\triangleq0.
\end{align*}

\textcolor{black}{We have the following property, which is the equivalent of \citet[][Lemma 6]{menard2021fast} and is proved likewise.}

\begin{lemma} \label{lemma_mathring_bound}
On the event $\cU$, for all $(s,a,g,h) \in \cS \times \cA \times \cS \times [H]$ and for every episode $k$, it holds that
\begin{align*}
  \rQ_{g,h}^k(s,a) &\geq \max\left(\ltQ_{g,h}^k(s,a), \ov{Q}_{g,h}^{\pi_g^{k+1}}(s,a)\right),\\
  \rV_{g,h}^k(s) &\geq \max\left(\ltV_{g,h}^k(s), \ov{V}_{g,h}^{\pi_g^{k+1}}(s) \right).
\end{align*}
\end{lemma}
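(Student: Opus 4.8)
The plan is to follow the template of \citet[][Lemma~6]{menard2021fast} and argue by backward induction on the horizon index $h$, for an arbitrary fixed episode $k$, conditioning throughout on the event $\cU$ (which, by Lemma~\ref{lem:proba_master_event}, holds with probability at least $1-\delta$ and already absorbs the union bound over $g\in\cS$). At step $h$ we first deduce from the induction hypothesis at $h+1$ the pointwise $Q$-inequality $\rQ_{g,h}^k(s,a)\ge\max\!\big(\ltQ_{g,h}^k(s,a),\,\ov{Q}_{g,h}^{\pi_g^{k+1}}(s,a)\big)$, and then read off the $V$-inequality using $\rV_{g,h}^k(s)=\rQ_{g,h}^k(s,\pi_{g,h}^{k+1}(s))$, $\ov{V}_{g,h}^{\pi_g^{k+1}}(s)=\ov{Q}_{g,h}^{\pi_g^{k+1}}(s,\pi_{g,h}^{k+1}(s))$ and $\ltV_{g,h}^k(s)=\min_a\ltQ_{g,h}^k(s,a)$. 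The base case $h=H+1$ is trivial since all three quantities vanish, and the $s=g$ boundary is immediate since all three quantities equal $0$ there.

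The inequality $\rQ_{g,h}^k\ge\ov{Q}_{g,h}^{\pi_g^{k+1}}$ is the easy half. Applying Lemma~\ref{simple_lemma} with $Y=\ov{V}_{g,h+1}^{\pi_g^{k+1}}$ (which vanishes at $g$) rewrites the fixed-policy Bellman equation as $\ov{Q}_{g,h}^{\pi_g^{k+1}}(s,a)=\mathds{1}[s\ne g]+p\,\ov{V}_{g,h+1}^{\pi_g^{k+1}}(s,a)$; the induction hypothesis $\rV_{g,h+1}^k\ge\ov{V}_{g,h+1}^{\pi_g^{k+1}}$ (also using $\rV_{g,h+1}^k(g)=0$) and monotonicity of $p$ then give $\ov{Q}_{g,h}^{\pi_g^{k+1}}(s,a)\le\mathds{1}[s\ne g]+p\,\rV_{g,h+1}^k(s,a)$, which is exactly the first term inside the outer maximum defining $\rQ_{g,h}^k(s,a)$.

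The inequality $\rQ_{g,h}^k\ge\ltQ_{g,h}^k$ carries the bulk of the work. Using once more the first term of the outer maximum and the induction hypothesis $\rV_{g,h+1}^k\ge\ltV_{g,h+1}^k$, and distinguishing whether the clip defining $\ltQ_{g,h}^k$ is active, it reduces to showing $\mathds{1}[s\ne g]+p\,\ltV_{g,h+1}^k(s,a)\ge\mathds{1}[s\ne g]+\big(\text{bonus of }\ltQ_{g,h}^k\big)+\hp^{\,k}\ltV_{g,h+1}^k(s,a)$, i.e.\ to lower-bounding $(p-\hp^{\,k})\ltV_{g,h+1}^k(s,a)$ by that bonus. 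Splitting $\ltV_{g,h+1}^k=\utV_{g,h+1}^k+(\ltV_{g,h+1}^k-\utV_{g,h+1}^k)$, the $\utV$-part is controlled by the Bernstein-type transportation inequality stemming from the events $\cE$ and $\cE^\star$ (after relating $\Var_p(\ov{V}^{\star}_{g,h+1})$ to $\Var_{\hp^{\,k}}(\utV_{g,h+1}^k)$ exactly as in \citet{menard2021fast}), producing the $\sqrt{\Var_{\hp^{\,k}}(\utV_{g,h+1}^k)\,\betastar/n^k}$ and $H^2\beta/n^k$ contributions, while the nonnegative correction $\ltV_{g,h+1}^k-\utV_{g,h+1}^k$ is absorbed via Pinsker's inequality on $\cE$, Cauchy--Schwarz and the elementary decoupling $\sqrt{ab}\le a/(2H)+Hb/2$, which is precisely what generates the $\tfrac{1}{H}\hp^{\,k}(\ltV_{g,h+1}^k-\utV_{g,h+1}^k)$ and residual $H^2\beta/n^k$ terms in the definition of $\ltQ$. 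Collecting these estimates reproduces the bonus of $\ltQ_{g,h}^k$ term by term and closes the induction.

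I expect the main obstacle to be this last part: faithfully transcribing the delicate variance-relation and self-bounding arguments of \citet[][Lemma~6 and its supporting lemmas]{menard2021fast}, keeping track of all constants. What makes the goal-conditioned extension essentially free is Lemma~\ref{simple_lemma}: every one-step backup appearing in the analysis can be taken with respect to the original transition kernel $p$ rather than the goal-dependent $p_g$, so the only genuine change relative to \citet{menard2021fast} is the union bound over $g\in\cS$, which has already been paid for in the definition of $\cU$ (Lemma~\ref{lem:proba_master_event}).
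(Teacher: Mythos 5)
Your overall architecture (backward induction on $h$, trivial base case at $H+1$ and at $s=g$, the $V$-inequality read off from the $Q$-inequality, and the easy half $\rQ_{g,h}^k\ge\ov{Q}_{g,h}^{\pi_g^{k+1}}$ obtained from the first argument of the outer maximum via Lemma~\ref{simple_lemma}, monotonicity of $p$ and the induction hypothesis) is exactly the template the paper points to when it says the lemma is ``proved likewise'' to Lemma~6 of \citet{menard2021fast}, and that half of your argument is correct. The gap is in the half you yourself flag as carrying ``the bulk of the work''. You reduce $\rQ_{g,h}^k\ge\ltQ_{g,h}^k$ to the inequality $(p-\hp^{\,k})\ltV_{g,h+1}^k(s,a)\ge B(s,a)$, where $B$ is the full, strictly positive bonus of $\ltQ_{g,h}^k$ (it contains $14H^2\beta(n^k(s,a),\delta)/n^k(s,a)$ in particular). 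This inequality has the wrong sign: the Bernstein/Pinsker machinery you invoke controls the \emph{absolute} deviation, i.e.\ it yields $\abs{(p-\hp^{\,k})\ltV_{g,h+1}^k}\le B$ and hence at best $(p-\hp^{\,k})\ltV_{g,h+1}^k\ge -B$; on any realization where $\hp^{\,k}$ overweights successors with large $\ltV_{g,h+1}^k$ the left-hand side is negative while the right-hand side is positive, so the reduction cannot be closed. (The two-sided bound you describe is what one uses to prove the \emph{reverse} direction, namely that $\ltQ$ is a valid pessimistic upper bound; it cannot show that the exact backup dominates $\ltQ$.) Your case analysis also silently drops the branch where the clip of $\ltQ_{g,h}^k$ saturates at $H$, where you would need $\mathds{1}[s\ne g]+p\,\ltV_{g,h+1}^k\ge H$, which likewise does not follow.

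The structural point you are missing is that in the \citet{menard2021fast} construction no transport between $p$ and $\hp^{\,k}$ is needed for this half: the second argument of the outer maximum defining $\mathring{\ov{Q}}$ is (up to the clip) the very recursion defining the bracketing estimate it must dominate, so that $\rQ_{g,h}^k\ge\ltQ_{g,h}^k$ follows term by term from the definition of the maximum together with the induction hypothesis at $h+1$; only the $\ov{Q}^{\pi_g^{k+1}}$ half genuinely uses the exact-kernel Bellman backup. Note that, as literally printed, the second argument of the $\max$ in $\rQ_{g,h}^k$ backs up $\hp^{\,k}\utV_{g,h+1}^k$ while $\ltQ_{g,h}^k$ backs up $\hp^{\,k}\ltV_{g,h+1}^k\ge\hp^{\,k}\utV_{g,h+1}^k$, so that argument is \emph{dominated by} $\ltQ_{g,h}^k$ rather than dominating it --- which is precisely why you were forced onto the first max-term and ended up with an unprovable concentration statement. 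The comparison has to be resolved at the level of the definitions (matching the backup term of the second max-argument with that of $\ltQ$), not by a concentration argument; as it stands, your proof of $\rQ_{g,h}^k\ge\ltQ_{g,h}^k$ does not go through.
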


We now derive ``key step \ding{172}'' by establishing that Properties 1 and 2 hold. Specifically, we show that \textbf{(i)} the functions $\VVVhone$ are optimistic estimates of the optimal goal-conditioned finite-horizon value functions and \textbf{(ii)} the functions $\ov{U}_{g,1}$ serve as valid upper bounds to the goal-conditioned finite-horizon gaps, as shown below.

\begin{lemma} \label{lemma_value_gap}
On the event $\cU$, it holds that for every episode $k$ and goal $g \in \cS$,
\begin{align*}
    \ov V^{\pi_{g,1}^{k+1}}_{g,1}(s_0) -  \ov V^{\star}_{g,1}(s_0) &\leq  \ov V^{\pi_{g,1}^{k+1}}_{g,1}(s_0) - \wt{\ov V}^k_{g,1}(s_0)  \\ 
    &\leq  \pi_{g,1}^{k+1} \ov{U}_{g,1}^k(s_0).
\end{align*}
\end{lemma}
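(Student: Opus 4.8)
The plan is to recognise this statement as the goal-conditioned counterpart of the optimism and gap-bound lemmas of \citet[][Lemmas~5--6]{menard2021fast} for \BPIUCBVI, and to port their argument. The one preliminary step that makes the whole adaptation painless is to invoke Lemma~\ref{simple_lemma}: every value-type object appearing in the statement and its proof --- $\utV^k_{g,h}$, $\ltV^k_{g,h}$, $\rV^k_{g,h}$, $\ov V^{\star}_{g,h}$, $\ov V^{\pi^{k+1}_g}_{g,h}$ and $\ov U^k_{g,h}$ --- vanishes at the absorbing goal $g$ by construction, so $\ov p_g = p_g$ applied to any of them coincides with the original kernel $p$ of $\cM$. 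After this reduction the analysis only ever sees the (empirical) transitions of $\cM$, exactly as in the finite-horizon non-stationary setting, and the only overhead is the union bound over $g\in\cS$, which is already paid for in the definitions of $\beta,\beta^{\cnt},\betastar$ in Lemma~\ref{lem:proba_master_event}.

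For the \emph{first inequality} I would note it is equivalent, after adding $\ov V^{\pi^{k+1}_g}_{g,1}(s_0)$ to both sides, to the optimism statement $\wt{\ov V}^k_{g,1}(s_0)=\utV^k_{g,1}(s_0)\le\ov V^{\star}_{g,1}(s_0)$, and I would prove $\utV^k_{g,h}(s)\le\ov V^{\star}_{g,h}(s)$ for all $s$ by backward induction on $h\in\{H+1,\dots,1\}$. The case $h=H+1$ and the value at $s=g$ are immediate. For $s\neq g$, a short sub-induction first gives $\utV^k_{g,h+1}\le\ltV^k_{g,h+1}$ (the two clipped defining expressions differ by a nonnegative quantity before clipping), so the term $-\tfrac1H\hp^{\,k}(\ltV^k_{g,h+1}-\utV^k_{g,h+1})$ in $\utQ^k_{g,h}$ is $\le 0$. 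Then, using the inductive hypothesis to get $\hp^{\,k}\utV^k_{g,h+1}(s,a)\le\hp^{\,k}\ov V^{\star}_{g,h+1}(s,a)$ and the event $\cE^{\star}$ (together with the variance-comparison bound valid on $\cE$), the deviation $(\hp^{\,k}-p)\ov V^{\star}_{g,h+1}(s,a)$ is dominated by the subtracted bonuses for the chosen constants, so the pre-clip value of $\utQ^k_{g,h}(s,a)$ is at most $\1[s\neq g]+p\,\ov V^{\star}_{g,h+1}(s,a)=\ov Q^{\star}_{g,h}(s,a)\in[0,H]$; since $\clip(\cdot,0,H)$ is nondecreasing and fixes $[0,H]$, this gives $\utQ^k_{g,h}(s,a)\le\ov Q^{\star}_{g,h}(s,a)$, and taking the minimum over $a$ closes the induction.

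For the \emph{second inequality} I would first use Lemma~\ref{lemma_mathring_bound} to replace $\ov V^{\pi^{k+1}_g}_{g,h}$ by the larger surrogate $\rV^k_{g,h}$, reducing the task to proving $\rV^k_{g,h}(s)-\utV^k_{g,h}(s)\le\pi^{k+1}_{g,h}\ov U^k_{g,h}(s)$ by backward induction. Since $\pi^{k+1}_{g,h}$ is greedy for $\utQ^k_{g,h}$, both $\utV^k_{g,h}(s)$ and $\rV^k_{g,h}(s)$ are read off at the same action $a=\pi^{k+1}_{g,h}(s)$, so it suffices to bound $\rQ^k_{g,h}(s,a)-\utQ^k_{g,h}(s,a)$. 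Writing $\rQ^k_{g,h}(s,a)=\max(A_1,A_2)$ as in its definition: the $A_2$ branch is handled by $1$-Lipschitzness of the clip, so the difference reduces to the doubled bonuses plus $\tfrac2H\hp^{\,k}(\ltV^k_{g,h+1}-\utV^k_{g,h+1})\le\tfrac2H\hp^{\,k}(\rV^k_{g,h+1}-\utV^k_{g,h+1})$, which the inductive hypothesis turns into $\tfrac2H\hp^{\,k}\pi^{k+1}_{g,h+1}\ov U^k_{g,h+1}$; the $A_1=\1[s\neq g]+p\,\rV^k_{g,h+1}(s,a)$ branch is handled by a lower bound on $\utQ^k_{g,h}(s,a)$ obtained from $\cE$ (controlling $(\hp^{\,k}-p)\utV^k_{g,h+1}$), dealing with the clip truncation exactly as in \citet[][Lemma~6]{menard2021fast}, again producing $p\,(\rV^k_{g,h+1}-\utV^k_{g,h+1})(s,a)$ plus lower-order terms. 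In both cases the bound matches the definition of $\ov U^k_{g,h}(s,a)$, the coefficient $1+3/H$ absorbing the $O(1/H)$ propagation over $H$ steps so that the estimate stays in $[0,H]$ and the outer clip is harmless. Evaluating at $h=1$, $s=s_0$ then yields the claim.

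I expect the second inequality to be the main obstacle: as in \citet[][Lemma~6]{menard2021fast}, the delicate part is tracking the $\max$ and the two $\clip$'s simultaneously and verifying that the explicit constants ($6$, $36H^2$, $1+3/H$) in the definition of $\ov U^k_g$ are large enough to close the recursion while keeping the estimate bounded by $H$; by contrast, once Lemma~\ref{simple_lemma} is in place, the goal-conditioning itself contributes essentially no additional difficulty.
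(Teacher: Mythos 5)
Your proposal is correct and follows essentially the same route as the paper: the first inequality via optimism of $\utV^k_{g,1}$ (the goal-conditioned analogue of the optimism lemma of \citet{menard2021fast}, proved by backward induction on $h$ using $\cE^{\star}$ and the variance-comparison bounds), and the second via Lemma~\ref{lemma_mathring_bound} to pass to $\rV^k_{g,1}$ followed by the backward induction $\rQ^k_{g,h}-\utQ^k_{g,h}\leq \ov U^k_{g,h}$, evaluated at $(s_0,1)$ along the greedy action. The paper's write-up is simply terser, deferring both inductions to the corresponding arguments in \citet{menard2021fast}, whereas you spell them out; no substantive difference.
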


\begin{proof}
On the event $\cU$, using Lemma \ref{lemma_mathring_bound}, we upper bound the goal-conditioned gap at episode~$t$ as
\begin{align*}
    \ov V^{\pi_{g,1}^{k+1}}_{g,1}(s_0) -  \ov V^{\star}_{g,1}(s_0) \leq  \ov V^{\pi_{g,1}^{k+1}}_{g,1}(s_0) - \wt{\ov V}^k_{g,1}(s_0) \leq \rV_{g,1}^k(s_0) - \wt{\ov V}^k_{g,1}(s_0).
\end{align*}
Next, \textcolor{black}{following the same reasoning as in \citet[][Proof of Lemma 2]{menard2021fast}}, we obtain by induction on $h$ that for all state-action pairs $(s,a)$ and goal states $g$, 
\begin{align} \label{eq:G_larger_gap_bounds}
    \rQ^k_{g,h}(s,a) - \wt{\ov Q}^k_{g,h}(s,a) \leq \ov{U}_{g,h}^k(s,a).
\end{align}
In particular for the initial layer $h=1$ and initial state $s = s_0$, we get that 
\begin{align*}
     \rV_{g,1}^k(s_0) - \wt{\ov V}^k_{g,1}(s_0) = \pi_{g,1}^{k+1} (\rQ^k_{g,1} - \wt{\ov Q}^k_{g,1})(s_0) \leq \pi_{g,1}^{k+1} \ov{U}_{g,1}^k(s_0).
\end{align*}

\end{proof}

\subsubsection{Proof of ``key step \ding{173}''} \label{proof_keystep2}

\begin{lemma} \label{lemma_bound_G}
On the event $\cU$, for every goal state $g\in \cS$ and episode $k$, it holds that
\begin{align*}
  \pi_{g,1}^{k+1} \ov{U}_{g,1}^k(s_0) \leq 24e^{13} H  \sqrt{  \sum_{h=1}^H \sum_{s,a} p_{g,h}^{k+1}(s,a) \frac{\betastar(\bn^k(s,a),\delta)}{\bn^k(s,a)\vee 1} } + 336 e^{13} H^2 \sum_{s,a} \left[ \sum_{h=1}^H  p_{g,h}^{k+1}(s,a) \frac{\beta(\bn^k(s,a),\delta)}{\bn^k(s,a)\vee 1  } \right] \wedge 1,
\end{align*}
where we recall that $p_{g,h}^{k+1}(s,a)$ denotes the probability of reaching $(s,a)$ at step $h$ under policy $\pi_g^{k+1}$.
\end{lemma}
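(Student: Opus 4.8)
The plan is to follow the analysis of \BPIUCBVI \citep{menard2021fast} — specifically the steps controlling their upper-confidence function — adapted to the goal-conditioned finite-horizon models $\ov{\mathcal M}_{g,H}$. First I would drop the outer clip in the definition of $\ov U^{\,k}_{g,h}$ (it only decreases the quantity), yielding the pointwise recursive inequality $\ov U^{\,k}_{g,h}(s,a)\le b^{\,k}_{g,h}(s,a)+(1+\tfrac3H)\,\hp^{\,k}\pi^{k+1}_{g,h+1}\ov U^{\,k}_{g,h+1}(s,a)$, where $b^{\,k}_{g,h}(s,a)\triangleq 6\sqrt{\Var_{\hp^{\,k}}(\utV^k_{g,h+1})(s,a)\,\betastar(n^k(s,a),\delta)/n^k(s,a)}+36H^2\beta(n^k(s,a),\delta)/n^k(s,a)$ is the per-step bonus, together with the boundary conditions $\ov U^{\,k}_{g,h}(g,\cdot)=0$ and $\ov U^{\,k}_{g,H+1}=0$. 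Unrolling this recursion from $h=1$ down to $h=H$ and using $(1+3/H)^{H}\le e^{3}$, the accumulated multiplicative factors collapse into a constant and give $\pi^{k+1}_{g,1}\ov U^{\,k}_{g,1}(s_0)\le e^{3}\sum_{h=1}^{H}\sum_{s,a}\bar p^{\,k}_{g,h}(s,a)\,b^{\,k}_{g,h}(s,a)$, where $\bar p^{\,k}_{g,h}$ is the occupancy measure of $\pi^{k+1}_g$ in $\ov{\mathcal M}_{g,H}$ under the \emph{empirical} transitions $\hp^{\,k}$. Because $\utV^k_{g,h+1}(g)=0$ and $\ov U^{\,k}_{g,h}(g,\cdot)=0$, Lemma~\ref{simple_lemma} lets me work throughout with the transitions of the original MDP $\mathcal M$ rather than the goal-modified ones $p_g$.

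Next I would pass from the empirical occupancies $\bar p^{\,k}_{g,h}$ to the true occupancies $p^{k+1}_{g,h}$ and from $n^k(s,a)$ to the pseudo-count $\bar n^k(s,a)$. On the event $\cU$, the change-of-measure argument of \citet{menard2021fast} (which uses $\cE$ and $\cE^\cnt$) shows that these occupancies differ only by a constant multiplicative factor for the purpose of bounding sums of nonnegative, $O(H)$-bounded functions, with the residual small-count slack absorbed into the second ($\beta/\bar n^k$) term below; similarly, on $\cE^\cnt$ one has $n^k(s,a)\ge\tfrac12\bar n^k(s,a)-\beta^\cnt(\delta)$, so $1/n^k(s,a)\le 4/(\bar n^k(s,a)\vee1)$ whenever $\bar n^k(s,a)$ exceeds a constant multiple of $\beta^\cnt(\delta)$, while for smaller pseudo-counts the relevant terms are controlled using the layerwise clip $\ov U^{\,k}_{g,h}\le H$ and the $\wedge\,1$ in the statement. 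After these substitutions the bound reads, up to constants, $\pi^{k+1}_{g,1}\ov U^{\,k}_{g,1}(s_0)\lesssim \sum_{h,s,a}p^{k+1}_{g,h}(s,a)\sqrt{\Var_{\hp^{\,k}}(\utV^k_{g,h+1})(s,a)\,\betastar(\bar n^k(s,a),\delta)/(\bar n^k(s,a)\vee1)}+H^2\sum_{s,a}\big[\sum_h p^{k+1}_{g,h}(s,a)\,\beta(\bar n^k(s,a),\delta)/(\bar n^k(s,a)\vee1)\big]\wedge1$, where the $\wedge\,1$ on the second sum is obtained by re-unrolling while keeping the clip $\ov U^{\,k}_{g,h}\le H$ in force at every layer, exactly as in the corresponding step of \citet{menard2021fast}.

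The crux — and the step I expect to be the main obstacle — is the variance (first) term: a naive bound $\Var_{\hp^{\,k}}(\utV^k_{g,h+1})\le H^2$ followed by Cauchy--Schwarz would lose an extra $\sqrt H$ and only yield an $H^{3/2}$ prefactor, which is too weak (it would degrade the final sample complexity from $L^3SA$ to $L^4SA$). Instead I would apply Cauchy--Schwarz \emph{first}, writing $\sum_{h,s,a}p^{k+1}_{g,h}(s,a)\sqrt{\Var_{\hp^{\,k}}(\utV^k_{g,h+1})(s,a)\,\betastar(\bar n^k(s,a),\delta)/(\bar n^k(s,a)\vee1)}\le\sqrt{\sum_{h,s,a}p^{k+1}_{g,h}(s,a)\Var_{\hp^{\,k}}(\utV^k_{g,h+1})(s,a)}\cdot\sqrt{\sum_{h,s,a}p^{k+1}_{g,h}(s,a)\,\betastar(\bar n^k(s,a),\delta)/(\bar n^k(s,a)\vee1)}$, and then control the first factor by a law-of-total-variance argument: along trajectories of $\pi^{k+1}_g$ in $\ov{\mathcal M}_{g,H}$ the per-step conditional variances of the value-to-go sum to $O(H^2)$, not $O(H^3)$. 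Making this rigorous requires relating the optimistic function $\utV^k_{g,h+1}$ to a genuine value function, for which I would invoke Lemma~\ref{lemma_mathring_bound} and the sandwich between $\utV^k_{g,\cdot}$, $\rV^k_{g,\cdot}$ and $\ov V^{\pi^{k+1}_g}_{g,\cdot}$ precisely as in \citet{menard2021fast}, obtaining $\sqrt{\sum_{h,s,a}p^{k+1}_{g,h}(s,a)\Var_{\hp^{\,k}}(\utV^k_{g,h+1})(s,a)}=O(H)$. Combining this $O(H)$ bound with the change-of-measure and pseudo-count conversions, and carrying the crude constant bookkeeping from the several $(1+\Theta(1/H))^{\Theta(H)}$ factors through to a single $e^{13}$, gives the stated inequality with prefactors $24e^{13}H$ and $336e^{13}H^2$.
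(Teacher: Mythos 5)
Your overall architecture matches the paper's proof (drop the clip, unroll the recursion, convert counts to pseudo-counts via $\cE^{\cnt}$, finish with Cauchy--Schwarz plus the law of total variance), and you correctly identify that the dangerous step is the variance term. But the order of operations you propose there does not work. You apply Cauchy--Schwarz first and then claim $\sqrt{\sum_{h,s,a}p^{k+1}_{g,h}(s,a)\Var_{\hp^{\,k}}(\utV^k_{g,h+1})(s,a)}=O(H)$ via the law of total variance after invoking Lemma~\ref{lemma_mathring_bound}. The law of total variance gives $\sum_{h,s,a}p^{k+1}_{g,h}(s,a)\Var_{p}(\ovV^{\pi_g^{k+1}}_{g,h+1})(s,a)\le H^2$, i.e., it applies to the \emph{true} variance of the \emph{true} value function of the executed policy. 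Converting $\Var_{\hp^{\,k}}(\utV^k_{g,h+1})$ into that quantity produces additive errors of order $H^2\beta(n^k(s,a),\delta)/n^k(s,a)$ and $H\,p\,(\utV^k_{g,h+1}-\ovV^{\pi_g^{k+1}}_{g,h+1})\le H\,p\,\pi^{k+1}_{g,h+1}\ovU^k_{g,h+1}$; summed against the occupancy measure these can each be $\Theta(H^3)$ (when counts are small, or early on when the optimism gap $\ovU$ is still of order $H$), so your first Cauchy--Schwarz factor is only $O(H^{3/2})$ in general and you land back on the $L^4$ rate you were trying to avoid.

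The paper escapes this by performing the variance conversion \emph{inside} the recursion, while the $\sqrt{\betastar(n^k(s,a),\delta)/n^k(s,a)}$ factor is still attached: the cross term $\sqrt{H\,p\,\pi\,\ovU\cdot\betastar/n}$ is then split via $\sqrt{xy}\le x+y$ into a $\tfrac1H\,p\,\pi\,\ovU$ correction --- absorbed into the recursion factor, which is why the constant is $(1+13/H)^H\le e^{13}$ rather than your $e^{3}$ --- plus a second-order $H^2\beta/n$ term, and similarly for the $H^2\beta/n$ error in the variance comparison. Only after the recursion has been unrolled into a sum of $\sqrt{\Var_p(\ovV^{\pi_g^{k+1}}_{g,h+1})\,\betastar/\bn}$ terms is Cauchy--Schwarz applied, at which point the law of total variance legitimately yields the $O(H)$ factor. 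Your occupancy-level change of measure from $\hp^{\,k}$ to $p$ is essentially a cosmetic reorganization of the paper's term-by-term Bernstein step and could be made rigorous, but the variance step as you describe it is a genuine gap, not a bookkeeping issue.
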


\begin{proof}
Similar to \citet[][Steps 1 \& 2 in proof of Theorem 2]{menard2021fast}, we begin by upper-bounding $\ov{U}_{g,h}^k(s,a)$ for all $(s,a,h,g,k)$. If $n^k(s,a) > 0$, by definition of $\ovU_{g,h}^k$ we have that
\begin{align}
  \ovU_{g,h}^{\,k}(s,a) \leq  6\sqrt{\Var_{\hp^{\,k}}(\utV_{g,h+1}^k)(s,a) \frac{\betastar(n^k(s,a),\delta)}{n^k(s,a)}}+ 36 H^2 \frac{\beta(n^k(s,a),\delta)}{n^k(s,a)} + \left(1+\frac{3}{H}\right)\hp^{\,k} \pi_{g,h+1}^{k+1} \ovU_{g,h+1}^k(s,a).
  \label{eq:def_G_without_min}
\end{align}
We now replace the empirical transition probabilities with the true ones. Using the Bernstein-type technical inequality of \citet[][Lemma 10]{menard2021fast} and that $0 \leq \ovU_{g,h}^k \leq H$, we get
\begin{align*}
  (\hp^{\,k}-p) \pi_{g,h+1}^{k+1} \ovU_{g,h+1}^k(s,a) &\leq \sqrt{2\Var_{p}(\pi_{g,h+1}^{k+1} \ovU_{g,h+1}^k)(s,a) \frac{\beta(n^k(s,a),\delta)}{n^k(s,a)}} + \frac{2}{3}H \frac{\beta(n^k(s,a),\delta)}{n^k(s,a)}\\
  &\leq \frac{1}{H} p \pi_{g,h+1}^{k+1} \ovU_{g,h+1}^k(s,a) +  3 H^2\frac{\beta(n^k(s,a),\delta)}{n^k(s,a)}\CommaBin
\end{align*}
where in the last line we used $\Var_{p}(\pi_{h+1}^{k+1} \ovU_{g,h+1}^k)(s,a) \leq H \pi_{g,h+1}^{k+1} \ovU_{g,h+1}^k(s,a)$ and $\sqrt{xy}\leq x+y$ for all $x,y\geq 0$. We then replace the variance of the upper confidence bound under the empirical transition probabilities by the variance of the optimal value function under the true transition probabilities. Using the technical lemmas of \citet[][Lemma 11 \& 12]{menard2021fast} that control the deviation in variances w.r.t.\,the choice of transition probabilities, we obtain that
\begin{align*}
\Var_{\hp^{\,k}}(\utV_{g,h+1}^k)(s,a) &\leq 2\Var_{p}(\utV_{g,h+1}^k)(s,a) + 4 H^2 \frac{\beta(n^k(s,a),\delta)}{n^k(s,a)}\\
&\leq 4 \Var_{p}(\ovV_{g,h+1}^{\pi_g^{k+1}})(s,a) + 4 H p (\utV_{g,h+1}^k-\ovV_{g,h+1}^{\pi_g^{k+1}})(s,a)+4H^2 \frac{\beta(n^k(s,a),\delta)}{n^k(s,a)}\\
&\leq 4 \Var_{p}(\ovV_{g,h+1}^{\pi_g^{k+1}})(s,a) + 4 H p \pi_{g,h+1}^{k+1}\ovU_{g,h+1}^{k}(s,a)+4H^2 \frac{\beta(n^k(s,a),\delta)}{n^k(s,a)}\CommaBin
\end{align*}
where we used \eqref{eq:G_larger_gap_bounds} in the last inequality. Next, using $\sqrt{x+y}\leq \sqrt{x}+\sqrt{y}$, $\sqrt{xy}\leq x+y,$ and $\betastar(n, \delta)\leq \beta(n,\delta)$ leads to
\begin{align*}
\sqrt{\Var_{\hp^{\,k}}(\utV_{g,h+1}^k)(s,a) \frac{\betastar(n^k(s,a),\delta)}{n^k(s,a)}} &\leq 2\sqrt{ \Var_{p}(\ovV_{g,h+1}^{\pi_g^{k+1}})(s,a) \frac{\betastar(n^k(s,a),\delta)}{n^k(s,a)}}+ (2 H +4H^2) \frac{\beta(n^k(s,a),\delta)}{n^k(s,a)}\\
&\qquad+ \frac{1}{H} p \pi_{g,h+1}^{k+1}\ovU_{g,h+1}^{k}(s,a)\\
& \leq 2\sqrt{ \Var_{p}(\ovV_{g,h+1}^{\pi_g^{k+1}})(s,a) \frac{\betastar(n^k(s,a),\delta)}{n^k(s,a)}}+ 6H^2 \frac{\beta(n^k(s,a),\delta)}{n^k(s,a)}\\
&\qquad+ \frac{1}{H} p \pi_{g,h+1}^{k+1} \ovU_{g,h+1}^k(s,a).
\end{align*}
Combining these two inequalities with~\eqref{eq:def_G_without_min} yields
\begin{align*}
  \ovU_{g,h}^{\,k}(s,a) &\leq 12\sqrt{ \Var_{p}(\ovV_{g,h+1}^{\pi_g^{k+1}})(s,a) \frac{\betastar(n^k(s,a),\delta)}{n^k(s,a)}}+ 36H^2 \frac{\beta(n^k(s,a),\delta)}{n^k(s,a)}\\
  &\qquad+ \frac{6}{H} p \pi_{g,h+1}^{k+1} \ovU_{g,h+1}^k(s,a) + 36 H^2 \frac{\beta(n^k(s,a),\delta)}{n^k(s,a)}\\
  &\qquad+\left(1+\frac{3}{H}\right) \frac{1}{H} p \pi_{g,h+1}^{k+1} \ovU_{g,h+1}^k + \left(1+\frac{3}{H}\right) 3 H^2\frac{\beta(n^k(s,a),\delta)}{n^k(s,a)}\\
  &\qquad+  \left(1+\frac{3}{H}\right)p \pi_{g,h+1}^{k+1} \ovU_{g,h+1}^k(s,a)\\
  &\leq 12 \sqrt{ \Var_{p}(\ovV_{g,h+1}^{\pi_g^{k+1}})(s,a) \frac{\betastar(n^k(s,a),\delta)}{n^k(s,a)}} + 84 H^2 \frac{\beta(n^k(s,a),\delta)}{n^k(s,a)}\\
  &\qquad +  \left(1+\frac{13}{H}\right)p \pi_{g,h+1}^{k+1} \ovU_{g,h+1}^k(s,a).
\end{align*}
Since by construction, $\ovU_{g,h}^{\,k}(s,a)\leq H,$ we have that for all $n^k(s,a)\geq 0,$
\begin{align*}
  \ovU_{g,h}^{\,k}(s,a) & \leq 12 \sqrt{ \Var_{p}(\ovV_{g,h+1}^{\pi_g^{k+1}})(s,a) \left(\frac{\betastar(n^k(s,a),\delta)}{n^k(s,a)} \wedge 1 \right)} + 84 H^2 \left(\frac{\beta(n^k(s,a),\delta)}{n^k(s,a)}\wedge 1\right)\\
  &\qquad +  \left(1+\frac{13}{H}\right)p \pi_{g,h+1}^{k+1} \ovU_{g,h+1}^k(s,a).
\end{align*}
Unfolding the previous inequality and using $(1+13/H)^H\leq e^{13}$ we get
\begin{align*}
  \pi_{g,1}^{k+1} \ovU_{g,1}^k(s_0) & \leq 12e^{13} \sum_{h=1}^H \sum_{s,a} p_{g,h}^{k+1}(s,a) \sqrt{ \Var_{p}(\ovV_{g,h+1}^{\pi_g^{k+1}})(s,a) \left(\frac{\betastar(n^k(s,a),\delta)}{n^k(s,a)} \wedge 1 \right)}\\
   &\qquad + 84  e^{13} H^2 \sum_{h=1}^H \sum_{s,a} p_{g,h}^{k+1}(s,a) \left(\frac{\beta(n^k(s,a),\delta)}{n^k(s,a)}\wedge 1\right).
\end{align*}
Using that $\pi_{g,1}^{k+1} \ovU_{g,1}^k(s_0) \leq H$, we can clip the above bound as follows 
\begin{align} \label{eq_clip_U}
  \pi_{g,1}^{k+1} \ovU_{g,1}^k(s_0) & \leq 12e^{13} \sum_{h=1}^H \sum_{s,a} p_{g,h}^{k+1}(s,a) \sqrt{ \Var_{p}(\ovV_{g,h+1}^{\pi_g^{k+1}})(s,a) \left(\frac{\betastar(n^k(s,a),\delta)}{n^k(s,a)} \wedge 1 \right)} \nonumber\\
   &\qquad + 84  e^{13} H^2 \sum_{s,a} \left[ \sum_{h=1}^H p_{g,h}^{k+1}(s,a) \left(\frac{\beta(n^k(s,a),\delta)}{n^k(s,a)}\wedge 1\right) \right] \wedge 1. 
\end{align}
From the technical lemma of \citet[][Lemma 8]{menard2021fast} that relates counts to pseudo-counts, 
\begin{align*}
    \frac{\beta(n^k(s,a),\delta)}{n^k(s,a)} \wedge 1 \leq 4 \frac{\beta(\bn^k(s,a),\delta)}{\bn^k(s,a) \vee 1},
\end{align*}
thus we can replace the counts by the pseudo-counts in \eqref{eq_clip_U} as
\begin{align}
  \pi_{g,1}^{k+1} \ovU_{g,1}^k(s_0) & \leq 24e^{13} \sum_{h=1}^H \sum_{s,a} p_{g,h}^{k+1}(s,a) \sqrt{ \Var_{p}(\ovV_{g,h+1}^{\pi_g^{k+1}})(s,a) \frac{\betastar(\bn^k(s,a),\delta)}{\bn^k(s,a)\vee 1} }\nonumber\\
   &\qquad + 336 e^{13} H^2 \sum_{s,a} \left[ \sum_{h=1}^H  p_{g,h}^{k+1}(s,a) \frac{\beta(\bn^k(s,a),\delta)}{\bn^k(s,a)\vee 1  } \right] \wedge 1.  \label{eq:upper_bound_G1}
\end{align}
We now apply the law of total variance (see e.g.,\,\citealp{azar2017minimax} or \citealp[][Lemma 7]{menard2021fast}) in order to further upper-bound the first sum in~\eqref{eq:upper_bound_G1}. In particular, by Cauchy-Schwarz inequality, we obtain
\begin{align*}
  \sum_{h=1}^H \sum_{s,a} p_{g,h}^{k+1}(s,a) &\sqrt{ \Var_{p}(\ovV_{g,h+1}^{\pi_g^{k+1}})(s,a) \frac{\betastar(\bn^k(s,a),\delta)}{\bn^k(s,a)\vee 1} }  \\
  &\leq
  \sqrt{  \sum_{h=1}^H \sum_{s,a} p_{g,h}^{k+1}(s,a) \Var_{p}(\ovV_{g,h+1}^{\pi_g^{k+1}})(s,a) } \sqrt{  \sum_{h=1}^H \sum_{s,a} p_{g,h}^{k+1}(s,a) \frac{\betastar(\bn^k(s,a),\delta)}{\bn^k(s,a)\vee 1} }\\
  &\leq \sqrt{ \mathbb{E}_{\pi_g^{k+1}}\!\left[ \left(\sum_{h=1}^H \mathds{1}[s_h \neq g] - \ovV_{g,1}^{\pi_g^{k+1}}(s_0)\right)^2 \right] } \sqrt{  \sum_{h=1}^H \sum_{s,a} p_{g,h}^{k+1}(s,a) \frac{\betastar(\bn^k(s,a),\delta)}{\bn^k(s,a)\vee 1} }\\
  &\leq H  \sqrt{  \sum_{h=1}^H \sum_{s,a} p_{g,h}^{k+1}(s,a) \frac{\betastar(\bn^k(s,a),\delta)}{\bn^k(s,a)\vee 1} }\cdot
\end{align*}
Plugging this in~\eqref{eq:upper_bound_G1} concludes the proof of Lemma \ref{lemma_bound_G}.
\end{proof}

We are now ready to derive ``key step \ding{173}'' which controls the cumulative gap bounds, see Equation~\ref{eq_PE}. 

\begin{lemma}\label{bound_sum_gaps}
On the event $\cU$, for any number of episodes $K \geq 1$, it holds that
    \begin{align*}
\sum_{k=0}^{K-1} \pi_{g_{k+1},1}^{k+1} \ovU_{g_{k+1},1}^k(s_0) &\leq  48 e^{13} \sqrt{K}\sqrt{H^2 S A \log(H K +1) \betastar(K,\delta)} + 1344 e^{13} H^2 S A \beta(K,\delta) \log(H K+1) \\
  &\quad + 48 e^{13} H^2 S A \sqrt{ \betastar(K,\delta)}.
    \end{align*}
\end{lemma}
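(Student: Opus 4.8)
The plan is to deduce Lemma~\ref{bound_sum_gaps} from the per-episode bound of Lemma~\ref{lemma_bound_G} by a termwise application followed by two instances of the standard pseudo-count summation argument of \citet{menard2021fast}. First I would apply Lemma~\ref{lemma_bound_G} with the goal $g=g_{k+1}$ selected by \AdaGoal at episode $k+1$ and sum over $k=0,\dots,K-1$, which splits the left-hand side as $\sum_{k=0}^{K-1}\pi_{g_{k+1},1}^{k+1}\ovU_{g_{k+1},1}^k(s_0)\le\mathcal T_{\mathrm{var}}+\mathcal T_{\mathrm{bon}}$, where $\mathcal T_{\mathrm{var}}\triangleq 24e^{13}H\sum_{k=0}^{K-1}\sqrt{\sum_{h,s,a}p_{g_{k+1},h}^{k+1}(s,a)\,\betastar(\bn^k(s,a),\delta)/(\bn^k(s,a)\vee 1)}$ and $\mathcal T_{\mathrm{bon}}\triangleq 336e^{13}H^2\sum_{k=0}^{K-1}\big[\sum_{h,s,a}p_{g_{k+1},h}^{k+1}(s,a)\,\beta(\bn^k(s,a),\delta)/(\bn^k(s,a)\vee 1)\big]\wedge 1$. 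Crucially, by Lemma~\ref{simple_lemma} and the definition of the pseudo-counts $\bn^k$, the sequence of selected goals enters only through the visitation distributions $p_{g_{k+1},h}^{k+1}$, exactly as an adaptively chosen policy does in the finite-horizon analysis of \citet{menard2021fast}; hence the count-summation tools transfer verbatim.

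For $\mathcal T_{\mathrm{var}}$ I would first use that $\bn^k(s,a)\le HK$ and the monotonicity of $\betastar$ to replace $\betastar(\bn^k(s,a),\delta)$ by $\betastar(K,\delta)$ up to an absorbed multiplicative constant, then apply Cauchy--Schwarz across the $K$ episodes to obtain $\mathcal T_{\mathrm{var}}\le 24e^{13}H\sqrt{K\,\betastar(K,\delta)}\cdot\sqrt{\sum_{k=0}^{K-1}\sum_{h,s,a}p_{g_{k+1},h}^{k+1}(s,a)/(\bn^k(s,a)\vee 1)}$. The inner sum is controlled by the pseudo-count summation lemma: grouping by $(s,a)$, each $(s,a)$-slice is $\sum_k \Delta_{k+1}(s,a)/(\bn^k(s,a)\vee 1)$ with $\Delta_{k+1}(s,a)=\sum_h p_{g_{k+1},h}^{k+1}(s,a)\in[0,H]$ and $\bn^{k+1}=\bn^k+\Delta_{k+1}$, which telescopes as $(\bn^{k+1}-\bn^k)/\bn^k\lesssim\log(\bn^{k+1}/\bn^k)$ once the running count exceeds the $\le H$-bounded increments, the finitely many burn-in/large-jump episodes contributing an extra additive $O(H)$; summing over $\cS\times\cA$ this yields $\le 2HSA\log(HK+1)+2HSA$. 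Finally, using $\sqrt{x+y}\le\sqrt x+\sqrt y$, I would split $\mathcal T_{\mathrm{var}}$ into the leading term $\asymp\sqrt{KH^2SA\log(HK+1)\betastar(K,\delta)}$ and the episode-independent term $\asymp H^2SA\sqrt{\betastar(K,\delta)}$, matching the first and third summands of the claimed bound.

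For $\mathcal T_{\mathrm{bon}}$ the $\wedge 1$ clip is what I would lean on: I would first pass from true counts to pseudo-counts via \citet[][Lemma~8]{menard2021fast} (already invoked inside the proof of Lemma~\ref{lemma_bound_G}), and then apply the \emph{clipped} pseudo-count summation lemma of \citet{menard2021fast}, which exploits the clipping to bound $\sum_{k}\big[\sum_{s,a}\sum_h p_{g_{k+1},h}^{k+1}(s,a)\,\beta(\bn^k(s,a),\delta)/(\bn^k(s,a)\vee 1)\big]\wedge 1$ by $O\!\big(SA\,\beta(K,\delta)\log(HK+1)\big)$ — the clip is precisely what prevents the burn-in/large-jump episodes from inflating the count by a factor $H$. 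Multiplying by the prefactor $336e^{13}H^2$ recovers the second summand $1344e^{13}H^2SA\beta(K,\delta)\log(HK+1)$. The main obstacle is purely this bookkeeping of the two count-summation lemmas, and in particular getting the powers of $H$ right: the $\sqrt{x+y}\le\sqrt x+\sqrt y$ split must route the $O(HSA)$ burn-in contribution into the standalone $H^2SA\sqrt{\betastar(K,\delta)}$ term rather than into the $\sqrt K$-dependent term, and the $\wedge 1$ clip in the bonus term must genuinely be used to save the factor $H$. Both facts are established in \citet{menard2021fast}, so the remaining work is to check that nothing in the goal-conditioned reduction breaks them, which is guaranteed by Lemma~\ref{simple_lemma}.
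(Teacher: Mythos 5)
Your overall strategy matches the paper's: sum the per-episode bound of Lemma~\ref{lemma_bound_G}, pull out the monotone $\beta$-factors, and control the resulting count ratios with the pigeonhole-type lemma (Lemma~\ref{lem:pigeonhole}). The bonus term $\mathcal T_{\mathrm{bon}}$ is handled correctly and exactly as in the paper: the $\wedge 1$ clip lets the third inequality of Lemma~\ref{lem:pigeonhole} absorb the large-increment episodes into the $\log(HK+1)$ factor, giving $4SA\log(HK+1)$ and hence the second summand.

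There is, however, a genuine gap in your treatment of $\mathcal T_{\mathrm{var}}$, and it is precisely at the point you flag as "bookkeeping." You apply Cauchy--Schwarz across \emph{all} $K$ episodes first and only then bound the inner double sum by $O(SA\log(HK+1)) + O(HSA)$. With that order of operations the burn-in contribution $O(HSA)$ sits inside $\sqrt{K\cdot(\,\cdot\,)}$, so after $\sqrt{x+y}\le\sqrt x+\sqrt y$ you are left with a term of order $H\sqrt{\betastar}\cdot\sqrt{K\cdot HSA}=\sqrt{KH^3SA\,\betastar}$. This is not dominated by either the $\sqrt{KH^2SA\log(HK+1)\betastar}$ term or the $K$-independent $H^2SA\sqrt{\betastar}$ term of the claimed bound (it exceeds the former by $\sqrt{H/\log(HK+1)}$ once $K\gtrsim HSA$), and downstream it would degrade the sample complexity from $\wt O(L^3SA\epsilon^{-2})$ to $\wt O(L^4SA\epsilon^{-2})$. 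The routing you correctly identify as necessary cannot be achieved by any choice of how to split $\sqrt{x+y}$ after a global Cauchy--Schwarz; it requires partitioning the episode index set \emph{before} Cauchy--Schwarz. This is what the paper does: on the large-increment set $\cJ=\{k:\bn^k(s,a)<\bn^{k+1}(s,a)-\bn^k(s,a)-1\}$ it uses subadditivity of the square root, $\sum_{k\in\cJ}\sqrt{\sum_{s,a}x_{k,s,a}}\le\sum_{s,a}\sum_{k\in\cJ}\sqrt{x_{k,s,a}}$, together with the first inequality of Lemma~\ref{lem:pigeonhole} (which gives $\le 2H$ per state-action pair) to obtain $2SAH$ with no $\sqrt K$ factor, yielding the standalone $48e^{13}H^2SA\sqrt{\betastar(K,\delta)}$ term; Cauchy--Schwarz is applied only on the complement $\cJ^c$, where the second inequality of Lemma~\ref{lem:pigeonhole} gives the purely logarithmic bound and hence the leading $\sqrt K$ term.
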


\begin{proof}
Plugging in the bound of Lemma \ref{lemma_bound_G} yields
\begin{align*}
  &\sum_{k=0}^{K-1} \pi_{g_{k+1},1}^{k+1} \ovU_{g_{k+1},1}^k(s_0) \\
  &\leq 24e^{13} H  \sum_{k=0}^{K-1} \sqrt{  \sum_{h=1}^H \sum_{s,a} p_{g_{k+1},h}^{k+1}(s,a) \frac{\betastar(\bn^k(s,a),\delta)}{\bn^k(s,a)\vee 1} } + 336 e^{13} H^2  \sum_{k=0}^{K-1}  \sum_{s,a} \left[ \sum_{h=1}^H  p_{g,h}^{k+1}(s,a) \frac{\beta(\bn^k(s,a),\delta)}{\bn^k(s,a)\vee 1  } \right] \wedge 1\\
  &\leq 24e^{13} H  \sqrt{ \betastar(H K,\delta) } \sum_{k=0}^{K-1} \sqrt{  \sum_{s,a}  \frac{\bn^{k+1}(s,a)-\bn^k(s,a)}{\bn^k(s,a)\vee 1} } + 336 e^{13} H^2 \beta(H K,\delta) \sum_{s,a}  \sum_{k=0}^{K-1} \left[ \frac{\bn^{k+1}(s,a)-\bn^k(s,a)}{\bn^k(s,a)\vee 1} \right] \wedge 1,
\end{align*}
where we used that $\beta(.,\delta)$ and $\betastar(.,\delta)$ are increasing. We define $\cJ \triangleq \{ k \in [0, K-1]: \bn^k(s,a) < \bn^{k+1}(s,a)-\bn^k(s,a) -1 \}$. Applying Lemma \ref{lem:pigeonhole} gives that
\begin{align*}
    &\sum_{k \in \cJ} \sqrt{ \sum_{s,a}   \frac{\bn^{k+1}(s,a)-\bn^k(s,a)}{\bn^k(s,a)\vee 1} } \leq  \sum_{s,a}   \underbrace{\sum_{k \in \cJ} \sqrt{ \frac{\bn^{k+1}(s,a)-\bn^k(s,a)}{\bn^k(s,a)\vee 1} }}_{\leq 2 H} \leq 2 S A H, \\
    &\sum_{k \notin \cJ} \sqrt{  \sum_{s,a}  \frac{\bn^{k+1}(s,a)-\bn^k(s,a)}{\bn^k(s,a)\vee 1} } \leq \sqrt{K} \sqrt{  \sum_{s,a} \underbrace{ \sum_{k \notin \cJ} \frac{\bn^{k+1}(s,a)-\bn^k(s,a)}{\bn^k(s,a)\vee 1} }_{\leq 4 \log(H K + 1)} } \leq 2 \sqrt{K S A \log(H K + 1)}, \\
    &\sum_{s,a} \underbrace{ \sum_{k=0}^{K-1}   \left[ \frac{\bn^{k+1}(s,a)-\bn^k(s,a)}{\bn^k(s,a)\vee 1} \right] \wedge 1 }_{\leq 4 \log(H K + 1)} \leq 4 S A \log(H K + 1).
\end{align*}
Putting everything together yields Lemma \ref{bound_sum_gaps}. \textcolor{black}{Note that as opposed to \citet{menard2021fast}, we are in the setting of stationary transition probabilities (and cost functions), which is why we are able to shave a factor $H$ in the main order term of the bound of the cumulative gap bounds (also recall that their sample complexity bound is in terms of exploration episodes and not exploration steps as ours).} 
\end{proof}

\begin{lemma}[Technical lemma] 
	\label{lem:pigeonhole}
	 For $T\in\N^\star$ and $(u_t)_{t\in\N^\star},$ for any sequence where $u_t\in[0,H]$ for some constant $H > 0$ and $U_t \triangleq \sum_{l=1}^t u_\ell$, let $\Omega \triangleq \{ t \in [0, T]: U_t < u_{t+1} -1\}$ and $\omega \triangleq \max\{ t \in \Omega\}$. Then it holds that
	 \begin{align*}
		&\sum_{t \in \Omega} \sqrt{\frac{u_{t+1}}{U_t\vee 1}} \leq 2 H, \\
		&\sum_{t \notin \Omega} \frac{u_{t+1}}{U_t\vee 1} \leq 4\log(U_{T+1}+1), \\
		&\sum_{t=0}^T \left[ \frac{u_{t+1}}{U_t\vee 1} \right] \wedge 1 \leq 4\log(U_{T+1}+1).
	\end{align*}
\end{lemma}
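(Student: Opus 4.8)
The plan is to prove the three inequalities separately, driving everything by the dichotomy between \emph{doubling steps} $t\in\Omega$, for which $u_{t+1}>U_t+1$, and \emph{regular steps} $t\notin\Omega$, for which $u_{t+1}\le U_t+1$. Throughout I would work with the shifted and truncated quantities $U_t+1$ and $U_t\vee 1$ (which satisfy $U_t\vee 1\ge (U_t+1)/2$) rather than with $U_t$ itself, so as not to be tripped up by the fact that in the intended application the $U_t$ are fractional pseudo-counts possibly much smaller than $1$.

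For the first sum, the point is that a step $t\in\Omega$ genuinely doubles $U_t+1$: since $u_{t+1}>U_t+1$, one has $U_{t+1}+1=U_t+u_{t+1}+1>2(U_t+1)$. Writing $\Omega=\{t_1<\cdots<t_m\}$ and using that $(U_t)$ is non-decreasing (so $U_{t_j}\ge U_{t_{j-1}+1}$), iterating this bound gives $U_{t_j}+1\ge 2^{j-1}$ for every $j$ (the base case $j=1$ being trivial since $U_{t_1}\ge 0$). On the other hand $t_j\in\Omega$ together with $u_{t_j+1}\le H$ forces $U_{t_j}+1<H$, hence $2^{m-1}<H$, i.e.\ $m\le\log_2 H+1$ (and a non-empty $\Omega$ therefore forces $H>1$). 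Bounding each summand crudely by $\sqrt{u_{t+1}/(U_t\vee 1)}\le\sqrt{u_{t+1}}\le\sqrt H$ then gives $\sum_{t\in\Omega}\sqrt{u_{t+1}/(U_t\vee 1)}\le(\log_2 H+1)\sqrt H\le 2H$, the last step being the elementary inequality $\log_2 H+1\le 2\sqrt H$ valid for $H\ge 1$.

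For the second and third sums I would use a single telescoping-in-$\log$ argument, showing that \emph{every} summand appearing in either sum is bounded by $4\log\frac{U_{t+1}+1}{U_t+1}$. For $t\notin\Omega$, since $u_{t+1}\le U_t+1$ the ratio $r\triangleq\frac{U_{t+1}+1}{U_t+1}$ lies in $[1,2]$, and
\[
  \frac{u_{t+1}}{U_t\vee 1}\;\le\;\frac{2(U_{t+1}-U_t)}{U_t+1}\;=\;2(r-1)\;\le\;4\log r,
\]
where $r-1\le 2\log r$ on $[1,2]$ holds because both sides vanish at $r=1$ and $\frac{d}{dr}(2\log r)=2/r\ge 1=\frac{d}{dr}(r-1)$ there; this handles every term of the second sum and, a fortiori after applying $\wedge 1$, the $t\notin\Omega$ terms of the third. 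For $t\in\Omega$, which only enters the third sum, the doubling bound gives $r>2$, so $[u_{t+1}/(U_t\vee 1)]\wedge 1\le 1\le 4\log 2\le 4\log r$. Summing $4\log\frac{U_{t+1}+1}{U_t+1}$ over $t\in[0,T]$ telescopes to $4\log\frac{U_{T+1}+1}{U_0+1}\le 4\log(U_{T+1}+1)$ since $U_0\ge 0$, which gives exactly the two remaining inequalities.

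The only real subtlety — the ``main obstacle'', such as it is — is keeping all the multiplicative constants absolute when $U_t$ may be small and fractional; this is precisely what the $U_t\vee 1$ / $U_t+1$ bookkeeping achieves, and nothing else in the argument is more than a one-line estimate. I would close by recording that this lemma is applied in Lemma~\ref{bound_sum_gaps} with $u_{t+1}$ taken to be the pseudo-count increments $\bn^{k+1}(s,a)-\bn^k(s,a)\in[0,H]$ at a fixed state-action pair, which is exactly the regime $u_t\in[0,H]$ assumed here.
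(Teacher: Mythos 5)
Your proof is correct. For the second and third inequalities you follow essentially the same route as the paper: the observation that $t\notin\Omega$ gives $U_{t+1}+1\le 2(U_t+1)$, the replacement of $U_t\vee 1$ by $(U_t+1)/2$, and a telescoping bound by $4\log(U_{T+1}+1)$ (you telescope $4\log\frac{U_{t+1}+1}{U_t+1}$ directly via $r-1\le 2\log r$ on $[1,2]$, whereas the paper bounds $\frac{U_{t+1}-U_t}{U_{t+1}+1}$ by $\int_{U_t}^{U_{t+1}}\frac{dx}{x+1}$ — the same estimate in integral form). For the first inequality your argument is genuinely different: you count $|\Omega|\le \log_2 H+1$ by a doubling argument on $U_t+1$ and bound each summand by $\sqrt{H}$, then use $(\log_2 H+1)\sqrt{H}\le 2H$. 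The paper instead notes that each summand is at least $1$ on $\Omega$, so $\sqrt{x}\le x$ applies, and then $\sum_{t\in\Omega}\frac{u_{t+1}}{U_t\vee 1}\le\sum_{t\le\omega}u_{t+1}=U_{\omega+1}\le u_{\omega+1}-1+u_{\omega+1}\le 2H$ using $\omega\in\Omega$. The paper's version is shorter and more elementary; yours actually yields the asymptotically sharper bound $O(\sqrt{H}\log H)$ for that sum, though at the cost of a final numerical inequality ($\log_2 H+1\le 2\sqrt{H}$ for $H\ge 1$, with the caveat you correctly note that a nonempty $\Omega$ forces $H>1$). Both are valid proofs of the stated claims.
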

\begin{proof}
First, note that for any $t \in \Omega$, $\frac{u_{t+1}}{U_t \vee 1} \geq 1$, therefore
\begin{align*}
		\sum_{t \in \Omega} \sqrt{\frac{u_{t+1}}{U_t\vee 1}} \leq \sum_{t \in \Omega} \frac{u_{t+1}}{U_t\vee 1} \leq \sum_{t \in \Omega} u_{t+1} \leq \sum_{t=0}^{\omega} u_{t+1} = U_{\omega + 1} = U_{\omega} + u_{\omega} \leq u_{\omega+1} - 1 + u_{\omega} \leq 2 H.
\end{align*}
Second, if $t \notin \Omega$, then $2 U_t + 2 \geq U_{t+1} + 1$, therefore 
\begin{align*}
    \frac{u_{t+1}}{U_t\vee 1} \leq 4 \frac{u_{t+1} }{2U_t + 2} \leq 4 \frac{U_{t+1}-U_{t}}{U_{t+1} + 1},
\end{align*}
which yields that
\begin{align*}
	\sum_{t \notin \Omega} \frac{u_{t+1}}{U_t\vee 1} \leq 4\sum_{t \notin \Omega} \frac{U_{t+1}-U_{t}}{U_{t+1} + 1} \leq 4\sum_{t=0}^T \int_{U_t}^{U_{t+1}} \frac{1}{x+1} \mathrm{d}x \leq 4\log(U_{T+1}+1).
	\end{align*}
Third, combining the two cases above and further noticing that $4 \frac{U_{t+1}-U_{t}}{U_{t+1} + 1} =  \frac{4u_{t+1}}{U_t + u_{t+1} + 1} \geq  \frac{4u_{t+1}}{2 u_{t+1}} \geq 1$ for all $t\in \Omega$, it holds that 
\begin{align*}
    \left[ \frac{u_{t+1}}{U_t\vee 1} \right] \wedge 1 \leq 4 \frac{U_{t+1}-U_{t}}{U_{t+1} + 1},
\end{align*}
thus 
\begin{align*}
    \sum_{t=0}^T \left[ \frac{u_{t+1}}{U_t\vee 1} \right] \wedge 1 \leq 4 \sum_{t=0}^T \frac{U_{t+1}-U_{t}}{U_{t+1} + 1} \leq 4\sum_{t=0}^T \int_{U_t}^{U_{t+1}} \frac{1}{x+1} \mathrm{d}x \leq 4\log(U_{T+1}+1).
\end{align*}
\end{proof}

\subsubsection{Proof of ``key step \ding{174}''} \label{proof_keystep3}

\begin{lemma} \label{lemmaAPP_algo_sample_complexity_for_MGE}
The \MGE sample complexity $\tau$ of algorithm \ALGO can be bounded with probability at least $1-\delta$ by\footnote{We say that $f = O(g)$ if there exists an absolute constant $\iota > 0$ (independent of the MDP instance) such that $f \leq \iota g$.} 
\begin{align*}
 \tau = O\Bigg( \,&\frac{L^3 S A}{\epsilon^2} \cdot \log^3\Big(\frac{L S A }{\epsilon \delta}\Big) \cdot \log^3\Big(\frac{L}{\epsilon}\Big) + \frac{L^3 S^2 A}{\epsilon} \cdot \log^3\Big(\frac{L S A }{\epsilon \delta}\Big) \cdot \log^3\Big(\frac{L}{\epsilon}\Big) \,\Bigg). 
\end{align*}
\end{lemma}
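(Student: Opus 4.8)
The plan is to bound the number of algorithmic episodes $\kappa$ at which \ALGO triggers its stopping rule, and then multiply by the per-episode length $H+1$. Throughout I would condition on the favourable event $\cU$ of~\eqref{eq_event_hp}, which holds with probability at least $1-\delta$ by Lemma~\ref{lem:proba_master_event}; every inequality below is understood on $\cU$. The starting point is the design of the \AdaGoal goal-selection rule together with the stopping rule~\eqref{stopping_rule}: for every episode $k\le\kappa-1$ the stopping rule is not met, so $\max_{g:\,\D_k(g)\le L}\E_k(g)>\epsilon$, and since $g_k$ attains this maximum we get $\E_k(g_k)>\epsilon$. Unfolding the definition~\eqref{eq_Et}, $\E_k(g_k)=\pi_{g_k,1}^{k}\ovU_{g_k,1}^{k-1}(s_0)+\tfrac{8\epsilon}{9}$, this gives $\pi_{g_k,1}^{k}\ovU_{g_k,1}^{k-1}(s_0)>\tfrac{\epsilon}{9}$ for all $k\le\kappa-1$; summing over $k$ and re-indexing yields $\tfrac{\epsilon}{9}(\kappa-1)<\sum_{k=0}^{\kappa-2}\pi_{g_{k+1},1}^{k+1}\ovU_{g_{k+1},1}^{k}(s_0)$, where Lemma~\ref{lemma_value_gap} (key step~\ding{172}) is what certifies that these $\ovU$ quantities are legitimate gap upper bounds. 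I would then apply Lemma~\ref{bound_sum_gaps} (key step~\ding{173}) with $K=\kappa-1$ to control the right-hand side, obtaining the functional inequality
\begin{align*}
  \tfrac{\epsilon}{9}(\kappa-1) \;\le\; 48 e^{13}\sqrt{(\kappa-1)\,H^2 S A\,\log(H(\kappa-1)+1)\,\betastar(\kappa-1,\delta)} \;+\; 1344 e^{13} H^2 S A\,\beta(\kappa-1,\delta)\,\log(H(\kappa-1)+1) \;+\; 48 e^{13} H^2 S A\sqrt{\betastar(\kappa-1,\delta)}.
\end{align*}

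Next I would invert this inequality in $\kappa$. Plugging in the explicit choices $\beta(n,\delta)=O(S\log(SAHn/\delta))$ and $\betastar(n,\delta)=O(\log(SAHn/\delta))$ from Lemma~\ref{lem:proba_master_event}, the bound has the form $\tfrac{\epsilon}{9}\,x \le a\sqrt{x\log(bx)}+c\log^2(bx)$ with $x=\kappa-1$, $a=O\big(H\sqrt{SA}\big)$ and $c=O\big(H^2 S^2 A\big)$ up to constant $\mathrm{polylog}(SAH/\delta)$ factors. As in the proof of~\eqref{eq_kappa_bound}, the associated per-episode function is strictly decreasing to $0$, so $\kappa$ is forced to be finite; solving the inequality via a standard technical lemma for relations of this type --- first showing $x$ is polynomially bounded so that the $\log(bx)$ terms may be resolved, then substituting back --- gives $\kappa-1=O\big(a^2\epsilon^{-2}+c\,\epsilon^{-1}\big)$, i.e.
\begin{align*}
  \kappa \;=\; O\!\left( \frac{H^2 S A}{\epsilon^2}\log^3\!\Big(\tfrac{L S A}{\epsilon\delta}\Big) \;+\; \frac{H^2 S^2 A}{\epsilon}\log^3\!\Big(\tfrac{L S A}{\epsilon\delta}\Big) \right),
\end{align*}
where one extra logarithmic power (turning $\log^2$ into $\log^3$) comes from resolving the $\log(\kappa)$ self-reference, and $\mathrm{polylog}(SAH/\delta)$ has been rewritten as $\mathrm{polylog}(LSA/\epsilon\delta)$ using $H=O(L\log(L/\epsilon))$ and the just-derived polynomial bound on $\kappa$.

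Finally, the \MGE sample complexity is $\tau=(H+1)\kappa$, since each algorithmic episode consists of $H$ policy-execution steps followed by one $\areset$ step, and by line~\ref{line_H} we have $H=\big\lceil 5(L+2)\log(10(L+2)/\epsilon)/\log 2\big\rceil=O\big(L\log(L/\epsilon)\big)$, hence $H^3=O\big(L^3\log^3(L/\epsilon)\big)$. Multiplying the bound on $\kappa$ by $H+1$ and substituting $H^3$ produces exactly
\begin{align*}
  \tau \;=\; O\!\left( \frac{L^3 S A}{\epsilon^2}\log^3\!\Big(\tfrac{L S A}{\epsilon\delta}\Big)\log^3\!\Big(\tfrac{L}{\epsilon}\Big) \;+\; \frac{L^3 S^2 A}{\epsilon}\log^3\!\Big(\tfrac{L S A}{\epsilon\delta}\Big)\log^3\!\Big(\tfrac{L}{\epsilon}\Big) \right),
\end{align*}
the claimed bound; combined with $\P(\cU)\ge 1-\delta$ this finishes the argument. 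I expect the main obstacle to be the careful inversion of the functional inequality in the presence of the $\kappa$-dependent logarithmic factors $\beta(\kappa,\delta)$, $\betastar(\kappa,\delta)$ and $\log(H\kappa)$ --- making sure the self-referential logs contribute only one extra log power and not a genuinely larger term --- whereas the remaining steps are bookkeeping of absolute constants and of the substitution $H=O(L\log(L/\epsilon))$, granted Lemmas~\ref{lemma_value_gap} and~\ref{bound_sum_gaps}.
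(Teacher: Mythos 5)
Your proposal is correct and follows essentially the same route as the paper's proof: lower-bounding each pre-termination episode's gap bound by $\epsilon/9$ via the stopping rule and the definition of $\E_k$, summing and applying Lemma~\ref{bound_sum_gaps}, inverting the resulting functional inequality (the paper packages this as Lemma~\ref{lemma_aux_ineq}), and multiplying by $H+1$ with $H=O(L\log(L/\epsilon))$. The only cosmetic difference is that the paper first fixes a finite $K<\kappa$ and defines $\kappa'=\min\{\kappa-1,K\}$ before letting $K\to\infty$, to avoid summing up to a possibly infinite $\kappa-1$; you acknowledge this finiteness issue but handle it slightly less formally.
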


\begin{proof}
    
We assume that the event $\cU$ holds and fix a (finite) episode $K < \kappa$, where $\kappa$ denotes the (possibly unbounded) episode index at which \ALGO terminates. For any $k \leq K$, denote by $g_k$ the goal selected by \ALGO at the beginning of episode $k$, then by design of the stopping rule~\eqref{stopping_rule} and by choice of prediction error~$\E_k$~\eqref{eq_Et}, it holds that 
\[
 \epsilon \leq \pi_{g_{k},1}^{k} \ovU_{g_{k},1}^{k-1}(s_0) + \frac{8 \epsilon}{9}\,.
\]
By summing the previous inequality for all $k \leq K$ and plugging in the bound of Lemma \ref{bound_sum_gaps}, we get that
\begin{align*}
  \frac{\epsilon}{9} K &\leq 48 e^{13} \sqrt{K}\sqrt{H^2 S A \log(H K +1) \betastar(K,\delta)} + 1344 e^{13} H^2 S A \beta(K,\delta) \log(H K+1) \\
  &\quad + 48 e^{13} H^2 S A \sqrt{ \betastar(K,\delta)}.
\end{align*}
We assume that $\kappa>1$ otherwise the result is trivially true. Defining $\kappa' \triangleq \min\{\kappa-1, K\}$ and using the definition of $\beta$ and $\betastar$ given in Lemma \ref{lem:proba_master_event}, we get the following functional inequality in $\kappa'$
\begin{align*}
    \epsilon \kappa' &\leq x_1 \sqrt{ \kappa' H^2 S A \log(H \kappa') \big(\log(3S^2AH/\delta) + \log(16e\kappa')\big) } 
    + x_2 H^2 S^2 A \log(3S^2AH/\delta)  \log^2(H \kappa'),
\end{align*}
for some absolute constants $x_1, x_2$. There remains to invert the above inequality to obtain an upper bound on $\kappa'$. We use the auxiliary inequality of Lemma \ref{lemma_aux_ineq} instantiated with scalars $B = x_1 \sqrt{H^2 SA \log(3S^2AH/\delta)}/\epsilon$, $C = x_2 H^2 S^2 A \log(3S^2AH/\delta) / \epsilon$ and $\alpha = 16 e H$. This yields that 
\begin{align} \label{eq_bound_kappap}
    \kappa' \leq O\left( \frac{H^2 SA}{\epsilon^2}  \log^3 \left(\frac{H S A}{ \epsilon \delta }\right)  + \frac{H^2 S^2 A}{\epsilon} \log^3\left( \frac{H S A}{ \delta \epsilon} \right) \right). 
\end{align}
Since \eqref{eq_bound_kappap} holds for $\kappa' = \min\{\kappa-1, K\}$ for any finite $K < \kappa$, letting $K \rightarrow + \infty$ implies that $\kappa$ is finite and bounded as in \eqref{eq_bound_kappap}.

The last step is to relate the above bound on the number of algorithmic episodes $\kappa$ to the \MGE sample complexity of \ALGO denoted by $\tau$. Since the algorithmic episodes are of length $H$ and separated by a one-step execution of the reset action $\areset$, it holds that $\tau \leq (H+1) \kappa$. We finally plug in the choice of horizon $H \triangleq \lceil 5(L+\red{2})\log\big(\red{10}(L+\red{2})/\epsilon\big)/\log(2) \rceil$ to conclude the proof of Lemma \ref{lemmaAPP_algo_sample_complexity_for_MGE}. 
\end{proof}

\begin{lemma}[An auxiliary inequality] \label{lemma_aux_ineq}
For any positive scalars $B,C \geq 1$ and $\alpha\geq e$, it holds for any $X \geq 2$ that 
\begin{align*}
    X \leq B \sqrt{X} \log(\alpha X) + C \log^2(\alpha X) \implies X \leq O\big( B^2 \log^2(\alpha B) + C \log^2( \alpha C)  \big).
\end{align*}
\end{lemma}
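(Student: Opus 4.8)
The plan is to invert the functional inequality by a two-case split followed by a ``peeling'' step that removes the self-reference inside the logarithm.

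First I would note that the hypotheses $X\ge 2$ and $\alpha\ge e$ imply $\alpha X>1$, so $\log(\alpha X)>0$ and one may divide by it. I would then compare the two summands on the right-hand side. If $B\sqrt X\,\log(\alpha X)\ge C\log^2(\alpha X)$, then $X\le 2B\sqrt X\,\log(\alpha X)$, hence $\sqrt X\le 2B\log(\alpha X)$ and $X\le 4B^2\log^2(\alpha X)$; otherwise $X\le 2C\log^2(\alpha X)$. In both cases the problem reduces to the single-term inequality $X\le a\log^2(\alpha X)$ with $a\in\{4B^2,\,2C\}$ (and $a\ge 1$ since $B,C\ge1$), so it suffices to prove the auxiliary claim: \emph{if $a\ge1$, $\alpha\ge e$, $X\ge2$ and $X\le a\log^2(\alpha X)$, then $X=O\!\big(a\log^2(\alpha a)\big)$.} Granting this claim, taking $a=4B^2$ yields $X=O\!\big(B^2\log^2(\alpha B)\big)$ (the constant $4$ is absorbed into the logarithm via $\alpha,B\ge1$, since $\log(4\alpha B^2)=O(\log(\alpha B))$), taking $a=2C$ yields $X=O\!\big(C\log^2(\alpha C)\big)$, and summing the two bounds gives the statement.

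To prove the auxiliary claim, I would take logarithms of $X\le a\log^2(\alpha X)$ to obtain $\log X\le \log a+2\log\log(\alpha X)$. Writing $u:=\log X\ge\log 2>0$ and $\log(\alpha X)=\log\alpha+u$ with $\log\alpha\ge1$, the key elementary estimate is $\log t=O(\sqrt t)$, which gives $\log\log(\alpha X)=O\!\big(\sqrt{\log\alpha+u}\big)=O\!\big(\sqrt{\log\alpha}+\sqrt u\big)$. Substituting yields a quadratic inequality in $\sqrt u$, namely $u\le \log a+O\!\big(\sqrt{\log\alpha}\big)+O\!\big(\sqrt u\big)$, whose solution is $u=O(\log a+\log\alpha)=O(\log(\alpha a))$ (using $\sqrt{\log\alpha}\le\log\alpha$). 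Hence $\log(\alpha X)=\log\alpha+u=O(\log(\alpha a))$, and plugging this back into $X\le a\log^2(\alpha X)$ gives $X=O\!\big(a\log^2(\alpha a)\big)$, which finishes the claim and hence the lemma.

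The only delicate point is this final peeling step: using the trivial bound $\log(\alpha X)\le\alpha X$ would be circular, while leaving $\log(\alpha X)$ untouched would leave a spurious $\log\log$ factor; the estimate $\log t=O(\sqrt t)$ is exactly what lets the iterated logarithm be absorbed into the leading $\log(\alpha a)$ term without loss. Everything else --- the case split and the back-substitution --- is routine algebra producing only absolute constants, consistent with the $O(\cdot)$ in the statement.
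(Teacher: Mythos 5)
Your proof is correct, and the overall skeleton (split the right-hand side into its two summands via $X\le 2\max\{\cdot,\cdot\}$, then invert each single-term inequality) matches the paper's. Where you genuinely diverge is in how the self-referential logarithm is removed. The paper treats the two cases with two different tools: for the $B\sqrt{X}\log(\alpha X)$ term it rewrites the inequality as $\tfrac{X}{2}\le -\tfrac{X}{2}+B\sqrt{X}\log(\alpha X)$ and invokes a cited technical lemma of Kazerouni et al.\ to bound the right-hand side by $O\big(B^2\log^2(B\sqrt{\alpha})\big)$ directly; for the $C\log^2(\alpha X)$ term it bounds the logarithm by a fractional power, $\log(\alpha X)\le 8(\alpha X)^{1/8}$, solves the resulting polynomial inequality for $X$, and back-substitutes. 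You instead square the first case to reduce \emph{both} cases to the single canonical form $X\le a\log^2(\alpha X)$, and then invert that form by taking logarithms and using $\log t=O(\sqrt{t})$ to turn the iterated logarithm into a quadratic inequality in $\sqrt{\log X}$, followed by the same back-substitution. The two inversions are morally the same trick (dominate a logarithm by a small power so the inequality becomes algebraically solvable), applied at the level of $\log X$ in your case and of $X$ in the paper's. What your route buys is self-containedness and uniformity --- no external lemma, one argument covering both terms; what the paper's buys is slightly shorter bookkeeping by outsourcing the harder case to an off-the-shelf bound. All your constant-absorption steps ($\log(4\alpha B^2)=O(\log(\alpha B))$, $\sqrt{\log\alpha}\le\log\alpha$, taking the max versus the sum of the two case bounds) check out given $B,C\ge 1$ and $\alpha\ge e$.
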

\begin{proof}
    On the one hand, assume that $X \leq B \sqrt{X} \log(\alpha X)$, then $\frac{X}{2} \leq - \frac{X}{2} + B \sqrt{X} \log(\alpha X)$. From the technical lemma of \citet[][Lemma 8]{kazerouni2017conservative}, $ - \frac{X}{2} + B \sqrt{X} \log(\alpha X) \leq \frac{32 B^2}{9} \left[ \log( 4 B \sqrt{\alpha}e ) \right]^2$, thus $X \leq \frac{64 B^2}{9} \left[ \log( 4 B \sqrt{\alpha}e ) \right]^2$. 
    On the other hand, assume that $X \leq C \log^2(\alpha X)$. Using that $\log(x) \leq x^{\beta} / \beta $ for all $x \geq 0$, $\beta > 0$, we get $X \leq C (8 \alpha^{1/8} X^{1/8})^2 \leq 64 C \alpha^{1/4} X^{1/4}$, thus $X \leq (64 C)^{4/3} \alpha^{1/3}$, thus $X \leq C \log^2( 64 \alpha^{4/3} C^{4/3})$.
    Now, assume that $ X \leq B \sqrt{X} \log(\alpha X) + C \log^2(\alpha X)$. Then $X \leq 2 \max\{ B \sqrt{X} \log(\alpha X),  C \log^2(\alpha X) \}$. From above we can bound each term separately, which concludes the proof.
\end{proof}


\subsubsection{Proof of ``key step \ding{175}''} \label{proof_keystep4}

We finally establish ``key step \ding{175}'', which focuses on the technical novelty of the \AdaGoal goal selection scheme that is specific to the multi-goal exploration setting. 

Recall for any $H\in \mathbb{N}^*$ the definition of the finite-horizon  MDP $\bcM_{g,H} = \{H,S,A,\overline{p}_g,\overline{c}_g\}$, where we recall that $\ov{p}_g = p_g$ and $\ov{c}_g = c_g$. We denote by $\bpistar_{g,H}$ the optimal policy in $\bcM_{g,H}$ as well as $\bV_{g,H,h}(s)$ the optimal value function starting from state $s$ at step $h$. We also define $\bp_{g,H}^{\bpi_{g,H}}(s_H\neq g | s_1 = s)$ the probability of reaching state $g$ starting from state $s$ with the policy $\pi\in\Pi_H$ in the MDP $\bcM_{g,H}$. When it is clear from the context we drop the dependence on the horizon $H$ in the previous notations. 

The following lemma controls the probability of not reaching a goal in $\SLepsilon$ with the optimal policy in the finite-horizon reduction MDP.

\begin{lemma}
\label{lem:control_terminal_prob}
For $g\in \SLepsilon$, for all $H \geq 2(L+2)$, for all $s \in \cS$,
\begin{align*}
    \bp_{g,H}^{\bpistar_{g,H}}(s_H\neq g | s_1 = s) &\leq e^{-\log(2) H/\big(4(L+2)\big)}\,.
\end{align*}
\end{lemma}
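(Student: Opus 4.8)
The plan is to bound the failure probability of the \emph{optimal} finite-horizon policy $\bpistar_{g,H}$ by comparing it to a good \emph{stationary} policy and then running a restart/Markov argument over blocks of length $2(L+2)$.

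First I would establish a uniform value bound for $\bpistar_{g,H}$. Since $g \in \SLepsilon$ and $\epsilon \leq 1$, using the reset action (Assumption~\ref{assumption_reset}) we get, for \emph{every} state $s \in \cS$, $V^{\star}(s \rightarrow g) \leq 1 + V^{\star}(s_0 \rightarrow g) \leq L + 1 + \epsilon \leq L+2$ (reset to $s_0$, then follow the optimal goal-$g$ policy). In particular the optimal stationary SSP policy $\pi^{\star}_g$ for goal $g$ in $\mathcal{M}_g$ is proper from every state with $V^{\pi^{\star}_g}(\cdot \rightarrow g) \leq L+2$. Because $\bpistar_{g,H}$ minimizes the finite-horizon value $\bV_{g,H,h}$, which is dominated by the truncated cost-to-go of $\pi^{\star}_g$, this yields the key uniform bound
$$\bV^{\bpistar_{g,H}}_{g,H,h}(s) \;\leq\; \mathbb{E}^{\pi^{\star}_g}\!\big[(\tau_g-h)_+ \wedge (H-h+1)\,\big|\, s_h = s\big] \;\leq\; V^{\pi^{\star}_g}(s\rightarrow g) \;\leq\; L+2,\qquad \forall h\in[H],\ \forall s\in\cS,$$
where $\tau_g$ denotes the first hitting time of $g$ (recall $g$ is absorbing in $\bcM_{g,H}$).

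Next I would run the restart argument. Since the cost in $\bcM_{g,H}$ is $+1$ per step until $g$ is reached and $g$ is absorbing, on the event $\{\tau_g > h + 2(L+2)\}$ the truncated cost-to-go from step $h$ is at least $2(L+2)$, provided $h + 2(L+2) \leq H$. Markov's inequality together with the uniform bound above then gives, for every such $h$ and every state $x$,
$$\mathbb{P}^{\bpistar_{g,H}}\!\big(\tau_g > h + 2(L+2) \,\big|\, s_h = x\big) \;\leq\; \frac{\bV^{\bpistar_{g,H}}_{g,H,h}(x)}{2(L+2)} \;\leq\; \frac12.$$
Partitioning the first $2(L+2)k$ of the $H$ steps into $k \triangleq \lfloor H/(2(L+2))\rfloor$ consecutive blocks of length $2(L+2)$, and using that $\bpistar_{g,H}$ is a deterministic Markov policy (so the state process is Markov) together with the nesting of the events $\{s_{h'}\neq g\}$ in $h'$, I would condition on the state at the start of each block and chain the per-block bound to obtain $\mathbb{P}^{\bpistar_{g,H}}(s_H \neq g \mid s_1 = s) \leq 2^{-k}$. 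Finally, since $H \geq 2(L+2)$ we have $H/(2(L+2)) \geq 1$, so $k = \lfloor H/(2(L+2))\rfloor \geq \tfrac12\cdot \tfrac{H}{2(L+2)} = \tfrac{H}{4(L+2)}$, hence $2^{-k} \leq 2^{-H/(4(L+2))} = e^{-\log(2)\,H/(4(L+2))}$, which is exactly the claimed bound.

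The main obstacle is the first step: obtaining a value bound for $\bpistar_{g,H}$ that is \emph{uniform over all starting states} $s$, not just $s_0$. This is precisely where the reset action matters — without it, a state far from $s_0$ could have arbitrarily large $V^{\star}(s\rightarrow g)$ even when $g \in \SLepsilon$, and the block argument would collapse. Everything else is routine: Markov's inequality, the Markov property of a deterministic (non-stationary) policy, and the elementary inequality $\lfloor x\rfloor \geq x/2$ for $x \geq 1$, which is exactly what lets the constant $4$ in the exponent absorb the floor.
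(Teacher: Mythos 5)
Your proof is correct and follows essentially the same route as the paper's: a uniform bound $V^{\star}(s\to g)\le L+2$ for all $s$ via the reset action, Markov's inequality applied to the truncated cost-to-go to get a per-block failure probability of $1/2$ with block length $2(L+2)$, chaining over blocks via the Markov property, and $\lfloor x\rfloor\ge x/2$ to absorb the floor into the constant $4$ in the exponent. No substantive differences to report.
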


\begin{proof}
By induction it holds
\begin{align*}
    \bp_{g,H}^{\bpistar_{g,H}}(s_H\neq g | s_1 = s) &= \sum_{s'\neq g} \bp_{g,H}^{\bpistar_{g,H}}(s_{H-M+1}= s' | s_1 = s) \bp_{g,H}^{\bpistar_{g,H}}(s_H\neq g | s_{H-M+1} = s')\\
    &\leq \bp_{g,H}^{\bpistar_{g,H}}(s_{H-M+1} \neq  g | s_1 = s) \max_{s'} \bp_{g,H}^{\bpistar_{g,H}}(s_H\neq g | s_{H-M+1} = s')\\
    &\leq \prod_{j=0}^{\lfloor H/M \rfloor} \max_{s'} \bp_{g,H}^{\bpistar_{g,H}}(s_{H-j M}\neq g | s_{H-(j+1)M+1} = s')\,.
\end{align*}
Then thanks to the Markov inequality and the optimal Bellman equations solved by $\bpistar_{g,H}$ we obtain
\begin{align*}
   \bp_{g,H}^{\bpistar_{g,H}}(s_{H-j M}\neq g | s_{H-(j+1)M+1} = s') &\leq  \bp_{g,H}^{\bpistar_{g,H}}(s_{H}\neq g | s_{H-(j+1)M+1} = s')\\
   &\leq \frac{\bVstar_{g,H,H-(j+1)+1}(s')}{M}\\
   &= \frac{\bVstar_{g,(j+1)M,1}(s')}{M} \\
   &\leq  \frac{\bVstar_{g,(j+1)M,1}(s')}{M}\leq \frac{\Vstar_{g}(s')}{M} \leq \frac{L+2}{M}\,,
\end{align*}
where the last inequality uses the existence of a resetting action (Assumption~\ref{assumption_reset}) and the fact that $g \in \SLepsilon$ with $\epsilon \leq 1$. Choosing $M = 2(L+2)$ allows us to conclude
\[
 \bp_{g,H}^{\bpistar_{g,H}}(s_H\neq g | s_1 = s) \leq e^{-\lfloor H/M \rfloor \log(2)} \leq e^{-\log(2) H/\big(4(L+2)\big)}\,,
\]
where in the last inequality we used $\lfloor x \rfloor \geq x/2$ for $x\geq 1$. 
\end{proof}

We define the class of non-stationary, infinite-horizon policies that perform the reset action whenever the goal state is not reached after $H$ steps. 

\begin{definition}[Resetting policies]
    For any $\pi$, we denote by $\pi^{\vert H}$ the non-stationary policy that, until the goal is reached, successively executes the actions prescribed by $\pi$ for $H$ steps and takes action $\areset$, i.e., at time step $i$ and state $s$ it executes the following action:
    \begin{align*}
        \pi^{\vert H}(a \vert s, i) \triangleq \left\{
    \begin{array}{ll}
        \areset & \mbox{if } i \equiv 0\ (\textrm{mod}\ H+1), \\
        \pi(a \vert s, i) & \mbox{otherwise.}
    \end{array}
    \right.
    \end{align*}
We denote by $\Pi^{\vert H}$ the set of such resetting policies.
\end{definition}

We now establish two key lemmas. First, we show that, equipped with a near-optimal policy for the finite-horizon model $\bcM_{g,H}$, expanding it into an infinite-horizon policy via the reset provides a near-optimal goal-reaching policy in the original MDP $\mathcal{M}_g$ \textit{as long as the goal state $g$ belongs to $\SOL$ and the horizon $H$ is large enough}.

\begin{lemma}\label{lem:link_FH_SSP}
For $g\in \SLepsilon$ and $H \geq 5(L+\red{2})\log\big(\red{10}(L+\red{2})/\epsilon\big)/\log(2)$, 
it holds that 
\[
\bV^{\star}_{g,1}(s_0) \leq V^{\star}_{g}(s_0) \leq \bV^{\star}_{g,1}(s_0) + \frac{\epsilon}{\red{9}}\,,
\]
and if a policy $\tpi$ is $\epsilon\red{/9}$-optimal in $\bcM_{g,H}$ then 
\[
V_{g}^{\tpi^{|H}}(s_0) \leq V^{\star}_{g}(s_0) + \red{\epsilon}\,.
\]
\end{lemma}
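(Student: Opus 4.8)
\begin{proofidea}
The backbone is a single identity: for any (non-stationary) horizon-$H$ policy $\pi$, letting $\pi^{|H}$ be its resetting extension and $q_\pi \triangleq \bp_{g,H}^{\pi}(s_H\neq g\mid s_1 = s_0)$ its per-episode failure probability, one has in $\cM_g$
\begin{align*}
    V_g^{\pi^{|H}}(s_0) \;=\; \frac{\ov V_{g,1}^{\pi}(s_0) + q_\pi}{1-q_\pi}.
\end{align*}
To see this, cut the trajectory of $\pi^{|H}$ into length-$(H+1)$ blocks ($H$ steps of $\pi$, then $\areset$). Because $g$ is absorbing in $\cM_g$, once $g$ is reached nothing more is paid (the reset is harmless there); on the failure event (probability $q_\pi$) the block pays its $H$ unit costs \emph{and} one unit for the reset step, after which the agent is back at $s_0$ and faces cost-to-go $V_g^{\pi^{|H}}(s_0)$. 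Since $\ov V_{g,1}^{\pi}(s_0)$ already accounts for the in-block cost in both cases, the one-block recursion $V_g^{\pi^{|H}}(s_0) = \ov V_{g,1}^{\pi}(s_0) + q_\pi\bigl(1 + V_g^{\pi^{|H}}(s_0)\bigr)$ yields the identity.

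\textit{First claim.} The lower bound $\bV_{g,1}^{\star}(s_0)\le V_g^{\star}(s_0)$ is immediate by truncation: any policy's accumulated cost over $H$ steps is at most its full SSP cost, so $\bV_{g,1}^{\star}(s_0) = \min_{\pi\in\Pi_H}\ov V_{g,1}^{\pi}(s_0)\le V_g^{\star}(s_0)$ (recall $V_g^{\star}(s_0) = V^{\star}(s_0\to g)\le L+\epsilon<\infty$ since $g\in\SLepsilon$, so an optimal proper policy exists). For the upper bound I would apply the identity to $\pi = \bpistar_{g,H}$, which is a valid policy, to get $V_g^{\star}(s_0)\le V_g^{(\bpistar_{g,H})^{|H}}(s_0)$ and hence
\begin{align*}
    V_g^{\star}(s_0) - \bV_{g,1}^{\star}(s_0) \;\le\; \frac{q_{\bpistar_{g,H}}\bigl(\bV_{g,1}^{\star}(s_0)+1\bigr)}{1-q_{\bpistar_{g,H}}}.
\end{align*}
Lemma~\ref{lem:control_terminal_prob} bounds $q_{\bpistar_{g,H}}\le e^{-\log(2)H/(4(L+2))}$; combined with $\bV_{g,1}^{\star}(s_0)\le V_g^{\star}(s_0)\le L+1$, the stated choice of $H$ makes $q_{\bpistar_{g,H}}\le\tfrac12$ and small enough that the right-hand side is $\le\epsilon/9$.

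\textit{Second claim.} Now apply the identity to $\pi=\tpi$. Using the first claim ($\bV_{g,1}^{\star}(s_0)\le V_g^{\star}(s_0)$) and $\epsilon/9$-optimality of $\tpi$ in $\bcM_{g,H}$, i.e.\ $\ov V_{g,1}^{\tpi}(s_0)\le \bV_{g,1}^{\star}(s_0)+\epsilon/9\le V_g^{\star}(s_0)+\epsilon/9$,
\begin{align*}
    V_g^{\tpi^{|H}}(s_0)\;\le\;\frac{V_g^{\star}(s_0)+\epsilon/9+q_{\tpi}}{1-q_{\tpi}},
\end{align*}
and elementary algebra shows this is $\le V_g^{\star}(s_0)+\epsilon$ as soon as $q_{\tpi}\le\tfrac12$ and $q_{\tpi}\bigl(1+V_g^{\star}(s_0)+\epsilon\bigr)\le \tfrac{8}{9}\epsilon$, i.e.\ $q_{\tpi}=O(\epsilon/L)$. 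To obtain such a bound on $q_{\tpi}$ I would rerun the sub-block argument behind Lemma~\ref{lem:control_terminal_prob} for $\tpi$: slice $\{1,\dots,H\}$ into windows of length $M=2(L+3)$; on each window Markov's inequality bounds the probability of not reaching $g$ by the expected in-window cost of $\tpi$ divided by $M$, which is at most $(L+3)/M\le\tfrac12$ because near-optimality gives $\ov V_{g,h}^{\tpi}(s)\le \ov V_{g,h}^{\star}(s)+\epsilon/9\le \Vstar_g(s)+\epsilon/9\le L+3$ at the states reached by $\tpi$ (the bound $\Vstar_g(s)\le L+2$ uses Assumption~\ref{assumption_reset} exactly as in Lemma~\ref{lem:control_terminal_prob}); multiplying over the $\lfloor H/M\rfloor$ disjoint windows gives $q_{\tpi}\le 2^{-\lfloor H/M\rfloor}\le 2^{-H/(4(L+3))}$, which the choice of $H$ renders $\le \tfrac{8}{9}\epsilon/(L+3)$. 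Plugging back concludes.

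The delicate point is the failure-probability control of $\tpi$: a plain Markov bound only gives $q_{\tpi}\le\ov V_{g,1}^{\tpi}(s_0)/H=O(L/H)$, which is too weak for $H=\Theta\bigl(L\log(L/\epsilon)\bigr)$ (it would force $H=\Omega(L^2/\epsilon)$), so one genuinely needs the geometric decay from the sub-block argument, and hence the near-optimality guarantee must be used in the form $\ov V_{g,h}^{\tpi}(s)\le \ov V_{g,h}^{\star}(s)+\epsilon/9$ at the states visited by $\tpi$, not merely at $s_0$. The remaining work is pure constant-chasing: checking that the single horizon $H=\lceil 5(L+2)\log(10(L+2)/\epsilon)/\log 2\rceil$ is simultaneously large enough for the two places where an exponentially small failure probability is required --- for $\bpistar_{g,H}$ in the first claim and for $\tpi$ in the second.
\end{proofidea}
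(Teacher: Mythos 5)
Your skeleton — the one-block renewal identity $V_g^{\pi^{|H}}(s_0) = \bigl(\ov V_{g,1}^{\pi}(s_0) + q_\pi\bigr)/(1-q_\pi)$, the trivial lower bound by truncation, and the first claim via Lemma~\ref{lem:control_terminal_prob} — matches the paper exactly. The gap is in your control of $q_{\tpi}$ for the second claim. Your sub-block argument needs $\ov V_{g,h}^{\tpi}(s')\le L+3$ at \emph{every} state $s'$ the policy visits at the start of a window, and you propose to get this from ``$\ov V_{g,h}^{\tpi}(s)\le \ov V_{g,h}^{\star}(s)+\epsilon/9$ at the states reached by $\tpi$.'' But the lemma's hypothesis is only that $\tpi$ is $\epsilon/9$-optimal \emph{at $(s_0,h=1)$}; by the performance-difference decomposition this controls downstream suboptimalities only in expectation weighted by visitation probability, so a state visited with probability $p$ may have gap as large as $(\epsilon/9)/p$. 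A policy can be $\epsilon/9$-optimal from $s_0$ yet, at some state reached with probability of order $\epsilon/H$, never reach $g$ at all; along such trajectories the per-window failure probability is $1$ and the geometric product does not contract. The uniform bound $\Vstar_g(s')\le L+2$ that powers Lemma~\ref{lem:control_terminal_prob} is a property of the \emph{optimal} policy (via the reset action), not of an arbitrary near-optimal one, so the argument you flag as the delicate point is exactly the step that fails.

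Your diagnosis that ``one genuinely needs the geometric decay'' for $\tpi$ is also not right, and the paper's fix is instructive: it shows $q_{\tpi}=O(\epsilon/L)$ (polynomially, not exponentially, small — which is all the renewal identity requires) using only initial-state information. The key inequality is
\begin{align*}
\bV_{g,H,1}^{\tpi}(s_0)=\sum_{h=1}^H \bp_{g,H}^{\tpi}(s_h\neq g\mid s_1=s_0)\;\geq\;\bV_{g,H-L,1}^{\tpi}(s_0)+L\,\bp_{g,H}^{\tpi}(s_H\neq g\mid s_1=s_0),
\end{align*}
since the failure probabilities are non-increasing in $h$. Sandwiching the left side above by $\Vstar_g(s_0)+\epsilon/9$ (near-optimality plus the first claim) and the right side below by $\Vstar_g(s_0)-\epsilon/9 + L q_{\tpi}$ (the first claim applied at horizon $H-L$) gives $q_{\tpi}\le 2\epsilon/(9L)$ directly: a failure probability of $q$ costs at least $Lq$ in finite-horizon value because an optimal policy has essentially finished by step $H-L$. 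Plugging this into the renewal identity then yields $V_g^{\tpi^{|H}}(s_0)\le\Vstar_g(s_0)+\epsilon$ as in your final algebra. You should replace your sub-block step for $\tpi$ with this comparison; the geometric decay is needed only for $\bpistar_{g,H}$ in the first claim.
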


\begin{proof}

We have trivially $\bVstar_{g,H,1}(s_0) \leq \Vstar_g(s_0) $. Thanks to Lemma~\ref{lem:control_terminal_prob} it holds
\begin{equation}
\label{eq:upper_bound_qH_1}
\bq_{g,H}^\star \triangleq \bp_{g,H}^{\bpistar_{g,H}}(s_{H+1}\neq g|s_1=s_0) \leq \bp_{g,H}^{\bpistar_{g,H}}(s_{H}\neq g|s_1=s_0) \leq  \frac{\epsilon}{10(L+2)}\leq \frac{1}{30}\,.
\end{equation}
Thanks to~\eqref{eq:upper_bound_qH_1} and the definition of a resetting policy, we can conclude that
\begin{align*}
    \Vstar_g (s_0)\leq V_g^{{\bpistar_{g,H}}^{|H}}(s_0) &= \bVstar_{g,H,1}(s_0) + \bq_{g,H}^\star(1+V_g^{{\bpistar_{g,H}}^{|H}}(s_0))\\
    &= \bVstar_{g,H,1}(s_0) +\frac{\bq_{g,H}^\star}{1-\bq_{g,H}^\star} (1+\bVstar_{g,H,1}(s_0))\\
    &\leq \bVstar_{g,H,1}(s_0) + \frac{30}{29} \frac{\epsilon}{10(L+2)} (1 + L + \epsilon) \\
    &\leq \bVstar_{g,H,1}(s_0) + \frac{\epsilon}{9}\,.
\end{align*}
Thus it holds that
\begin{equation}
\label{eq:bound_finite_horizon}
    \bVstar_{g,H,1}(s_0) \leq \Vstar_{g}(s_0) \leq \bVstar_{g,H,1}(s_0) + \frac{\epsilon}{9}\,.
\end{equation} 
For the second part of the lemma, first note that 
\begin{align*}
    \bV_{g,H,1}^{\tpi}(s_0) = \sum_{h=1}^H \bp_{g,H}^{\tpi}(s_h\neq g|s_1=s_0)  \geq \bV_{g,H-L,1}^{\tpi}(s_0) + L \bp_{g,H}^{\tpi}(s_H \neq g|s_1=s_0)\,,
\end{align*}
where in the inequality we used that $\bp_{g,H}^{\tpi}(s_h\neq g|s_1=s_0) \geq \bp_{g,H}^{\tpi}(s_H\neq g|s_1=s_0)$. Using successively the fact that $\tpi$ is $\frac{\epsilon}{9}$-optimal in $\bcM_{g,H}$, the inequality above and~\eqref{eq:bound_finite_horizon} we obtain
\begin{align*}
\Vstar_g(s_0)+\frac{\epsilon}{9} &\geq \bVstar_{g,H,1}(s_0) + \frac{\epsilon}{9} \\
&\geq \bV_{g,H,1}^{\tpi}(s_0)\\
&\geq \bV_{g,H-L,1}^{\tpi}(s_0) + L \bp_{g,H}^{\tpi}(s_{H} \neq g|s_0) \\
&\geq \Vstar_g(s_0)-\frac{\epsilon}{9} +L \bp_{g,H}^{\tpi}(s_{H} \neq g|s_0) \,.
\end{align*}
The previous sequence of inequalities entails that $\bp_{g,H}^{\tpi}(s_{H} \neq g) \leq (2\epsilon)/(9 L)$. Now we can upper bound the value of the resetting extension of $\tpi$. Indeed, for $\tq \triangleq \bp_{g,H}^{\tpi}(s_{H+1} \neq g|s_1=s_0) \leq \bp_{g,H}^{\tpi}(s_{H} \neq g|s_1=s_0)$ we have using that $\tpi$ is $\frac{\epsilon}{9}$-optimal in $\bcM_{g,H}$ with $g \in \SLepsilon$ that
\begin{align*}
 V_g^{\tpi^{\vert H}}(s_0)  &= \bV_{g,H,1}^{\tpi}(s_0) + \frac{\tq}{1-\tq}\big(1+\bV_{g,H,1}^{\tpi}(s_0)\big) \\
 &\leq \bVstar_{g,H,1}(s_0) + \frac{\epsilon}{9}+ \frac{2\epsilon}{9 L} \frac{1}{1 - 2/9} \left( 1 + L + \epsilon + \frac{\epsilon}{9}\right) \\
 &\leq  \Vstar_{g}(s_0) + \epsilon\,.
\end{align*}

\end{proof}

The second key lemma that we prove is that any goal state that meets the constraint \eqref{opt_pbm_constrained_constraint} with small enough prediction error \eqref{opt_pbm_constrained} must belong to $\SLepsilon$.

\newtheorem*{T100}{Restatement of Lemma \ref{lemma_EandD_gives_g_SLeps}}
\begin{T100}
\textit{\lemEandDgivesgSLeps}
\end{T100}

\begin{proof}
    Consider that the event $\mathcal{U}$ defined in \eqref{eq_event_hp} holds. Consider a goal state $g$ such that $\D_k(g) \leq L$ and $\E_k(g) \leq \epsilon$ at an episode $k \geq 1$. Then 
    \begin{align}
    \ov V^{\star}_{g,H,1}(s_0) &\myineqi \wt{\ov V}^k_{g,H,1}(s_0) + \pi_{g,1}^{k+1} \ov{U}_{g,1}^k(s_0) \nonumber \\
    &\myeqii \D_k(g) + \E_k(g) - \frac{8 \epsilon}{9} \nonumber \\
    &\myineqiii L + \frac{\epsilon}{9}, \label{eq_useful}
\end{align}
where (i) comes from Lemma \ref{lemma_value_gap}, (ii) stems from the choice of $\D$ and $\E$ estimates and (iii) comes from the conditions on $g$. Following the steps of the proof of Lemma \ref{lem:control_terminal_prob}, we have that
\begin{align*}
    \bp_{g,H}^{\bpistar_{g,H}}(s_H\neq g | s_1 = s) &\leq \prod_{j=0}^{\lfloor H/M \rfloor} \max_{s'} \bp_{g,H}^{\bpistar_{g,H}}(s_{H-j M}\neq g | s_{H-(j+1)M+1} = s'),
\end{align*}
and that
\begin{align*}
   \bp_{g,H}^{\bpistar_{g,H}}(s_{H-j M}\neq g | s_{H-(j+1)M+1} = s') &\leq  \frac{\bVstar_{g,(j+1)M,1}(s')}{M}\leq \frac{\bVstar_{g,H,1}(s')}{M} \leq \frac{1 + \bVstar_{g,H,1}(s_0)}{M} \leq \frac{1 + L + \epsilon/9}{M},
\end{align*}
where the before last inequality uses the existence of a resetting action (Assumption~\ref{assumption_reset}) and the last inequality uses \eqref{eq_useful}. Choosing $M = 2(L+2)$ gives
\begin{align} \label{eq_AAA}
 \bp_{g,H}^{\bpistar_{g,H}}(s_H\neq g | s_1 = s) \leq e^{-\log(2) H/\big(4(L+2)\big)}.
\end{align}
We now follow the steps of the proof of Lemma \ref{lem:link_FH_SSP}. Thanks to \eqref{eq_AAA} and the choice of $H$ it holds
\begin{equation}
\label{eq:upper_bound_qH_2}
\bq_{g,H}^\star \triangleq \bp_{g,H}^{\bpistar_{g,H}}(s_{H+1}\neq g|s_1=s_0) \leq \bp_{g,H}^{\bpistar_{g,H}}(s_{H}\neq g|s_1=s_0) \leq  \frac{\epsilon}{10(L+2)}\leq \frac{1}{30}\,.
\end{equation}
Thanks to~\eqref{eq:upper_bound_qH_2} and the definition of a resetting policy, we obtain that
\begin{align*}
    \Vstar_g (s_0)\leq V_g^{{\bpistar_{g,H}}^{|H}}(s_0) &= \bVstar_{g,H,1}(s_0) + \bq_{g,H}^\star(1+V_g^{{\bpistar_{g,H}}^{|H}}(s_0))\\
    &= \bVstar_{g,H,1}(s_0) +\frac{\bq_{g,H}^\star}{1-\bq_{g,H}^\star} (1+\bVstar_{g,H,1}(s_0))\\
    &\myineqi L + \frac{\epsilon}{9} + \frac{30}{29} \frac{\epsilon}{10(L+2)} \big(1 + L + \frac{\epsilon}{9}\big) \\
    &\leq L + \frac{2 \epsilon}{9},
\end{align*}
where (i) uses \eqref{eq_useful}. Therefore we have that $g \in \SLepsilon$, which concludes the proof.
\end{proof}

We now have all the tools to prove that when \ALGO terminates, it fulfills the \MGE objective of Definition~\ref{def_MGE}.

\begin{lemma} \label{lemmaAPP_algo_PAC_for_MGE}
If the algorithm \ALGO stops, it is $(\epsilon, \delta, L)$-PAC for \MGE.
\end{lemma}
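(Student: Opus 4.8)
The plan is to verify both requirements of Definition~\ref{def_MGE} directly, assembling the pieces proved in ``key steps \ding{172}--\ding{175}''. Throughout we work on the high-probability event $\cU$ of~\eqref{eq_event_hp}, which by Lemma~\ref{lem:proba_master_event} holds with probability at least $1-\delta$; everything below is \emph{deterministic} on $\cU$, so the two PAC conditions follow from the single bound $\P(\cU)\ge 1-\delta$. The stopping requirement is already in hand: Lemma~\ref{lemmaAPP_algo_sample_complexity_for_MGE} gives that on $\cU$ the termination index $\kappa$ is finite and the sample complexity $\tau\le (H+1)\kappa$ is polynomial in $S,A,L,\epsilon^{-1},\log\delta^{-1}$, which is the first bullet of Definition~\ref{def_MGE}.

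Next I would check condition $\cC_2$, i.e.\ $\SL\subseteq\cX_\kappa\subseteq\SLepsilon$ for the output set $\cX_\kappa=\{g\in\cS:\D_\kappa(g)\le L\}$. For the left inclusion, take $g\in\SL$ and use $\D_\kappa(g)\le\ov V^\star_{g,1}(s_0)=\DHstar(g)\le V^\star(s_0\to g)\le L$, where the first inequality is Property~1 (the leftmost inequality of Lemma~\ref{lemma_value_gap}, rearranged, together with $\D_\kappa(g)=\wt{\ov V}^{\kappa-1}_{g,1}(s_0)$ from~\eqref{eq_Dt}) and the second is truncation at~$H$; hence $g\in\cX_\kappa$. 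For the right inclusion, let $g\in\cX_\kappa$: then $\D_\kappa(g)\le L$ by definition of $\cX_\kappa$ and $\E_\kappa(g)\le\epsilon$ by the termination rule~\eqref{stopping_rule}, so Lemma~\ref{lemma_EandD_gives_g_SLeps} applied at $k=\kappa$ yields $g\in\SLepsilon$.

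Then I would check condition $\cC_1$. Fix $g\in\cX_\kappa$; the policy $\wh\pi_g$ output by Algorithm~\ref{algo} is the resetting extension of the finite-horizon policy greedy w.r.t.\ the terminal $\Q$-estimates. From the termination rule~\eqref{stopping_rule} and the definition $\E_\kappa(g)=\pi^\kappa_{g,1}\ov U^{\kappa-1}_{g,1}(s_0)+\tfrac{8\epsilon}{9}$ in~\eqref{eq_Et} we get that the remaining gap term satisfies $\pi^\kappa_{g,1}\ov U^{\kappa-1}_{g,1}(s_0)\le\tfrac{\epsilon}{9}$; feeding this into Lemma~\ref{lemma_value_gap} shows that the finite-horizon policy underlying $\wh\pi_g$ is $\tfrac{\epsilon}{9}$-optimal in $\bcM_{g,H}$. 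Since we just showed $g\in\SLepsilon$ and since line~\ref{line_H} sets $H\ge 5(L+2)\log(10(L+2)/\epsilon)/\log(2)$, Lemma~\ref{lem:link_FH_SSP} applies and gives $V^{\wh\pi_g}_g(s_0)\le V^\star_g(s_0)+\epsilon$, i.e.\ $V^{\wh\pi_g}(s_0\to g)-V^\star(s_0\to g)\le\epsilon$; as $g\in\cX_\kappa$ was arbitrary, $\cC_1$ holds on $\cU$. Combining the three parts, on $\cU$ (of probability $\ge 1-\delta$) the algorithm stops in polynomially many steps and returns $(\cX_\kappa,\{\wh\pi_g\}_{g\in\cX_\kappa})$ with $\cC_1\cap\cC_2$ satisfied, so \ALGO is $(\epsilon,\delta,L)$-PAC for \MGE.

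The only delicate point --- and the main, essentially bookkeeping, obstacle --- is aligning the episode indices: the estimates $\Q_k,\D_k,\E_k$ are defined from quantities at episode $k-1$, the greedy policies carry a further shift, and one must ensure that the $\tfrac{\epsilon}{9}$ slack released by the stopping rule is exactly the $\tfrac{\epsilon}{9}$ slack consumed by the finite-horizon-to-SSP conversion of Lemma~\ref{lem:link_FH_SSP}, so that the total suboptimality telescopes to precisely $\epsilon$ rather than a constant multiple of it.
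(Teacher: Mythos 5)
Your proposal is correct and follows essentially the same route as the paper's proof: optimism (Lemma~\ref{lemma_value_gap}) for $\SL\subseteq\cX_\kappa$, Lemma~\ref{lemma_EandD_gives_g_SLeps} for $\cX_\kappa\subseteq\SLepsilon$, and the stopping rule combined with Lemmas~\ref{lemma_value_gap} and~\ref{lem:link_FH_SSP} to show each candidate policy is $\epsilon$-optimal for the SSP objective, all on the event $\cU$. The episode-index bookkeeping you flag is a real (harmless) wrinkle --- the paper itself writes $\E_\kappa(g)=\pi_{g,1}^{\kappa+1}\ov U_{g,1}^{\kappa}(s_0)+\tfrac{8\epsilon}{9}$ in this proof versus the shifted indices of~\eqref{eq_Et} --- but it does not affect the argument.
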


\begin{proof}
Consider that the event $\cU$ defined in \eqref{eq_event_hp} holds, and that the algorithm \ALGO has stopped at episode $\kappa$. Recall that we define $\D_{\kappa}(g) \triangleq \wt{\ov V}^{\kappa}_{g,1}(s_0)$ and $\E_{\kappa}(g) \triangleq \pi_{g,1}^{{\kappa}+1} \ov{U}_{g,1}^{\kappa}(s_0) + \frac{8\epsilon}{9}$. Denoting $\X_{\kappa} \triangleq \{ g \in \cS : \D_{\kappa}(g) \leq L\}$, the stopping rule (Equation~\ref{stopping_rule}) implies that $\max_{g \in \X_{\kappa}} \E_{\kappa}(g) \leq \epsilon$. We now prove that $\SL \subseteq \cX_{\kappa} \subseteq \SLepsilon$. On the one hand, it holds that
\begin{align*}
    \D_{\kappa}(g) = \wt{\ov V}^{\kappa}_{g,1}(s_0) \leq \ov V^{\star}_{g,1}(s_0) \leq V^{\star}_{g}(s_0), 
\end{align*}
which ensures that $\SL \subseteq \cX_{\kappa}$. On the other hand, consider that $g \in \cX_{\kappa}$, then $\D_{\kappa}(g) \leq L$ and $\E_{\kappa}(g) \leq \epsilon$, which implies that $g \in \SLepsilon$ from Lemma \ref{lemma_EandD_gives_g_SLeps}, therefore $\cX_{\kappa} \subseteq \SLepsilon$.

We now prove that the candidate policies of \ALGO are near-optimal goal-reaching policies. Consider any $g \in \cX_{\kappa}$. Combining the result of Lemma \ref{lemma_value_gap} and Equation~\ref{stopping_rule}, we obtain that 
\begin{align*}
    \ov V^{\pi_{g,1}^{\kappa + 1}}_{g,1}(s_0) -  \ov V^{\star}_{g,1}(s_0) \leq \pi_{g,1}^{\kappa+1} \ov{U}_{g,1}^{\kappa}(s_0) \leq \E_{\kappa}(g) - \frac{8\epsilon}{9} \leq \frac{\epsilon}{9},
\end{align*}
thus the policy $\pi_{g}^{\kappa + 1}$ is $\frac{\epsilon}{9}$-optimal in $\bcM_{g,H}$. As a result, denoting by $\wh{\pi}_g \triangleq (\pi^{\kappa + 1}_g)^{|H}$ the candidate policy of \ALGO, we have from Lemma \ref{lem:link_FH_SSP} that
\[
V_{g}^{\wh{\pi}_g}(s_0) \leq V^{\star}_{g}(s_0) + \epsilon\,,
\]
i.e., $\wh{\pi}_g$ is $\epsilon$-optimal for the original SSP objective. Putting everything together, we have that
\begin{align*}
      \mathbb{P}\left( \left\{ \SL \subseteq \cX_{\kappa} \subseteq \SLepsilon  \right\} ~ \cap ~ \left\{ \forall g \in \cX_{\kappa}, ~ V^{\wh{\pi}_g}(s_0 \rightarrow g) - V^{\star}(s_0 \rightarrow g) \leq \epsilon \right\} \right) \geq \mathbb{P}(\cU) \geq 1 - \delta.
\end{align*}
which ensures that \ALGO is $(\epsilon, \delta, L)$-PAC for \MGE. 
\end{proof}


\subsection{Putting everything together}

\newtheorem*{T1}{Restatement of Theorem \ref{lemma_UB}}
\begin{T1}
\textit{\thmMGE}
\end{T1} 

\begin{proof}
    The result comes from combining Lemmas \ref{lemmaAPP_algo_sample_complexity_for_MGE} and \ref{lemmaAPP_algo_PAC_for_MGE}.
\end{proof}

\newpage

\section{DETAILS OF \ALGOLMnormal AND ANALYSIS} \label{app_LFA}

In this section, we provide details on the \ALGOLM algorithm and the guarantee of Theorem \ref{SC_algolm} which bounds its \MGE sample complexity in linear mixture MDPs. Recall that since the state space $\cS$ may be large, we consider that the known goal space is in all generality a subset of it, i.e., $\cG \subseteq \cS$, where $G \triangleq \abs{\cG}$ denotes the cardinality of the goal space. 

First of all, we extend the linear mixture definition (Definition~\ref{def_linearmixture}) to handle our multi-goal setting. For any goal $g \in \cG$, we define
\begin{align*}
    p_g(s'|s,a) \triangleq \la \phi_g(s' \vert s,a), \theta_g^{\star} \ra,  \qquad \textrm{with} \qquad \theta_g^{\star} \triangleq \begin{pmatrix}
           \theta^{\star} \\
           1 
         \end{pmatrix} \in \mathbb{R}^{d+1}, \qquad 
         \phi_g(s' \vert s,a) \triangleq \begin{pmatrix}
         \mathds{1}[s \neq g]  \phi(s' \vert s,a) \\
          \mathds{1}[s = g] \mathds{1}[s' = g] 
         \end{pmatrix}  \in \mathbb{R}^{d+1}.
\end{align*}
We see that by construction, 
 \begin{align*}
    p_g(s'|s,a) = \left\{
    \begin{array}{ll}
        p(s' \vert s,a) & \mbox{if } s \neq g \\
        \mathds{1}[s'=g] & \mbox{if } s = g.
    \end{array}
\right.
    \end{align*}

\subsection{Overview of \ALGOLMnormal and Choice of $\E,\,\D,\,Q$ in line \ref{line_estimates_LFA} of Algorithm \ref{algo}}

Here we focus on the specificities of \ALGOLM in the linear mixture MDP setting (refer to Section~\ref{sect_overview} for the description of the algorithmic structure that is common to \ALGO), i.e., we explain how to define the estimates $\D,\,\E,\,\Q$ in line \ref{line_estimates_LFA} of Algorithm~\ref{algo}. At a high level, \ALGOLM uses two regression-based goal-conditioned estimators of the unknown parameter vector $\btheta_{g}^{\star}$ of each goal $g \in \cG$:
\begin{itemize}
    \item \textit{Value-targeted estimator.} The first estimator minimizes a ridge regression problem with the target being the past value functions. This is similar to the \UCRLVTR algorithm for linear mixture MDPs \citep{ayoub2020model} and follow-up work \citep[e.g.,][]{zhou2021nearly,zhang2021reward}. This step is used to compute the distance estimates $\D$ (and $\Q$) for \AdaGoal.
    \item \textit{Error-targeted estimator.} The second estimator is novel and minimizes a ridge regression problem with the target being past ``error functions'', that are computable upper bounds on the goal-conditioned gaps. This step is used to compute the prediction errors $\E$ for \AdaGoal.
\end{itemize}

\noindent $\triangleright$~\textit{Value-targeted estimator.} First, \ALGOLM builds a goal-conditioned estimator $\btheta_{g}$ for the unknown parameter vector $\btheta_{g}^{\star}$ of each goal $g \in \cG$, as well as a goal-conditioned covariance matrix $\bSigma_{g}$ of the feature mappings, which characterizes the uncertainty of the estimator $\btheta_{g}$. Similar to \UCRLVTR, $\btheta_{g}$ is computed as the minimizer to a ridge regression problem with the target being the past value functions, i.e.,
\begin{align*}
    \btheta_{g,k+1} \leftarrow \argmin_{\btheta \in \mathbb{R}^{d+1}} \lambda \| \btheta\|^2 + \sum_{k'=1}^{k} \sum_{h=1}^H \left( \big\la \btheta, \bphi_{V_{g,k',h}}(s_h^{k'}, a_h^{k'}) \big\ra - V_{g,k',h}(s_{h+1}^{k'}) \right),
\end{align*}
which has a closed-form solution given in \eqref{eq_closed_form_1}. Leveraging $\btheta_{g}$ and subtracting an exploration bonus term, \ALGOLM builds optimistic goal-conditioned estimators $Q_{g,k,h}(\cdot, \cdot)$ \eqref{eq_LFA_Q} and $V_{g,k,h}(\cdot)$ \eqref{eq_LFA_V} for the optimal action-value and value functions $\ov{Q}^{\star}_{g,h}(\cdot, \cdot)$ and $\ov{V}^{\star}_{g,h}(\cdot)$. The associated goal-conditioned policy is the greedy policy of the calculated optimistic $Q$-values \eqref{eq_LFA_pi}.

\vspace{0.1in}
\noindent $\triangleright$~\textit{Error-targeted estimator.} The main addition compared to existing works on linear mixture MDPs is that \ALGOLM also builds goal-conditioned errors denoted by $U_{g,k,h}$ \eqref{eq_LFA_U} that upper bound the goal-conditioned gaps (defined as the difference between the value function of the current greedy policy and the optimistic value estimates). They rely on an additional estimator $\mathring{\btheta}_{g,k}$ and covariance matrix $\mathring{\bSigma}_{g,k}$ based on the errors $\{U_{g,k',h} \}_{k' \leq k-1,h}$, instead of the values $\{V_{g,k',h} \}_{k' \leq k-1,h}$ as considered before. Specifically, $\mathring{\btheta}_{g}$ minimizes the ridge regression problem with contexts $\bphi_{U_{g,k',h}}(s_h^{k'}, a_h^{k'})$ and targets $U_{g,k',h}(s_{h+1}^{k'})$, i.e., 
\begin{align*}
    \mathring{\btheta}_{g,k+1} \leftarrow \argmin_{\btheta \in \mathbb{R}^{d+1}} \lambda \| \btheta\|^2 + \sum_{k'=1}^{k} \sum_{h=1}^H \left( \big\la \btheta, \bphi_{U_{g,k',h}}(s_h^{k'}, a_h^{k'}) \big\ra - U_{g,k',h}(s_{h+1}^{k'}) \right),
\end{align*}
which has a closed-form solution given in \eqref{eq_closed_form_2}.

\vspace{0.1in}
\noindent $\triangleright$~\textit{Algorithmic notation and updates.}

Let $\pnorm$ be an upper bound of the $\ell_2$-norm of $\btheta^{\star}$ (see Definition~\ref{def_linearmixture}) and set as regularization parameter $\lambda \triangleq 1 / (\pnorm+1)^2$. Also define the confidence radius 
\begin{align} \label{eq_conf_radius_LM}
\beta_k \triangleq H \sqrt{d \log(3(1+k H^3 (B+1)^2) / \delta)} + 1.
\end{align}
At the first episode indexed by $k=1$, we initialize for every goal $g \in \cG$ and $h \in [H]$ the following quantities $$\bSigma_{g,1,h}, \mathring{\bSigma}_{g,1,h} \triangleq \lambda\Ib, \quad \bbb_{g,1,h}, \mathring{\bbb}_{g,1,h}  \triangleq \zero, \quad \btheta_{g,1}, \mathring{\btheta}_{g,1,h} \triangleq\zero, \quad \vvalue_{g,1,H+1}(\cdot) \triangleq 0, \quad U_{g,1,H+1}(\cdot) \triangleq 0.$$
We now explain how the various estimates are updated during an episode $k$ with goal state denoted by $g_k$. Over the trajectory of episode $k$, given the current state visited at step $h$ denoted by $s_h^k$, the executed action is denoted by $a_h^k  \triangleq \pi_{g_k,h}^k(s_h^k)$ and the next state is denoted by $s_{h+1}^k$. Then for every goal $g \in \cG$ and for $h = 1, \ldots, H$, we set
\begin{align*}
    \bSigma_{g,k,h+1} &\triangleq \bSigma_{g,k,h} + \bphi_{{\vvalue}_{g,k,h}}(s_h^k,a_h^k)\bphi_{{\vvalue}_{g,k,h}}(s_h^k,a_h^k)^\top, \\
    \bbb_{g,k,h+1} &\triangleq \bbb_{g,k,h} + \bphi_{{\vvalue}_{g,k,h}}(s_h^k,a_h^k) \vvalue_{g,k,h}(s_{h+1}^k), \\
    \mathring{\bSigma}_{g,k,h+1} &\triangleq \mathring{\bSigma}_{g,k,h} + \bphi_{{U}_{g,k,h}}(s_h^k,a_h^k)\bphi_{{U}_{g,k,h}}(s_h^k,a_h^k)^\top, \\
    \mathring{\bbb}_{g,k,h+1} &\triangleq \mathring{\bbb}_{g,k,h} + \bphi_{{U}_{g,k,h}}(s_h^k,a_h^k) U_{g,k,h}(s_{h+1}^k),
\end{align*}
and for every goal $g \in \cG$, we set 
\begin{align}
    \bSigma_{g,k+1,1} &\triangleq \bSigma_{g,k,H+1}, \quad \bbb_{g,k+1,1}\triangleq \bbb_{g,k,H+1}, \quad \btheta_{g,k+1} \triangleq \bSigma^{-1}_{g,k+1,1} \bbb_{g,k+1,1}, \label{eq_closed_form_1} \\
    \mathring{\bSigma}_{g,k+1,1} &\triangleq \mathring{\bSigma}_{g,k,H+1}, \quad \mathring{\bbb}_{g,k+1,1}\triangleq \mathring{\bbb}_{g,k,H+1}, \quad \mathring{\btheta}_{g,k+1} \triangleq \mathring{\bSigma}^{-1}_{g,k+1,1} \mathring{\bbb}_{g,k+1,1}. \label{eq_closed_form_2}
\end{align}
We proceed by recursively defining for every episode $k$, goal  $g \in \cG$ and $h = H, \ldots, 1$,
\begin{align}
    \qvalue_{g,k,h}(\cdot, \cdot)&\triangleq \clip\Big(\mathds{1}[\cdot \neq g] + \big\la \btheta_{g,k}, \bphi_{\vvalue_{g,k,h+1}}(\cdot, \cdot) \big\ra - \beta_k \Big\| \bSigma_{g,k,1}^{-1/2} \bphi_{\vvalue_{g,k,h+1}}(\cdot, \cdot)\Big\|_2, \, 0, \, H\Big), \label{eq_LFA_Q} \\
    \pi_{g,h}^{k}(\cdot) &\triangleq \argmin_{a \in \cA}\qvalue_{g,k,h}(\cdot, a), \label{eq_LFA_pi} \\
    \vvalue_{g,k,h}(\cdot) &\triangleq \min_{a \in \cA}\qvalue_{g,k,h}(\cdot, a), \label{eq_LFA_V} \\
    U_{g,k,h}(\cdot) &\triangleq \clip\Big( 2 \beta_k
    \Big\|\bSigma_{g,k,1}^{-1/2}\bphi_{\vvalue_{g,k,h+1}}(s, \pi_{g,h}^k(s))\Big\|_2
    + \big\la \bphi_{U_{g,k,h+1}}(s, \pi_{g,h}^k(s)), \mathring{\btheta}_{g,k} \big\ra \nonumber \\
    &\qquad \qquad + \beta_k \Big\|\mathring{\bSigma}_{g,k,1}^{-1/2} \bphi_{U_{g,k,h+1}}(s, \pi_{g,h}^k(s))\Big\|_2, \, 0, \, H\Big). \label{eq_LFA_U}
\end{align}

\vspace{0.1in}
\noindent $\triangleright$~\textit{Choice of estimates $\D,\,\E,\,\Q$ of \ALGOLM (line \ref{line_estimates_LFA} of Algorithm~\ref{algo}):}
\begin{align}
    \Q_{k,h}(s,a,g) &\triangleq \qvalue_{g,k,h}(s,a), \label{eq_Qt_LFA} \\
    \D_k(g) &\triangleq \vvalue_{g,k,1}(s_0), \label{eq_Dt_LFA} \\
    \E_k(g) &\triangleq U_{g,k,1}(s_0) 
    + \frac{\red{8}\epsilon}{\red{9}}. \label{eq_Et_LFA}
\end{align}

\subsection{Proof sketch of Theorem \ref{SC_algolm}}

The analysis of \ALGOLM follows the same key steps considered in Section \ref{sect_theory} for the analysis of \ALGO. We now sketch the \ALGOLM equivalent of the various key steps. 

First, note that similar to Lemma \ref{simple_lemma} in the tabular case, for any state-action pair $(s,a) \in \cS \times \cA$, goal state $g \in \cG$ and vector $Y \in \mathbb{R}^S$ such that $Y(g) = 0$, it holds that $[p Y](s,a) = [p_g Y](s,a)$.

We now build the high-probability events. By using the standard self-normalized concentration inequality for vector-valued martingales of \citet[][Theorem 2]{abbasi2011improved}, it holds that with probability at least $1-\delta/3$, for any $k \geq 1$ and $g \in \cG$, $\btheta_{g}^{\star}$ lies in the ellipsoid 
\begin{align*}
    \mathcal{C}_{g,k} \triangleq \left\{ \btheta \in  \mathbb{R}^{d+1} : \Big\|\bSigma_{g,k,1}^{1/2} (\btheta_{g,k} -\btheta )\Big\|_2 \leq \beta_k \right\}.
\end{align*}
The proof of the above statement follows the steps of e.g., \citet[][Lemma A.2]{zhang2021reward}, the only slight difference being that we take an additional union bound over all goals $g \in \cG$, hence the presence of $G$ in the confidence radius \eqref{eq_conf_radius_LM}. Furthermore, following the exact same steps as above and by definition of $\mathring{\bSigma}$ and $\mathring{\btheta}$, it holds that with probability at least $1-\delta/3$, for any $k \geq 1$ and $g \in \cG$, $\btheta_{g}^{\star}$ lies in the ellipsoid 
\begin{align*}
    \mathcal{C}'_{g,k} \triangleq \left\{ \btheta \in  \mathbb{R}^{d+1} : \Big\|\mathring{\bSigma}_{g,k,1}^{1/2} (\mathring{\btheta}_{g,k} -\btheta )\Big\|_2 \leq \beta_k \right\}.
\end{align*}
Here the confidence radius is the same as the one in $\mathcal{C}_{g,k}$ since it is chosen to be proportional to the magnitude of the $U_{g,k,h+1}(\cdot)$ function, which lies in $[0, H]$, as does the value function $V_{g,k,h+1}(\cdot)$. In what follows, we assume that the two high-probability events considered above hold, i.e., that the following event holds (it does so with probability at least $1-2\delta/3$)
\begin{align} \label{eq_lfa_event_hp}
    \left\{ \forall k \geq 1, \forall g \in \cG, ~ \btheta_{g}^{\star} \in  \mathcal{C}_{g,k} \cap \mathcal{C}'_{g,k} \right\}.
\end{align}

\paragraph{$\triangleright$~\textit{Key step \ding{172}: Optimism and gap bounds.}} The optimism property is standard: following e.g., \citet[][Lemma A.1]{zhang2021reward} (see also \citealp[][Lemma C.4]{zhou2021nearly}), it holds that $Q_{g,k,h}(s,a) \leq \ov{Q}_{g,h}^{\star}(s,a)$ and $V_{g,k,h}(s) \leq \ovV_{g,h}^{\star}(s)$ for any $(s, a, g) \in \cS \times \cA \times \cG$, $h \in [H]$, $k \geq 1$. 

We now depart from a usual regret minimization analysis and examine our errors $U$, proving that they upper bound the goal-conditioned gaps, formally defined as 
\begin{align*}
    W_{g,k,h}(s) \triangleq \vvalue_{g,h}^{\pi_g^k}(s) - \vvalue_{g,k,h}(s).
\end{align*}
We now prove by induction that $W_{g,k,h}(s) \leq U_{g,k,h}(s)$. The property holds at $H+1$ since $W_{g,k,h}(s) = 0 = U_{g,k,h}(s)$. Assume that $W_{g,k,h+1}(s) \leq U_{g,k,h+1}(s)$, then we start by noticing, similar to \citet[Equation~C.10]{zhou2021nearly}; \citet[][Lemma A.1]{zhang2021reward}, that  
\begin{align*}
    W_{g,k,h}(s) \leq 2 \beta_k
    \Big\|\bSigma_{g,k,1}^{-1/2}\bphi_{\vvalue_{g,k,h+1}}(s, \pi_{g,h}^k(s))\Big\|_2 + \underbrace{[p\vvalue_{g,h+1}^{\pi^k}](s, \pi_{g,h}^k(s)) - [p\vvalue_{g,k,h+1}](s, \pi_{g,h}^k(s))}_{\triangleq X}.
\end{align*}
We bound $X$ as follows 
\begin{align*}
    X &=  [p W_{g,k,h+1}](s, \pi_{g,h}^k(s)) \\
    &\myineqi   [p U_{g,k,h+1}](s, \pi_{g,h}^k(s)) \\
    &= \big\la \bphi_{U_{g,k,h+1}}(s, \pi_{g,h}^k(s)), \btheta_{g}^{\star} \big\ra \\
    &= \big\la \bphi_{U_{g,k,h+1}}(s, \pi_{g,h}^k(s)), \mathring{\btheta}_{g,k} \big\ra + \big\la \bphi_{U_{g,k,h+1}}(s, \pi_{g,h}^k(s)), \btheta_{g}^{\star} - \mathring{\btheta}_{g,k} \big\ra \\
    &\myineqii \big\la \bphi_{U_{g,k,h+1}}(s, \pi_{g,h}^k(s)), \mathring{\btheta}_{g,k} \big\ra 
    + \Big\|\mathring{\bSigma}_{g,k,1}^{1/2} (\mathring{\btheta}_{g,k} -\btheta_{g}^{\star} )\Big\|_2 \Big\| \mathring{\bSigma}_{g,k,1}^{-1/2} \bphi_{U_{g,k,h+1}}(s, \pi_{g,h}^k(s))\Big\|_2 \\
    &\myineqiii \big\la \bphi_{U_{g,k,h+1}}(s, \pi_{g,h}^k(s)), \mathring{\btheta}_{g,k} \big\ra + \beta_k \Big\|\mathring{\bSigma}_{g,k,1}^{-1/2} \bphi_{U_{g,k,h+1}}(s, \pi_{g,h}^k(s))\Big\|_2,
\end{align*} 
where (i) comes from the induction hypothesis and because $p$ is a monotone operator w.r.t.\,the partial ordering of functions, (ii) is by Cauchy-Schwarz, (iii) holds by event \eqref{eq_lfa_event_hp}. Finally, using that $W_{g,k,h}(s) \in [0, H]$ and by definition of $U_{g,k,h}(s)$, we conclude that $W_{g,k,h}(s) \leq U_{g,k,h}(s)$.

\paragraph{$\triangleright$~\textit{Key step \ding{173}: Bounding the cumulative gap bounds.}} We now bound $\sum_{k=1}^K U_{g_k,k,1}(s_0)$. It holds that
\begin{align*}
    &U_{g,k,h}(s_{k,h}) - U_{g,k,h+1}(s_{k,h+1}) \\
    &\leq 2 \beta_k \min\Big\{ 1,
    \Big\|\bSigma_{g,k,1}^{-1/2}\bphi_{\vvalue_{g,k,h+1}}(s_{k,h}, \pi_{g,h}^k(s_{k,h}))\Big\|_2 \Big\} + \beta_k \min\Big\{ 1,
    \Big\|\mathring{\bSigma}_{g,k,1}^{-1/2}\bphi_{U_{g,k,h+1}}(s_{k,h}, \pi_{g,h}^k(s_{k,h}))\Big\|_2 \Big\} \\ 
    &\qquad + \min\Big\{ \underbrace{\big\la \bphi_{U_{g,k,h+1}}(s_{k,h}, \pi_{g,h}^k(s_{k,h})), \mathring{\btheta}_{g,k} \big\ra - U_{g_k,k,h+1}(s_{k,h+1})}_{\triangleq Y}, H \Big\},
\end{align*}
where 
\begin{align*}
    Y &\leq \big\vert \big\la \bphi_{U_{g,k,h+1}}(s_{k,h}, \pi_{g,h}^k(s_{k,h})), \btheta_{g}^{\star} - \mathring{\btheta}_{g,k} \big\ra \big\vert + \big\la \bphi_{U_{g,k,h+1}}(s_{k,h}, \pi_{g,h}^k(s_{k,h})), \btheta^{\star}_{g} \big\ra - U_{g,k,h+1}(s_{k,h+1}) \\ 
    &\leq \Big\|\mathring{\bSigma}_{g,k,1}^{1/2} (\mathring{\btheta}_{g,k} -\btheta_{g}^{\star} )\Big\|_2 \Big\| \mathring{\bSigma}_{g,k,1}^{-1/2} \bphi_{U_{g,k,h+1}}(s_{k,h}, \pi_{g,h}^k(s_{k,h}))\Big\|_2 + [p_g U_{g,k,h+1}](s_{k,h}, \pi_{g,h}^k(s_{k,h})) - U_{g,k,h+1}(s_{k,h+1}) \\
    &\leq \beta_k \Big\|\mathring{\bSigma}_{g,k,1}^{-1/2} \bphi_{U_{g,k,h+1}}(s_{k,h}, \pi_{g,h}^k(s_{k,h}))\Big\|_2 +  [p_g U_{g,k,h+1}](s_{k,h}, \pi_{g,h}^k(s_{k,h}))  - U_{g,k,h+1}(s_{k,h+1}).
\end{align*}
Therefore we get by telescopic sum 
\begin{align*}
    \sum_{k=1}^K U_{g_k,k,1}(s_0) 
                           &= \sum_{k=1}^K \sum_{h=1}^H \big( U_{g_k,k,h}(s_{k,h}) - U_{g_k,k,h+1}(s_{k,h+1})  \big) \\
                           &\leq 2 \beta_K \underbrace{ \sum_{k=1}^K \sum_{h=1}^H  \min\Big\{ 1,
    \Big\|\bSigma_{g_k,k,1}^{-1/2}\bphi_{\vvalue_{g_k,k,h+1}}(s_{k,h}, a_{k,h})\Big\|_2 \Big\}}_{\triangleq Z_1} \\
    &\quad + 2 \beta_K \underbrace{ \sum_{k=1}^K \sum_{h=1}^H \min\Big\{ 1,
    \Big\|\mathring{\bSigma}_{g_k,k,1}^{-1/2}\bphi_{U_{g_k,k,h+1}}(s_{k,h}, a_{k,h})\Big\|_2 \Big\}}_{\triangleq Z_2} \\
    &\quad + \underbrace{ \sum_{k=1}^K \sum_{h=1}^H [p_{g_k} U_{g_k,k,h+1}](s_{k,h}, a_{k,h})  - U_{g_k,k,h+1}(s_{k,h+1})}_{\triangleq Z_3}.
\end{align*}
We bound $Z_1$ and $Z_2$ using Cauchy-Schwarz and the elliptical potential lemma from linear bandits \citep[][Lemma 11]{abbasi2011improved}, see e.g., \citet[][Proof of Lemma A.3]{zhang2021reward}. This yields
\begin{align*}
    Z_1 &\leq \sqrt{K H} \sqrt{\sum_{k=1}^K \sum_{h=1}^H  \min\Big\{ 1, \Big\|\bSigma_{g_k,k,1}^{-1/2}\bphi_{\vvalue_{g_k,k,h+1}}(s_{k,h}, a_{k,h})\Big\|_2^2   \Big\} } \\
    &\leq \sqrt{2} \sqrt{K H} \sqrt{\sum_{k=1}^K \sum_{h=1}^H  \min\Big\{ 1, \Big\|\bSigma_{g_k,k,h}^{-1/2}\bphi_{\vvalue_{g_k,k,h+1}}(s_{k,h}, a_{k,h})\Big\|_2^2   \Big\} } + 2 H d \log(1 + k H^3 / \lambda) \\
    &\leq \sqrt{2 K H d \log(1 + K H^3 / (d \lambda)} + 2 H d \log(1 + k H^3 / \lambda),
\end{align*}
and likewise for $Z_2$. The term $Z_3$ can be bounded by the Azuma-Hoeffding inequality since its summands form a martingale difference sequence, thus with probability at least $1-\delta/3$, it holds that $Z_3 \leq H \sqrt{2 H K \log(3 / \delta)}$.

Putting everything together and using that $\beta_K = \wt{O}( H \sqrt{d})$, we obtain 
\begin{align*}
     \sum_{k=1}^K U_{g_k,k,1}(s_0) = \wt{O}\left( d H^{3/2} \sqrt{K} + H^2 d^{3/2} \right).
\end{align*}

\paragraph{$\triangleright$~\textit{Key step \ding{174}: Bounding the sample complexity.}} We follow the reasoning given in Section~\ref{sect_theory}. By construction of the stopping rule \eqref{stopping_rule}, the algorithm terminates at an episode $\kappa$ that verifies
\begin{align*}
    \epsilon \cdot (\kappa -1) \leq \sum_{k=1}^{\kappa-1} \E_k(g_k) = \frac{8 \epsilon}{9} \cdot (\kappa -1) + \wt{O}\left( d H^{3/2} \sqrt{\kappa} + H^2 d^{3/2} \right).
\end{align*}
Solving this functional inequality in $\kappa$ yields 
\begin{align*}
    \kappa = \wt{O}\left( \frac{H^3 d^2}{\epsilon^2} + \frac{H^2 d^{3/2}}{\epsilon} \right).
\end{align*}
Using that the sample complexity is bounded by $\kappa (H+1)$ and that $H = \wt{O}(L)$, we conclude the proof.

\paragraph{$\triangleright$~\textit{Key step \ding{175}: Connecting to the original \MGE objective.}} The proof of this step is identical to the one of \ALGO in Appendix~\ref{proof_keystep4}.

\newpage

\begin{figure*}[t!]
\centering
\includegraphics[width=0.3\linewidth,trim
=20 20 20 5,clip]{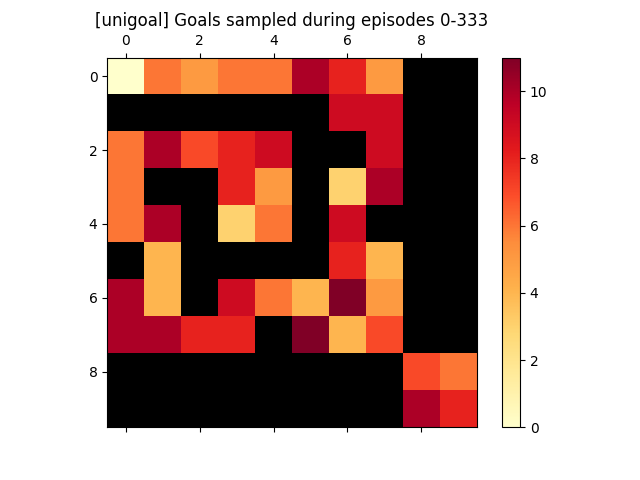} \hspace{-0.15in}
\includegraphics[width=0.3\linewidth,trim =20 20 20 5,clip]{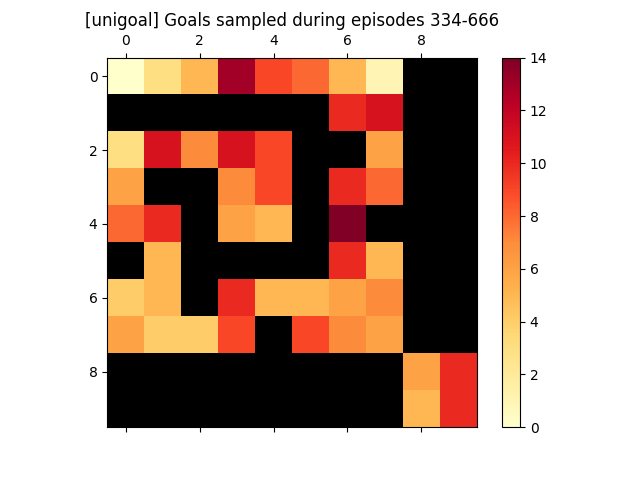} \hspace{-0.15in}
\includegraphics[width=0.3\linewidth,trim =20 20 20 5,clip]{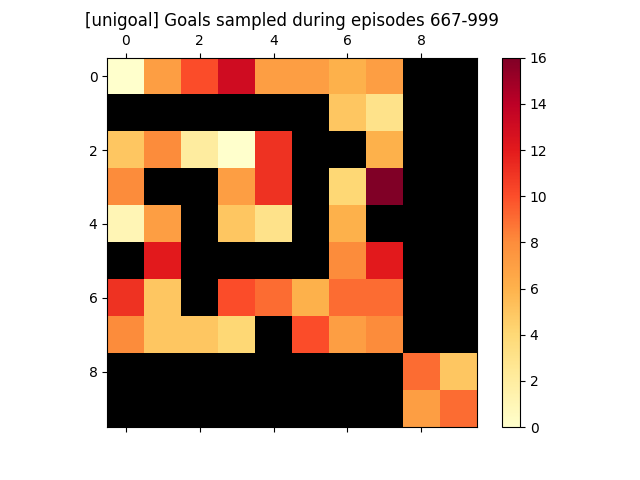} \\
\vspace{0.03in}
\includegraphics[width=0.3\linewidth,trim =20 20 20 5,clip]{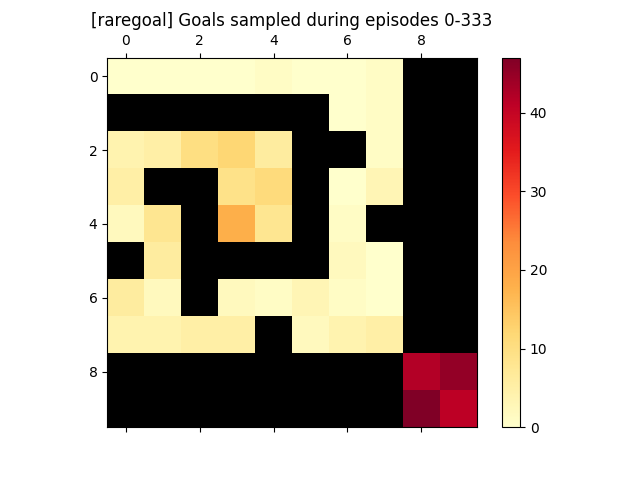} \hspace{-0.15in}
\includegraphics[width=0.3\linewidth,trim =20 20 20 5,clip]{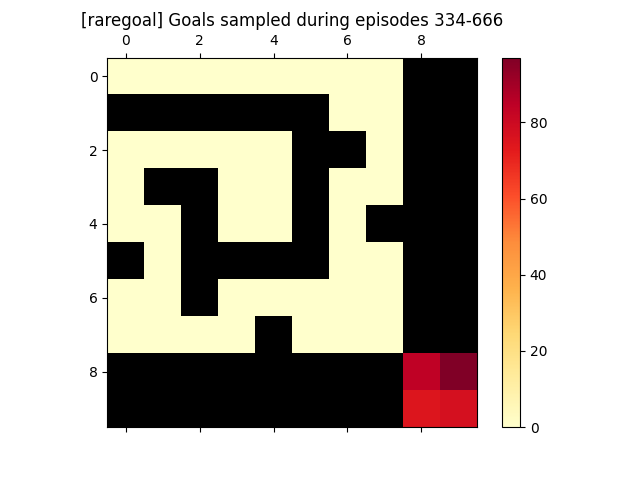} \hspace{-0.15in}
\includegraphics[width=0.3\linewidth,trim =20 20 20 5,clip]{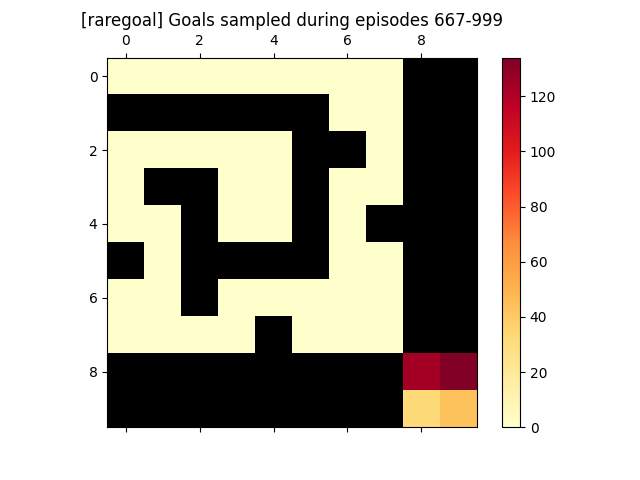} \\
\vspace{0.03in}
\includegraphics[width=0.3\linewidth,trim =20 20 20 5,clip]{fig/adagoal_timestamp0.png} \hspace{-0.15in}
\includegraphics[width=0.3\linewidth,trim =20 20 20 5,clip]{fig/adagoal_timestamp1.png} \hspace{-0.15in}
\includegraphics[width=0.3\linewidth,trim =20 20 20 5,clip]{fig/adagoal_timestamp2.png} \\

\vspace{-0.08in}
\caption{Goal sampling frequency of \UniGoalUCBVIfns \textit{(top row)}, \RareGoalUCBVIfns \textit{(middle row)} and \ALGOfns \textit{(bottom row)} over $1000$ episodes, split over episodes $0-333$ \textit{(left column)}, episodes $334-666$ \textit{(middle column)} and episodes $667-999$ \textit{(right column)}. Episodes are of length $H=50$, the environment is a grid-world with $S=52$ states, starting state $s_0 = (0, 0)$ (i.e., the top left state), $A=5$ actions (the $4$ cardinal actions plus $\areset$). The black walls act as reflectors, i.e., if the action leads against the wall, the agent stays in the current position with probability~$1$. An action fails with probability $p_f = 0.1$, in which case the agent follows (uniformly) one of the other directions. The $4$ states of the bottom right room can only be accessed from $s_0$ by any cardinal action with probability $\eta = 0.001$, thus they are extremely hard to reliably reach as their associated $V^{\star}(s_0 \rightarrow \cdot)$ is very large (scaling with $\eta^{-1}$). We select $L=40$ for \AdaGoalfns, and $\alpha=0.1$ for \RareGoalfns. For the three methods we follow the practice of \citet[][Section 4]{menard2021ucb} and use their proposed simplified form for the exploration bonuses. The experiment is based on the rlberry framework \citep{domingues2021rlberry}.}
\label{fig:gridworld_goals_sampled}
\end{figure*}

\section{ABLATION OF THE GOAL SELECTION SCHEME \& PROOF OF CONCEPT EXPERIMENT} 
\label{app_exp}

In this section, we single out the role of the adaptive goal selection scheme of \AdaGoal, i.e., step \textcircled{i} in Algorithm~\ref{algo}. For simplicity we focus on the tabular case and consider that $\cG = \cS$. Keeping the remainder of the \ALGO algorithm fixed, we compare it to two other ad-hoc goal sampling alternatives:
\begin{itemize}[leftmargin=.2in,topsep=-1.5pt,itemsep=1pt,partopsep=0pt, parsep=0pt]
    \item \UniGoal: the goal state is sampled uniformly in $\cS \setminus \{s_0\}$, i.e., with probability
    \begin{align*}
    p^{\uni}(g) \triangleq \big(S - 1\big)^{-1};
     \end{align*}
    \item \RareGoal: the goal state is sampled proportionally to its rarity, i.e., with probability
    \begin{align*}
        p^{\rare}_{\alpha}(g) \triangleq \frac{(n^k_{\alpha}(g))^{-1}}{\sum_{s \in \cS \setminus \{s_0\}} (n^k_{\alpha}(s))^{-1}},
    \end{align*}
    where $n^k(s) \triangleq \sum_{t=1}^{k H} \mathds{1}[s_t = s]$ denotes the number of times state $s$ was visited in the first $k$ episodes, and $n^k_{\alpha}(s) \triangleq \max\{n^k(s), \alpha\}$ for $\alpha \in (0, 1]$.
\end{itemize}
We can find equivalents of these two goal selection schemes in existing goal-conditioned deep RL methods. The case of a uniform goal sampling distribution prescribed by the environment (i.e., \UniGoal) is the most common, see e.g., \citet{schaul2015universal, andrychowicz2017hindsight}. Meanwhile, the goal sampling scheme of Skew-Fit \citep{pong2020skew}, a recent state-of-the-art algorithm for deep GC-RL, gives higher sampling weight to rarer goal states, where rarity is measured by a learned generative model. In the tabular case, a goal state's rarity can be characterized by the inverse of its visitation count, which corresponds to \RareGoal.

On the one hand, it is straightforward to show that \UniGoal achieves a sample complexity of at most $\wt{O}(L^3 S^2 A \epsilon^{-2})$. Intuitively, it pays for an extra $S$ since it may sample goals that are too easy or too hard, in either case they are not very useful for the agent to improve its learning (and there may be in the worst case $S-2$ of such non-informative goal states). 

On the other hand, we can see that by design \RareGoal relies on the communicating assumption and may require $\text{poly}(S, A, D, \epsilon^{-1})$ samples to learn an $\epsilon$-optimal goal-conditioned policy on $\SL$. Here the dependence on the diameter $D$ is somewhat problematic. Indeed, imagine there exist a set of states $\mathcal{S}_{\text{hard}}$ such that $1 \ll V^{\star}(s_0 \rightarrow s) \leq D$ for $s \in \mathcal{S}_{\text{hard}}$ (i.e., very hard to reach states, e.g., by chance due to environment stochasticity). Then throughout the learning process, \RareGoal will strive to reach the states in $\mathcal{S}_{\text{hard}}$ and select them as goals, which leads to unsuccessful episodes and a possible waste of samples.
Consequently, when goals have varying reachability (e.g., if the environment is highly stochastic), \RareGoal suffers from an issue of \textit{goal prioritizing}, i.e., too-hard-to-reach states are given too much goal sampling importance. 

Finally, we empirically complement our discussion above on the sequence of goals selected by \UniGoal, \RareGoal and \AdaGoal. We design a simple two-room grid-world with a very small probability of reaching the second room, and illustrate in Figure \ref{fig:gridworld_goals_sampled} the goal sampling frequency of \UniGoalUCBVI, \RareGoalUCBVI and \ALGO. We see that over the course of the learning interaction, as opposed to the designs of \RareGoal and \UniGoal, our \AdaGoal strategy is able to successfully discard the states from the bottom right room, which have a negligible probability of being reached. In addition, \AdaGoal is able to target as goals the states in the first room that are furthest away from $s_0$, i.e., those at the center of the ``spiral'', which effectively correspond to the fringe of what the agent can reliably reach. 
\captionsetup[figure]{labelfont=small,font=small}

\section{IMPLEMENTATION DETAILS OF SECTION \ref{section_AdaGoal_deepRL}} \label{app_deepRL_exp}

Here we provide the implementation details of our experiments reported in Section~\ref{section_AdaGoal_deepRL}. Our implementation and hyperparameters of \HER \citep{andrychowicz2017hindsight} are based on the PyTorch open-source codebase of \href{https://github.com/TianhongDai/hindsight-experience-replay}{https://github.com/TianhongDai/hindsight-experience-replay}, which follows the official implementation of \HER. As explained in Section~\ref{section_AdaGoal_deepRL}, we approximate \AdaGoal by computing the disagreement (i.e., standard deviation) of an ensemble of $J$ goal-conditioned Q-functions and selecting a goal proportionally to it among $N$ uniformly sampled goals. Then the policy is conditioned on this goal and executed for an episode of length $H=50$. This recovers the \VDS algorithm of \citet{zhang2020automatic}. We thus follow the implementation details given in the latter paper. In particular, the value ensemble (for goal selection) is treated as a separate module from the policy optimization (for goal-conditioned policy execution). In each training epoch, each Q-function in the ensemble performs Bellman updates with independently sampled mini-batches, and the policy is updated with DDPG \citep{lillicrap2015continuous}. Each Q-function in the ensemble is trained with its target network, with learning rate 1e-3, polyak coefficient 0.95, buffer size 1e6, and batch size 1000. Finally, we set $J=3$ and $N=1000$.

\end{document}